\def\algbackskip{\hskip-\ALG@thistlm}
\newtheorem{theorem}{Theorem}
\newtheorem{lemma}{Lemma}
\newtheorem*{proof}{Proof}[section]
\newtheorem*{remark}{Remark}
\newtheorem{corollary}{Corollary}[theorem]
\newcommand{\blind}{1}
\DeclareMathOperator*{\argmin}{arg\,min}
\begin{document}

\def\spacingset#1{\renewcommand{\baselinestretch}%
{#1}\small\normalsize} \spacingset{1}


\if1\blind
{
  \title{\bf Quantile Off-Policy Evaluation via Deep Conditional Generative Learning}
  \author{
Yang Xu$^{1}$, Chengchun Shi$^{2}$, Shikai Luo$^{3}$, Lan Wang$^4$ and Rui Song $^{1}$\\
\textit{$^{1}$North Carolina State University}\\
\textit{$^{2}$London School of Economics and Political Science}\\
\textit{$^{3}$ByteDance}\\
\textit{$^{4}$University of Miami} 
}
\date{\empty}
  \maketitle
} \fi

\if0\blind
{
  \bigskip
  \bigskip
  \bigskip
  \begin{center}
    {\LARGE\bf Quantile Off-Policy Evaluation via Deep Conditional Generative Learning}
\end{center}
  \medskip
} \fi

\bigskip

\begin{abstract}
Off-Policy evaluation (OPE) is concerned with evaluating a new target policy using offline data generated by a potentially different behavior policy. It is
critical in a number of sequential decision making problems ranging from healthcare to technology industries. 
Most of the work in existing literature is focused on evaluating the mean outcome of a given policy, and ignores the variability of the outcome. However, in a variety of applications, criteria other than the mean may be more sensible. For example, when the reward distribution is skewed and asymmetric, quantile-based metrics are often preferred for their robustness. 
In this paper, we propose a doubly-robust inference procedure for quantile OPE in sequential decision making and study its asymptotic properties. In particular, we propose utilizing state-of-the-art deep conditional generative learning methods to handle parameter-dependent nuisance function estimation. We demonstrate the advantages of this proposed estimator through both simulations and a real-world dataset from a short-video platform. In particular, we find that our proposed estimator outperforms classical OPE estimators for the mean in settings with heavy-tailed reward distributions.  
\end{abstract}

\noindent%
{\it Keywords: quantile treatment effect, off-policy evaluation, sequential decision making, deep generative model, uncertainty quantification.}  
\vfill

\newpage
\spacingset{1.45} 

\section{Introduction}
\label{sec:intro}
Policy evaluation plays an important role in a large variety of real-world applications. In precision medicine \citep[see e.g.,][]{murphy2003optimal,kosorok2019precision,tsiatis2019dynamic}, physicians wish to know the impact of implementing an individualized treatment strategy. In technological companies, decision makers conduct A/B testing to evaluate the performance of a newly developed product \citep[see e.g.,][]{bojinov2019time,shi2022dynamic}. In economics and social science, program evaluation methods are widely applied to access the effects of certain intervention policies \citep[see][for an overview]{abadie2018econometric}. In the aforementioned applications, a new treatment decision rule and/or policy needs to be evaluated offline before online validation. This motivates us to study the off-policy evaluation (OPE) problem that aims to evaluate the mean outcome under a target policy from a historical dataset.  

Offline evaluation of a fixed, data-dependent or optimal policy has been widely studied in statistics \citep{zhang2012robust,chakraborty2013inference,zhang2013robust,matsouaka2014evaluating,luedtke2016statistical,luckett2020estimating,liao2020batch,shi2020statistical,liao2021off} and machine learning \citep{dudik2011doubly, jiang2016doubly, thomas2016data, cai2020deep, kallus2020confounding, kallus2020double,hao2021bootstrapping}.

Nonetheless, all the aforementioned works focused on the mean outcome. In many applications, criteria other than the mean may be more sensible. This includes the case where the outcome distribution is skewed or heavy-tailed. Take our real data application as an example. The data is collected from a world-leading technological company with one of the largest mobile platforms for production, aggregation and distribution of short-form videos. We focus on evaluating the impact of certain advertising policies on user experience. 
The response variable corresponds to the average length of stay (e.g., the average period of time users stay on the platform) and is extremely heavy-tailed. See Figure \ref{fig:real_data_analysis_loglog} in Section \ref{sec:realdata} for details. In other cases, policymakers may care about some distributional effect beyond the mean. In our data application, because policies that raise the lower quartile are more likely to attract potential customers, a policy's effect on the lower quartile of length of stay is of greater interest to policymakers.
Recently, a few methods have been proposed for distributional policy evaluation and/or learning. In particular, \citet{donald2014estimation} and \citet{huang2021off} considered estimating the cumulative distribution function of a target policy's reward. \citet{wang2018quantile} and \citet{qi2019estimating} considered learning an optimal policy to maximize the average lower tail of the potential outcome distribution. \citet{kallus2019localized} developed a localized debiased machine learning method to evaluate the quantile treatment effect. However, all the aforementioned works focused mainly on single-stage decision making.

This paper is concerned with statistical inference of the quantiles of a target policy's return using a pre-collected dataset from multi-stage studies. Efficient off-policy evaluation of the quantile value is known to be difficult \citep{kallus2019localized}. The major challenge lies in that the efficient estimating equation involves parameter-dependent conditional mean functions. Accordingly, efficient estimation of the quantile requires learning the conditional mean function for various values of the target parameter. Hence, directly applying supervised learning algorithms is very computationally intensive.

Our contributions can be summarized as follows: (i) We propose doubly-robust inference procedures for quantile OPE in sequential decision making. In particular, we develop a doubly-robust quantile value estimator and a doubly-robust estimator for its asymptotic variance. To the best of our knowledge, this is the first work that rigorously investigates uncertainty quantification for a target policy's quantile value in multi-stage studies. (ii) We propose utilizing existing state-of-the-art deep conditional generative learning methods to handle parameter-dependent nuisance function estimation. Our proposal is the first to embrace the power of deep conditional generative learning to address a challenging statistical inference problem in OPE. (iii) We further develop a tail-robust doubly-robust estimator for the off-policy mean outcome by aggregating the proposed quantile estimator at multiple quantile values. Our simulation study and real data analysis show that the mean squared error (MSE) of the proposed estimator is around 15\%-50\% less than that of the standard doubly-robust estimator with heavy-tailed responses.

The rest of the paper is organized as follows. In Section \ref{sec:1stage_est}, we introduce the proposed doubly-robust estimation and inference procedure for single-stage decision making. In Section \ref{sec:2stage_est}, we extend our proposal to the multi-stage setting. 
In Section \ref{sec:theory}, we study the asymptotic properties of the proposed procedure. 
Simulation studies and real data analysis are conducted in Sections \ref{sec:numerical} and \ref{sec:realdata}, respectively. Finally, we conclude our paper in Section \ref{sec:conc}. All the proofs are presented in the supplementary material.

\section{Single-Stage Decision Making}\label{sec:1stage_est}
\subsection{Preliminaries}
We first describe the observed data. 
Let $X\in \mathcal{X}$ denote the $d$-dimensional vector that summarizes the baseline information (e.g., age, gender) of a given subject before treatment assignment occurs, $A \in \mathcal{A}=\{1,\cdots,m\}$ denote the treatment assigned to that subject where $m$ denotes the number of treatment options, and $R\in \mathbb{R}$ denote the subject's outcome (the larger the better by convention). 
The observed data can thus be summarized as $N$ i.i.d. covariates-treatment-outcome triplets $\{X_i,A_i,R_i\}_{i=1}^N$ where $N$ denotes the number of subjects. 

We next introduce the potential outcomes. For any $1\le a\le m$, let $R^*(a)$ denote the potential outcome had the subject received the treatment $a$. Let $b: \mathcal{X}\times  \mathcal{A}\mapsto [0,1]$ denote the behavior policy that generates the data, i.e., $b(a|x)=\mathbb{P}(A=a|X=x)$. Likewise, let $\pi:\mathcal{X}\times  \mathcal{A}\mapsto [0,1]$ denote the target policy we wish to evaluate. Notice that $b$ is allowed to be unknown, as in observational studies. 

Finally, we introduce the following standard assumptions: \vspace{-0.5em}
\begin{enumerate}
    \item[(C1)] Consistency: $R=R^*(A)$. 
    \vspace{-1em}
    \item[(C2)] No unmeasured confounders \citep{rosenbaum1983central}: $A\perp (R^*(1),\cdots,R^*(m))|X$. 
    
    \vspace{-1em}
    \item[(C3)] Positivity: There exists some constant $\epsilon>0$, such that $\mathbb{P}(b(A|X)\ge \epsilon)=1$. 
\end{enumerate}
We remark that (C1)-(C3) are commonly imposed in causal inference and individualized treatment regimes literature \citep[see e.g.,][]{zhang2012robust,wang2018quantile,nie2021quasi}. They allow us to infer the potential outcome distribution from the observed data. (C2) and (C3) are automatically satisfied in randomized studies where the behavior policy is usually a strictly positive function independent of $x$. 
\subsection{Quantile Off-Policy Evaluation}
Let $R^*(\pi)$ denote the potential outcome variable following the target policy $\pi$. It corresponds to a mixture of $\{R^*(a): 1\le a\le m\}$ with weights specified by the target policy $\pi$. More specifically, we have
\begin{eqnarray}\label{eqn:quantile}
    \mathbb{P}(R^*(\pi)\le r|X)=\sum_{a=1}^m \pi(a|X) \mathbb{P}(R^*(a)\le r|X).
\end{eqnarray}
For a given quantile level $0<\tau<1$, our objective lies in inferring the $\tau$th quantile of $R^*(\pi)$, denoted by $\eta_{\tau}$. 

Under (C1)-(C3), the conditional potential outcome distribution $\mathbb{P}_{R^*(a)|X}$ is equal to the conditional distribution of observed outcome $\mathbb{P}_{R|X,A}$. This together with \eqref{eqn:quantile} allows us to identify the quantile value from the observed data. In the following, we first introduce the direct method and the inverse probability weighting method, and then combine the two for efficient and robust quantile OPE. 

\textbf{1. Direct method (DM).} Let  $\rho_{\tau}(u)=u\cdot\left(\tau-\mathbb{I}\{u<0\}\right)$ denote the quantile loss function. Notice that $\eta_{\tau}=\argmin_{\eta} \mathbb{E} [\rho_{\tau}(R^*(\pi)-\eta)]=\argmin_{\eta} \mathbb{E}[\mathbb{E} \{\rho_{\tau}(R^*(\pi)-\eta)|X\}]$. Under (C1)-(C3), it follows from \eqref{eqn:quantile} and iterative expectation formula that
\begin{eqnarray}\label{eqn:quantile2}
    \widehat{\eta}_{\text{DM}}=\argmin_{\eta} \sum_{a=1}^m \widehat{\mathbb{E}}[\pi(a|X)\cdot \widehat{\mathbb{E}}\{ \rho_{\tau}(R-\eta)|A=a,X\}],
\end{eqnarray}
where $\widehat{\mathbb{E}}$ denotes the empirical expectation calculated from the observed data. DM first estimates the conditional expectation within the square brackets for each $\eta$ and then plugs in this estimator in the above optimization problem to learn $\eta_{\tau}$. 
As discussed in the introduction, it remains challenging to compute the conditional mean estimator for each $\eta$. We will later introduce our proposal to address this issue. 

\textbf{2. Inverse probability weighting (IPW) method.} By the change of measure theorem, it follows from \eqref{eqn:quantile2} that
\begin{eqnarray*}
    \widehat{\eta}_{\text{IPW}}=\argmin_{\eta} \widehat{\mathbb{E}} \left[\frac{\pi(A|X)}{\widehat{b}(A|X)} \rho_{\tau}(R-\eta)\right],
\end{eqnarray*}
An inverse probability weighting estimator, akin to the off-policy estimator proposed by \cite{wang2018quantile}, can be developed from this optimization problem to derive an estimating equation for $\eta_{\tau}$.  When the behavior policy $b$ is unknown, we can apply existing state-of-the-art supervised learning methods to estimate it.

\textbf{3. Doubly robust (DR) estimator.} The doubly-robust method combines the DM and IPW procedures and eases model misspecification concerns by only requiring one of the nuisance functions to be correctly specified. The new DR estimator is given by
\begin{align}\label{eq:4}
\begin{split}
    \widehat{\eta}_{\textrm{DR}}&=\argmin_{\eta} \frac{1}{N}\sum_{i=1}^N \Psi(\eta;X_i,A_i,R_i,\widehat{b},\widehat{\mathbb{E}})\\
    &\equiv\argmin_{\eta} \frac{1}{N}\sum_{i=1}^N\left[\frac{\pi(A_i|X_i)}{\widehat{b}(A_i|X_i)}\rho_{\tau}(R_i-\eta)+\left(1-\frac{\pi(A_i|X_i)}{\widehat{b}(A_i|X_i)}\right) \widehat{\mathbb{E}}\{\rho_{\tau}(R_i-\eta)|A_i\sim \pi, X_i\}\right],
\end{split}
\end{align}
for some estimated behavior policy $\widehat{b}$ and conditional mean functional $\widehat{\mathbb{E}}\{\cdot\}$. The conditional expectation is estimated given $(A_i\sim \pi,X_i)$, which denotes that $A_i$ is randomly sampled from the target policy $\pi$ with baseline information $X_i$.
 
The first term of the second line in \eqref{eq:4} is identical to the IPW estimator. The second part is an augmentation term offering additional information when the behavior policy is not aligned with the target, which improves estimation efficiency. When the conditional mean function is correctly specified, the expectation of our DR estimator is equivalent to that of the DM estimator. When the behavior policy is correctly specified, the estimator is unbiased to the IPW estimator and thus is also consistent. This verifies that $\widehat{\eta}_{\text{DR}}$, as a combination of DM and IPW estimators, indeed satisfies the doubly-robustness property. 

Similar to DM, DR requires estimating the conditional mean function for each $\eta$. We propose employing a deep conditional generative learning method to compute $\widehat{\mathbb{E}}\{\cdot\}$. The main idea is utilizing deep neural networks to learn a conditional sampler that takes the covariate-treatment pair and some random noises as input and outputs pseudo outcomes whose conditional distribution is similar to that of $\mathbb{P}_{R|A,X}$. By using the Monte Carlo method, these pseudo outcomes can be further used to approximate the conditional mean functions for any $\eta$. This addresses the challenge of parameter-dependent nuisance function estimation. As a powerful approach for generating random samples from complex and high-dimensional conditional distributions, deep conditional generative learning methods have been successfully implemented in a large variety of applications including computer vision, imaging processing, epidemiology simulation and artificial intelligence  \citep{yan2016attribute2image, shu2017bottleneck, wang2018cvpr, davis2020use, jo2021srflow}. We detail our procedures in the next section.

\subsection{Estimation and Inference Procedures}\label{sec:1stage_est_details}

Our proposal consists of four steps: 1) sample splitting, 2) nuisance functions estimation, 3) doubly-robust quantile value and variance estimation, 4) tail-robust mean value estimation. Pseudocode summarizing the proposed algorithm is given in Algorithm \ref{alg:pseudocode}.
  
\begin{algorithm}[tbh]
    \caption{Pseudo Code of quantile off policy evaluation in single stage settings}\label{euclid}
    \label{alg:pseudocode}
    \vspace{0.08in}
    \textbf{Input:} $\{X_i,A_i,R_i\}_{i=1}^N$ triplets; quantile levels $\tau$; number of folds $S$, the maximum \# iterations $\max_{iter}$. \\
    \textbf{Output:} doubly robust quantile estimator $\widehat{\eta}_{\tau}^{\text{DR}}$; tail-robust DR mean estimator $\hat\mu_{\text{DR}}$.
\begin{algorithmic}[1]
\Procedure{A. Doubly Robust Quantile Estimator}{}
\State Randomly split the data into $S$ folds. For any $s\in S$, denotes $\mathcal{I}_s$ as the $s$th subgroup.
    \ForAll{$s\in S$}
        \State Estimate $\widehat{b}$ by GBDT with the data in $\mathcal{I}_s^c$;
        \State Estimate $\widehat{R}|(X,A)$ by MDN with the data in $\mathcal{I}_s^c$;
        \State Generate $\{\widehat{R}_{x_i,a}^j|(X=x_i,A=a)\}_{j=1}^M$ for any $a\in \mathcal{A}$ and $i\in \mathcal{I}_s$;
        \EndFor
        \While{$iter<\max_{iter}$ and $|(\eta_{iter+1}-\eta_{iter})/\eta_{iter}|>\epsilon$}
            \State $iter \leftarrow iter+1$
            \State Update $\eta_{iter}$ by solving formula (\ref{eq:4}) by gradient descent;
        \EndWhile
    \State $\widehat{\eta}_{\text{DR}}\leftarrow \eta_{iter}$
    \EndProcedure
    \Procedure{B. Tail-Robust DR Mean Estimator}{}
    \State For all of the quantile levels of our interest, calculate $\widehat{\eta}_{\tau}$ by procedure A. Averaging among all quantiles yields the tail-robust DR mean Estimator $\widehat{\mu}_{\text{DR}}$.
    \EndProcedure
\end{algorithmic}
\end{algorithm}

\subsubsection{Step 1: Sample Splitting}

Step 1 of the proposed algorithm is to randomly divide samples $\{1,\dots,N\}$ into $S$ disjoint subgroups of equal size. We define $n = \frac{N}{S}\in \mathbb{Z}$ . 
This step couples \eqref{eq:4} with cross-fitting in Step 3 to reduce the bias of the nuisance function estimators obtained by the aforementioned supervised and generative learning methods. For any $1\le s\le S$, let $\mathcal{I}_s$ denote the $s$th subgroup and $\mathcal{I}_s^c$ denote its complement. Sample splitting allows us to use the data in $\mathcal{I}_s^c$ to compute $\widehat{b}$ and $\widehat{\mathbb{E}}\{\cdot\}$, and those in $\mathcal{I}_s$ to construct the estimating equation. In addition, we can get full efficiency by aggregating the estimating equations over different $s$. Such a technique has been implemented for policy evaluation in recent literature \citep{chernozhukov2018double,shi2021deeply,kallus2022efficiently}.

\subsubsection{Step 2: Nuisance Functions Estimation}

Step 2 of the proposed algorithm is to use each data subset in $\mathcal{I}_s^c$ to estimate the behavior policy $b$ and the conditional distribution of $R$ given $A$ and $X$. 

Firstly, notice that estimating the behavior policy is essentially a regression problem. 
As such, we can apply any supervised learning 
algorithm designed for categorical response variables 
to compute $\widehat{b}.$ 
 In our implementation, we employ a Gradient Boosting Decision Tree (GBDT) algorithm and find that it works well in our numerical examples.

Secondly, we need to estimate the conditional expectation ${\mathbb{E}}[\rho_{\tau}({R}-\eta)|X=x,A=a]$. As discussed earlier, one approach is to apply supervised learning to approximate the conditional expectation for any $\eta$. However, such a method is computationally intensive when the search space of $\eta$ is large. To facilitate the computation, we leverage a state-of-the-art deep conditional generative learning method, the mixture density network (MDN), to learn the conditional reward distribution. MDN integrates the Gaussian mixture model with deep neural networks. It assumes 
that the conditional density of $R$ given $X$ and $A$ is a mixture of $J$ conditional Gaussian distributions, i.e.,
\[
f(r|x,a)=\sum_{j=1}^J \alpha_j(x,a)\frac{1}{\sqrt{2\pi}\sigma_j(x,a)}\exp\Big\{-\frac{(r-\mu_j(x,a))^2}{2\sigma_j^2(x,a)}\Big\},
\]
with weights $\{\alpha_j(x,a)\}_{j=1}^J\in[0,1]$, conditional means $\{\mu_j(x,a)\}_{j=1}^J$, and variances $\{\sigma_j(x,a)\}_{j=1}^J$ parameterized via deep neural networks. The universal approximation property of deep neural nets allows to approximate any smooth or non-smooth weight, conditional mean and variance functions \citep{imaizumi2019deep}. This together with the universal approximation property of Gaussian mixture models allows $f$ to approximate any smooth conditional density functions \citep[see e.g.,][]{zhou2022}. 

Finally, we discuss how to compute $\widehat{\mathbb{E}}[\rho_{\tau}(R-\eta)|X=x,A=a]$. The idea is to sample i.i.d. pseudo outcomes $\{\widehat{R}^{(j)}_{x,a}:1\le j\le M\}$ based on the estimated MDN and approximate the expectation via Monte Carlo, i.e., 
\begin{eqnarray*}
\widehat{\mathbb{E}}[\rho_{\tau}(R-\eta)|X=x,A=a]\approx \frac{1}{M}\sum_{j=1}^M \rho_{\tau}(\widehat{R}_{x,a}^{(j)}-\eta).
\end{eqnarray*}
The number of Monte Carlo samples $M$ represents a trade-off. Notice that the bias of the estimator is the same for any $M$. A large $M$ improves the accuracy of the estimator at the cost of increasing the computation time. In theory, as shown in Section \ref{sec:theory}, we require $M\to\infty$ to ensure the resulting value estimator achieves the minimum asymptotic variance. In our numerical implementation, we set $M$ to $50$ to achieve a good balance between estimation accuracy and computation cost. 

To sample $\widehat{R}^{(j)}_{x,a}$, we generate standard Gaussian noise $Z$ and categorical variable $U$ from a multinomial distribution with support $\{1,\cdots,J\}$ such that for any $j$, $\mathbb{P}(U=j)=\widehat{\alpha}_j(x,a)$, the estimated weight function. We next set $\widehat{R}_{x,a}^{(j)}=\widehat{\mu}_U(x,a)+\widehat{\sigma}_{U}(x,a) Z$ where  $\{\widehat{\mu}_j\}_{j=1}^J$ and $\{\widehat{\sigma}_{j}\}_{j=1}^J$ correspond to the estimated conditional mean and variance functions. As such, $\widehat{R}^{(j)}_{x,a}$ follows the learned conditional Gaussian mixture model. 

\subsubsection{Step 3. Doubly-Robust Quantile Value and Variance Estimation}\label{sec:single-J0}
Step 3 of the proposed algorithm is to employ cross-fitting to estimate the quantile value and construct the associated confidence interval. In particular, we develop two doubly-robust estimators, one for the quantile value itself and another for the asymptotic variance of the quantile estimator. 

Firstly, let $\widehat{b}_s$ and $\widehat{\mathbb{E}}_s$ denote the estimated behavior policy and conditional mean function, respectively. We use cross-fitting to construct the following estimating equation to compute the doubly-robust estimator,
\begin{eqnarray*}
    \widehat{\eta}_{\tau}^{\textrm{DR}}=\argmin_{\eta}\sum_{s=1}^S\sum_{i\in \mathcal{I}_s} \Psi(\eta;W_i,\widehat{b}_s,\widehat{\mathbb{E}}_s),
\end{eqnarray*}
where $\Psi$ is defined in \eqref{eq:4} and $W_i$ is a shorthand for the triplet $(X_i,A_i,R_i)$. 

Secondly, as shown in Theorem \ref{thm:1} (see Section \ref{sec:theory}), the asymptotic variance of $\widehat{\eta}_{\tau}^{\textrm{DR}}$ is given by the sandwich formula $ J_0^{-2} \textrm{Var}(\partial_{\eta}\Psi(\eta_{\tau}; W_i, b,\mathbb{E}))$ where $J_0:=f_{R^*(\pi)}(\eta_{\tau})$ equals the probability density function of the potential outcome $R^*(\pi)$ evaluated at $\eta_{\tau}$. While the variance term $\textrm{Var}(\partial_{\eta}\Psi(\eta_{\tau}; W_i, b,\mathbb{E}))$ can be consistently estimated via the sampling variance estimator, the pdf is more difficult to deal with. 

Notice that $f_{R^*(\pi)}(\eta_{\tau})$ can be regarded as either the marginal pdf of the reward following $\pi$ and baseline distribution $\mathbb{G}$, or the expectation of a conditional pdf for ${f}_{{R}^{*}({\pi})|X}(\eta_\tau|X)$. When estimating $\widehat{\eta}_\tau^{\text{DR}}$, we have already posited a model for $R|(X,A)$ by MDN. Therefore, when the actions of subjects don't follow policy $\pi$, one would expect a better performance by combining ${f}_{{R}^{*}({\pi})|X}(\eta_\tau|X)$ with Kernel Density Estimation (KDE) to construct a doubly robust estimator for $f_{R^*(\pi)}(\eta_{\tau})$.

Denote $\widehat{f}_s$ as the conditional probability density function obtained by MDN from the data in $\mathcal{I}_s^c$. The direct estimator for $f_{R^*(\pi)}(\eta_{\tau})$ is given by 
\[
\widehat{J}_0^{\text{DM}}=\widehat{f}_{R^*(\pi)}(\eta_\tau)=\frac{1}{N}\sum_{s=1}^S\sum_{i\in \mathcal{I}_s} \sum_a \pi(a|X_i)\widehat{f}_s(\widehat{\eta}_{\textrm{DR}}|a, X_i):=\frac{1}{N}\sum_{s=1}^S\sum_{i\in \mathcal{I}_s} \widehat{f}_{R^*(\pi),s}(\widehat{\eta}_{\textrm{DR}}| X_i).
\]
Next, suppose all the observed rewards follow the target policy $\pi$. We can apply kernel density estimation to the rewards to estimate $J_0$. This yields the IPW estimator
\[
\widehat{J}_0^{\text{IPW}}=\frac{1}{N} \sum_{s=1}^S \sum_{i\in \mathcal{I}_s} \frac{\pi(A_i|X_i)}{\widehat{b}_s(A_i|X_i)}\frac{1}{h} K\left(\frac{R_i-\widehat{\eta}_{\textrm{DR}}}{h}\right)
\]
for some kernel function $K$ and a bandwidth $h>0$. Combining both yields the  doubly-robust estimator, given by
\begin{eqnarray}\label{eq:J0_1stage}
    \widehat{J}_0^{\text{DR}}=\frac{1}{N}\sum_{s=1}^S\sum_{i\in \mathcal{I}_s}\left[\frac{\pi(A_{i}|X_{i})}{\widehat{b}_s(A_{i}|X_{i})}\frac{1}{h}K\left(\frac{\widehat{\eta}_{\text{DR}}-R_{i}}{h}\right)+\left(1-\frac{\pi(A_{i}|X_{i})}{\widehat{b}_s(A_{i}|X_{i})}\right)\widehat{f}_{{R}^{*}({\pi})}(\widehat{\eta}_{\text{DR}}|X_{i})\right].
\end{eqnarray}

A Wald-type confidence interval can thus be derived for $\eta_{\tau}$. 

\subsubsection{Step 4. Tail-Robust Mean Value Estimation}
Step 4 of the proposed algorithm is to develop a mean value estimator based on $\widehat{\eta}_{\tau}^{\textrm{DR}}$. A key observation is that, the target policy's mean value can be represented as an average of the quantile values, i.e., $\int_{0}^1 \eta_{\tau} d\tau$. This motivates us to consider the following plug-in estimator $\int_{0}^1 \widehat{\eta}_{\tau}^{\textrm{DR}} d\tau$ for the mean value. The integral can be approximated numerically based on midpoint rule, trapezoidal rule or Simpson's rule. 

Compared to the standard doubly-robust estimator, the proposed estimator is tail robust in that its consistency requires very mild moment conditions. Specifically, it only requires a well-defined mean value, $\mathbb{E} |R|<\infty$. On the contrary, the standard doubly-robust estimator requires the reward to have at least a finite second-order moment. In our numerical studies, we find that the proposed tail-robust estimator achieves a much smaller MSE when reward is heavy-tailed. 

\section{Sequential Decision Making}\label{sec:2stage_est}
\subsection{Preliminaries}
In this section, we extend our proposal to more general sequential decision making problems with $K$ stages.
The data trajectory for a single subject is given by 
\begin{eqnarray}\label{eqn:datatra}
   (X_1,A_1,R_1,X_2,A_2,R_2,\dots,X_K,A_K,R_K),
\end{eqnarray}
where $(X_k,A_k,R_k)$ denotes the covariates-treatment-outcome triplet observed at the $k$th decision stage and $K$ denotes the horizon (e.g., number of decision stages). 

A (history-dependent) policy $\pi=(\pi_1,\cdots,\pi_K)$ maps the observed data history to a probability distribution over the action space at each time. Specifically, let 
$H_k$ 
denote the historical data collected prior to the assignment of $A_k$, i.e., $H_k=\{X_1,A_1,R_1,\cdots,X_k\}$.
Following $\pi$, the agent will set $A_k$ according to the probability mass function $\pi_k(\bullet|H_k)$ at each time $k$. In addition, we use $b=(b_1,\cdots,b_K)$ to denote the behavior policy that generates the data, which consists of $N$ i.i.d. realizations of \eqref{eqn:datatra}, denoted by
\begin{eqnarray}\label{eqn:data}
   \left\{(X_{1,i},A_{1,i},R_{1,i},X_{2,i},A_{2,i},R_{2,i},\dots,X_{K,i},A_{K,i},R_{K,i}):1\le i\le N\right\}.
\end{eqnarray}
Usually, the final outcome can be defined as a function of rewards at all stages, i.e. $f(R_1,\dots,R_K)$. However, in most cases when we periodically evaluate a specific kind of reward of interest, it is natural to infer the cumulative reward $\sum_{k=1}^K R_k$ following target policy $\pi$ from \eqref{eqn:data}. Note that in the following sections, although we focus on the estimation and inference of $\sum_{k=1}^K R_k$ at the $\tau$th quantile, the whole framework can be easily extended to any reward function $f(R_1,\dots,R_K)$. 

Similar to Section \ref{sec:1stage_est}, we assume versions of (C1$'$) consistency, (C2$'$) no unmeasured confounders, which is better known as the sequential randomization assumption, and (C3$'$) positive assumptions hold in multiple stage settings. An explicit version of these assumptions are provided in Section \ref{appendix:Assumption} of the supplementary material.\citep[see e.g.,][for the detailed definitions of these assumptions]{zhang2013robust}.

\subsection{Quantile Off-Policy Evaluation}

In $K$-stage settings, our objective is to infer the $\tau$th quantile of the sum of potential outcomes at all stages under fixed target policy sequence ${\pi}$. Define $R^*_k(\pi)$ as the potential outcome one would observe at stage $k$ by executing policy $\pi$, and define $R^*(\pi)=\sum_{k}R^*_k(\pi)$ as the cumulative reward of interest. Under the potential outcome framework, 
\begin{eqnarray}
   \label{eq:multi_indentify}
    \mathbb{P}\bigg(\sum_{k=1}^K R_k^*(\pi)\le r\Big|X_1\bigg)=\sum_{\{a_k\}_{k=1}^K}\Bigg\{\prod_{k=1}^K\pi_k(a_k|H_k) \cdot\mathbb{P}\bigg(\sum_{k=1}^K R_k^*(a_k)\le r\Big|X_1\bigg)\Bigg\},
\end{eqnarray}
where the summation over $\{a_k\}_{k=1}^K$ denotes the traverse of all actions such that $\{a_1,\dots,a_K\}\in |\mathcal{A}|^K$.
By assuming (C1$'$)-(C3$'$), formula (\ref{eq:multi_indentify}) can be successfully estimated from the observed data. For any given quantile $\tau$, we solve the optimization problem
\begin{equation} \label{eq:6}
    \eta_\tau=\arg\min_{\eta}\mathbb{E}_{X_1\sim \mathbb{G}}\left[\rho_{\tau}\bigg(\sum_{k=1}^KR_k-\eta\bigg)\Big| {A}\sim {\pi}\right],
\end{equation}
where $\mathbb{E}_{X_1\sim \mathbb{G}}[\bullet| {A}\sim {\pi}]$ denotes that the history data $(H_K,A_K)$, starting from the baseline information $X_1\sim\mathbb{G}$, follows treatment sequence ${\pi}=\{\pi_1,\dots,\pi_K\}$ at all stages up to $K$. In the following sections, we will first introduce the direct method and inverse probability weighting estimator, and then combine the above two to derive the doubly robust estimator.

\textbf{1. DM estimator.} Under (C1$'$)-(C3$'$), it follows from  (\ref{eq:multi_indentify}) that 
\begin{eqnarray}\label{eqn:quantile3}
    \widehat{\eta}_{\text{DM}}=\argmin_{\eta} \sum_{\{a\}_{k=1}^K}\Bigg\{ \widehat{\mathbb{E}}\bigg[\prod_{k=1}^K\pi_k(a_k|H_k)\cdot  \widehat{\mathbb{E}}\Big\{ \rho_{\tau}\Big(\sum_{k=1}^K R_k-\eta\Big)\Big|\{A_k\}_{k=1}^m=\{a_k\}_{k=1}^K,X_1\Big\}\bigg]\Bigg\}.
\end{eqnarray}
Similar to the single-stage setting, we first estimate the conditional probability function for $\sum_{k=1}^K R_k$ given $(H_K,A_K)$ and then leverage the MC method to calculate the conditional expectation in (\ref{eqn:quantile3}). Details will be provided in Section \ref{sec:2stage_est_details}.

\textbf{2. IPW estimator.} According to the the change of measure theorem,
\begin{eqnarray*}\label{eqn:quantile5}
    \widehat{\eta}_{\text{IPW}}=\argmin_{\eta} \widehat{\mathbb{E}} \left[\prod_{k=1}^K\frac{\pi(A_k|H_k)}{\widehat{b}_k(A_k|H_k)}\cdot \rho_{\tau}\Big(\sum_{k=1}^K R_k-\eta\Big)\right].
\end{eqnarray*}
Notice that when the number of stages increases, the proportion of data we observe that aligns with target policy will decrease drastically. This may lead to a poor IPW estimator. Akin to the single-stage setting, one can implement any supervised learning algorithm to learn $\widehat{b}_k$ that satisfies some mild convergence rate assumptions specified in Section \ref{sec:theory}. 

\textbf{3. DR estimator.} By incorporating the DM and IPW estimators, one can derive a doubly robust estimator for $K$-stage quantile off-policy evaluation.
\begin{eqnarray}\label{eq:8}
    \begin{aligned}
        \widehat{\eta}_{\text{DR}}=\argmin_{\eta} \frac{1}{N}\sum_{i=1}^N& \Psi(\eta;W_i,\widehat{b},\widehat{\mathbb{E}})\\
        \equiv \arg\min_{\eta} \frac{1}{N}\sum_{s=1}^S&\sum_{i\in \mathcal{I}_s}  \left[ \frac{\prod_{k'=1}^K \pi_{k'}(A_{k',i}|H_{k',i})}{\prod_{k'=1}^K \widehat{b}_{k'}(A_{k',i}|H_{k',i})}\rho_{\tau}\Big(\sum_{k=1}^KR_{i,k}-\eta\Big)\right.\\
         &\left. +\sum_{k=1}^{K}\left(\prod_{k'=1}^{k-1}\frac{\pi_{k'}(A_{k',i}|H_{k',i})}{\widehat{b}_{k'}(A_{k',i}|H_{k',i})}\right)\left(1-\frac{\pi_{k}(A_{k,i}|H_{k,i})}{\widehat{b}_{k}(A_{k,i}|H_{k,i})}\right)\widehat{L}_k(H_{k,i})\right],
    \end{aligned}
\end{eqnarray}
where $W_i$ is a shorthand for the data trajectory 
\begin{eqnarray}
   \left\{(X_{1,i},A_{1,i},R_{1,i},X_{2,i},A_{2,i},R_{2,i},\dots,X_{K,i},A_{K,i},R_{K,i}):1\le i\le N\right\},
\end{eqnarray}
$\widehat{L}_k(H_{k})=\widehat{\mathbb{E}}\left[\rho_{\tau}(R_{1}+\dots+{R}_{k-1}+\widehat{R}_{k}+\dots+\widehat{R}_{K}-\eta)|\{A_{k'}\}_{k'=1}^{k-1},\{A_{k'}\}_{k'=k}^K\sim{\pi},X_1\right]$,
in which $\widehat{R}_{k},\dots, \widehat{R}_{K}$ are the estimated rewards obtained by the data generating models following treatment $(\pi_k,\dots,\pi_K)$ starting from stage $k$. 

Formula (\ref{eq:8}) is the average of the sum of $K+1$ terms. The first term serves as an IPW estimator, where the coefficient $\frac{\prod_{k'=1}^K \pi_{k'}}{\prod_{k'=1}^K \hat{b}_{k'}}$ is the probability of a subject's actions at all $K$ stages being consistent with policy $\pi$. The last $K$ terms, shown in the second line, denote the augmentation terms estimating the quantile loss originating from the $k$th level of missing data. The $k$th augmentation term is the multiplication of two main components: the product of the two policy ratios in parenthesis, and a direct estimator $\widehat{L}_k(H_{k})$.  It aims to leverage subjects whose observed actions are consistent with $\pi$ during the first $k-1$ stages, but inconsistent starting from the $k$th stage. The product of the two policy ratios in this case roughly describes the probability of this inconsistency occurring. $\widehat{L}_k(H_{k})$ is the expected quantile loss estimated by combining the observed reward before stage $k$ with the estimated rewards from stage $k$ onwards. Using this DR estimator, one can maximize the utilization of the observed information relating to policy $\pi$. The estimation details are described in Section \ref{sec:2stage_est_details}.

The proposed estimator $\widehat{\eta}_{\text{DR}}$ is doubly robust in the sense that if the propensity score functions $\{\widehat{b}_k(A_{k}|H_k)\}_{k=1}^K$ or the outcome regression models $\{\widehat{L}_k(H_k)\}_{k=1}^K$ are consistently estimated at each stage, then $\widehat{\eta}_{\text{DR}}$ is a consistent estimator of $\eta_\tau$. Compared with the DM and IPW estimators, the DR estimator $\widehat{\eta}_{\text{DR}}$ tends to be more robust w.r.t. model misspecification. This also enables us to implement more flexible machine learning methods and leverage their advantages in estimating propensity score and outcome models. Details are proved in Section \ref{sec:theory}.

\subsection{Estimation details}\label{sec:2stage_est_details}
In this subsection, we focus on the estimation procedure under general $K$-stage settings. Similar to single stage, our approach contains four steps in sequential decision making process: sample splitting, nuisance functions estimation, doubly-robust quantile value and variance estimation, tail-robust mean value estimation. The pseudo code for multi-stage quantile estimation is summarized in Algorithm \ref{alg:pseudocode_multistage}.
\begin{algorithm}[tbh]
    \caption{Pseudo code of quantile off policy evaluation in multiple stage settings}
    \label{alg:pseudocode_multistage}
    \vspace{0.08in}
    \textbf{Input:} Data trajectories $   \left\{(X_{1,i},A_{1,i},R_{1,i},X_{2,i},A_{2,i},R_{2,i},\dots,X_{K,i},A_{K,i},R_{K,i}):1\le i\le N\right\}$; quantile levels $\tau$; number of folds $S$; a large integer $M$, the maximum \# iterations $\max_{iter}$. \\
    \textbf{Output:} doubly robust quantile estimator $\widehat{\eta}_{\tau}^{\text{DR}}$; tail-robust DR mean estimator $\hat\mu_{\text{DR}}$.
\begin{algorithmic}[1]
\Procedure{A. Doubly Robust Quantile Estimator}{}
\State Randomly split the data into $S$ folds. For any $s\in S$, denotes $\mathcal{I}_s$ as the $s$th subgroup.
    \ForAll{$s\in S$}
        \State Estimate $\{\widehat{b}_k\}_{k=1}^K$ by GBDT with the data in $\mathcal{I}_s^c$;
        \State Estimate $\{\widehat{R}_k|(H_k,A_k)\}_{k=1}^K$ by MDN with the data in $\mathcal{I}_s^c$;
        \State Generate $\{\widehat{R}_{h,a}^j|(H_k=H_{k,i},A_k=a)\}_{j=1}^M$ for any $a\in \mathcal{A}$ and $i\in \mathcal{I}_s$;
        \EndFor
        \While{$iter<\max_{iter}$ and $|(\eta_{iter+1}-\eta_{iter})/\eta_{iter}|>\epsilon$}
            \State $iter \leftarrow iter+1$
            \State Update $\eta_{iter}$ by solving formula (\ref{eq:8}) with gradient descent;
        \EndWhile
    \State $\widehat{\eta}_{\text{DR}}\leftarrow \eta_{iter}$
    \EndProcedure
    \Procedure{B. Tail-Robust DR Mean Estimator}{}
    \State For all of the quantile levels of our interest, calculate $\widehat{\eta}_{\tau}$ by procedure A. Averaging among all quantiles yields the tail-robust DR mean Estimator $\widehat{\mu}_{\text{DR}}$.
    \EndProcedure
\end{algorithmic}
\end{algorithm}
\subsubsection{Step 1: Sample Splitting}
Just as we do in the single stage setting, we denote $S$ as the number of splits we used for cross-fitting. In step 1, we randomly split the data into $S$ subgroups with equal sample size. For any $s\in \{1,\dots,S\}$,  denote $\mathcal{I}_s$ as the $s$th fold of the data, and $\mathcal{I}_s^c$ as its complement. Later on, we'll introduce how to leverage the data in $\mathcal{I}_s^c$ to conduct nuisance functions estimation, and use the data in $\{\mathcal{I}_s\}_{s=1}^S$ to obtain the quantile estimator $\widehat{\eta}_{\tau}^{\text{DR}}$.

\subsubsection{Step 2: Nuisance Functions Estimation}
Step 2 of this approach aims to estimate the nuisance functions involved in formula (\ref{eq:8}). Unlike the single-stage setting, the estimation involves two different model functions: propensity score functions $\{b_{k}(H_k)\}_{k=1}^K$, and the conditional expectation functions $\{\widehat{L}_k(H_k)\}_{k=1}^K$.

The estimation of $\{b_{k}(H_k)\}_{k=1}^K$ is essentially the same as that in single stage. As it is a supervised learning problem, one can use any state-of-the-art method that fits the best for the categorical action space. For illustration purpose, we use GBDT in simulations and real data studies.

The estimation of $\{\widehat{L}_k(H_k)\}_{k=1}^K$ needs more attention. Firstly, the conditional distribution functions for $\{\widehat{R}_k|(H_k,A_k)\}_{k=1}^K$ can be obtained by MDN. After getting these reward-generators at each stage, $\widehat{L}_k(H_k)$ can be estimated through a Monte Carlo approximation. As long as the number of replicates is large enough, $\widehat{L}_k(H_k)$ will converge to the true value if the model is correctly specified. Specifically,
\begin{equation}
\begin{aligned}
    \widehat{L}_k(H_{k})&=\widehat{\mathbb{E}}\left[\rho_{\tau}(R_1+\dots+{R}_{k-1}+\widehat{R}_{k}+\dots+\widehat{R}_K-\eta_{\tau}^{\pi})|\{A_{k'}\}_{k'=1}^{k-1},\{A_{k'}\}_{k'=k}^K\sim{\pi},X_1\right]\\
    &=\frac{1}{{M}}\sum_{j=1}^{M} \rho_{\tau}(R_1+\dots+{R}_{k-1}+\widehat{R}_{k}^{(j)}+\dots+\widehat{R}_K^{(j)}-\eta_{\tau}^{\pi}),
\end{aligned}
\end{equation}
where $\{\widehat{R}_{k}^{(j)},\dots,\widehat{R}_{K}^{(j)}\}_{j=1}^M$ are obtained from the reward generator for $\{\widehat{R}_{k'}|(H_{k'},A_{k'})\}_{{k'}=k}^K$ estimated by MDN, and $\{A_{k'}\}_{{k'}=k}$ in this case follows target policy $\pi$. Note that the whole procedure for estimating $\widehat{L}_k(H_{k})$ only involves the information in the observed $H_k$. The level of information we utilize in estimating $R^*(\pi)$ depends on how well the observed actions are in line with the target policy, which is the core idea of double robustness.

\subsubsection{Step 3: Doubly-Robust Quantile Value and Variance Estimation}\label{sec:multi-J0}
Step 3 consists of two parts: utilizing cross-fitting to estimate the DR quantile estimator in multiple stages, and constructing a doubly robust variance estimator for the estimated quantile to obtain a stable confidence interval.

First, define $\widehat{b}_s$ and $\widehat{\mathbb{E}}_s$ as the estimated behavior policy and conditional mean estimator obtained from the data in $\mathcal{I}_s^c$. The final doubly robust quantile estimator can be obtained by summing the objective functions we obtain under the $s$th fold, i.e.
\begin{eqnarray*}
    \widehat{\eta}_{\tau}^{\textrm{DR}}=\argmin_{\eta}\sum_{s=1}^S\sum_{i\in \mathcal{I}_s} \Psi(\eta;W_i,\widehat{b}_s,\widehat{\mathbb{E}}_s).
\end{eqnarray*}

Second, we illustrate a doubly robust estimation procedure for the variance of $\widehat{\eta}_\tau^{\text{DR}}$. Similar to the single-stage setting, the variance of $\widehat{\eta}_\tau^{\text{DR}}$ is obtained by the sandwich formula, i.e. $\sigma^2=J_0^{-2}\textrm{Var}(\partial_\eta\Psi(\eta_{\tau}; W_i, b,\mathbb{E}))$, where $J_0={f}_{{R}^{*}({\pi})}(\eta_\tau)$ and ${R}^{*}({\pi})=\sum_{k=1}^K{R}_k^{*}({\pi})$ denotes the cumulative true reward by executing policy $\pi$ at each stage. As shown earlier in single stage settings, since $\textrm{Var}(\partial_\eta\Psi(\eta_{\tau}; W_i, b,\mathbb{E}))$ can be estimated by the sampling variance, we now need to estimate $J_0={f}_{{R}^{*}({\pi})}(\eta_\tau)$ in a doubly robust way. 

Define ${f}_{R^*(\pi),s}$ as the conditional pdf of $R^*(\pi)|H_{1}$ obtained by MDN from the data in $\mathcal{I}_s^c$. The direct estimator for $J_0$ is
\[
\widehat{J}_0^{\text{DM}}=\widehat{f}_{R^*(\pi)}(\eta)=\frac{1}{N}\sum_{s=1}^S\sum_{i\in \mathcal{I}_s} \widehat{f}_{R^*(\pi),s}(\widehat{\eta}_{\textrm{DR}}| H_{1,i}).
\]
The IPW estimator of $J_0$, by applying the change of measure theorem, is given by
\[
\begin{aligned}
\widehat{J}_0^{\text{IPW}}=\frac{1}{N} \sum_{s=1}^S \sum_{i\in \mathcal{I}_s} \prod_{k=1}^K\frac{\pi_k(A_{k,i}|H_{k,i})}{\widehat{b}_k(A_{k,i}|H_{k,i})}\frac{1}{h} K\left(\frac{\sum_{k=1}^KR_{i,k}-\widehat{\eta}_{\textrm{DR}}}{h}\right)
\end{aligned}
\]

Thus, the doubly robust estimator for ${J}_0$ is similarly derived as
\begin{eqnarray}\label{eq:J0_2stages}
    \begin{aligned}
    \widehat{J}_0^{\text{DR}}=\arg\min_{\eta} \frac{1}{N}&\sum_{s=1}^S\sum_{i\in \mathcal{I}_s}  \left[ \frac{\prod_{k'=1}^K \pi_{k'}(A_{k',i}|H_{k',i})}{\prod_{k'=1}^K \widehat{b}_{k'}(A_{k',i}|H_{k',i})}\frac{1}{h} K\left(\frac{\sum_{k=1}^KR_{i,k}-\widehat{\eta}_{\text{DR}}}{h}\right)\right.\\
     &\left. +\sum_{k=1}^{K}\left(\prod_{k'=1}^{k-1}\frac{\pi_{k'}(A_{k',i}|H_{k',i})}{\widehat{b}_{k'}(A_{k',i}|H_{k',i})}\right)\left(1-\frac{\pi_{k}(A_{k,i}|H_{k,i})}{\widehat{b}_{k}(A_{k,i}|H_{k,i})}\right)\widehat{f}_{\bar{R}_k^{*}({\pi})}(\widehat{\eta}_\text{DR}|H_{k,i})\right]
\end{aligned}
\end{eqnarray}
where $\widehat{f}_{\bar{R}_k^{*}({\pi})}(\bullet|H_{k,i})$ is the estimated conditional pdf of $\bar{R}_k^{*}({\pi}):=\sum_{k'=k}^K R_{k'}^*(\pi)$ given $H_k$ and $A_{k}\sim\pi$ obtained by MDN.

One simple way to estimate the conditional pdf for $\bar{R}_k^{*}({\pi})|(H_k,A_k)$ is to use the data that aligns with $\{\pi_{k'}\}_{k'=k}^K$. In this case, an MDN can be fitted by regarding $\bar{R}_k^{*}({\pi})$ as a function of $(H_k,A_k)$. After getting the estimated pdf for $\bar{R}_k^{*}({\pi})|(H_k,A_k)$, we can let $A_k=\pi_k(H_k)$ to obtain $\widehat{f}_{\bar{R}_k^{*}({\pi})}(\bullet|H_{k,i})$. 

A Wald-type confidence interval can thus be constructed based on our DR quantile estimator $ \widehat{\eta}_{\tau}^{\textrm{DR}}$ and its DR variance estimator obtained from formula (\ref{eq:J0_2stages}).

\subsubsection{Step 4: Tail-Robust Mean Value Estimation}
Step 4 focuses on estimating the mean reward from quantile estimators $\widehat{\eta}_\tau^{\text{DR}}$ evaluated at multiple levels of $\tau$. An easy way of doing this is by choosing equally spaced quantile levels on $[0,1]$, and averaging the quantile DR estimators at each level to obtain $\widehat{\mu}_{\text{DR}}$. Simulation results will illustrate the effectiveness of our method in decreasing the MSE of the mean reward in heavy-tailed distributions.

\section{Theory}
\label{sec:theory}
In this section, we will provide asymptotic guarantees for our doubly robust quantile estimator $\widehat{\eta}_\tau^{\text{DR}}$.

Define $\psi(W_i;\eta,\hat{\alpha})\equiv\partial_{\eta}\Psi(W_i;\eta,\widehat{b},\widehat{\mathbb{E}})$, which is the derivative of the objective function we are trying to optimize. Similarly, we define $\psi^*(W_i;\eta)=\partial_{\eta}\Psi(W_i;\eta,{b},{\mathbb{E}})$ where the nuisance functions for $\widehat{b}$ and $\widehat{\mathbb{E}}$ are replaced by their population functions\footnote{The explicit expressions for $\psi(W_i;\eta,\hat{\alpha})$ and $\psi^*(W_i;\eta)$ are given in Section \ref{sup:1} of the supplementary material.}. 

Solving the optimization problem in  (\ref{eq:8}) is thus equivalent to obtaining the solution $\widehat{\eta}_\tau^{\text{DR}}$ to the following estimating equation:
\begin{equation}
    \frac{1}{N}\sum_{s=1}^S\sum_{i\in \mathcal{I}_s}\psi(W_i;\eta,\hat{\alpha}_s)=0,
\end{equation}
where $\hat{\alpha}_s$ is the collection of nuisance parameters in estimating $\widehat{b}$ and $\widehat{\mathbb{E}}$ using data in $\mathcal{I}_s^c$.

\subsection{The asymptotic properties of $\widehat{\eta}_{\tau}^{\text{DR}}$}
\begin{lemma}\label{lemma:2}
$\widehat{\eta}_\tau^{\text{DR}}$ is a consistent estimator of $\eta_{\tau}$, as long as one of the following two parts of models is consistently estimated: 
\begin{enumerate}
    \item[(1)] The propensity score functions at each stage, i.e. $\{\widehat{b}_k(H_{k})\}_{k=1}^K$.
    \item[(2)] The conditional expectation functions at each stage, i.e. $\{\widehat{L}_k(H_{k})\}_{k=1}^K$,\\
    or equivalently the data generating models for $\{\widehat{R}_k|(H_k,A_k)\}_{k=1}^K$.
\end{enumerate}
\end{lemma}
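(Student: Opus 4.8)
The plan is to treat $\widehat\eta_\tau^{\text{DR}}$ as a Z-estimator and combine two ingredients: a population-level ``mixed bias'' identity showing that the expected estimating function is unbiased for $\eta_\tau$ whenever either nuisance block is correctly specified, and a standard uniform-convergence argument for Z-estimators whose estimating function is monotone in the parameter. Throughout I would work fold by fold and exploit cross-fitting, so that within fold $s$ the nuisance estimate $\widehat\alpha_s$ is measurable with respect to $\mathcal{I}_s^c$ and hence independent of $W_i$ for $i\in\mathcal{I}_s$.

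First I would write $\psi(W_i;\eta,\widehat\alpha)=\partial_\eta\Psi(W_i;\eta,\widehat b,\widehat{\mathbb{E}})$ explicitly. Since $\partial_\eta\rho_\tau(r-\eta)=\mathbb{I}\{r<\eta\}-\tau$ for almost every $\eta$, $\psi$ is, up to the weights, a weighted sum of shifted indicators; because the IPW weight $\prod_{k'}\pi_{k'}/\widehat b_{k'}$ is nonnegative and each augmentation $\widehat L_k(H_k)$ is a conditional expectation of such indicators, $\eta\mapsto\psi(W_i;\eta,\alpha)$ is nondecreasing for every fixed trajectory and every fixed value of the nuisances. Next I would establish the key algebraic identity: for any admissible nuisance configuration $\alpha^\circ=(b^\circ,L^\circ)$, meaning that either $b^\circ_k=b_k$ for all $k$ or $L^\circ$ (equivalently the conditional reward laws $R_k\mid(H_k,A_k)$) is correct for all $k$,
\[
\mathbb{E}\big[\psi(W;\eta,\alpha^\circ)\big]=\mathbb{P}\big(R^*(\pi)<\eta\big)-\tau=:g(\eta),
\]
regardless of the other, possibly misspecified, block. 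When $b^\circ=b$, this follows from the change-of-measure theorem together with a backward telescoping of the $K$ augmentation terms in \eqref{eq:8}: under (C2$'$)--(C3$'$) the weights $\prod_{k'<k}\pi_{k'}/b_{k'}$ have conditional expectation one given $H_k$, so each augmentation term has mean zero and the IPW term alone recovers $g(\eta)$ through \eqref{eq:multi_indentify}. When instead $L^\circ$ is correct, I would group the IPW term with the stage-$K$ augmentation term and take iterated expectations from stage $K$ inward, showing by induction on the stage index that after integrating out stages $k,\dots,K$ the running sum $R_1+\cdots+R_{k-1}+\widehat R_k+\cdots+\widehat R_K$ collapses to $L^\circ_k(H_k)$ evaluated under the true conditional laws; the factors carrying the misspecified $\pi_k/\widehat b_k$ then cancel, and the tower property plus \eqref{eq:multi_indentify} again leaves $g(\eta)$.

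With the identity in hand, the rest is routine. By hypothesis one nuisance block is consistently estimated, so $\widehat\alpha_s\to_p\alpha^\circ$ with $\alpha^\circ$ admissible. A uniform law of large numbers for the monotone and (by (C3$'$)) uniformly bounded family $\{\psi(\cdot;\eta,\alpha):\eta\in\Theta,\ \alpha\ \text{near}\ \alpha^\circ\}$, together with continuity of $\alpha\mapsto\mathbb{E}[\psi(W;\eta,\alpha)]$, yields
\[
\sup_{\eta\in\Theta}\Big|\frac1N\sum_{s=1}^S\sum_{i\in\mathcal{I}_s}\psi(W_i;\eta,\widehat\alpha_s)-g(\eta)\Big|\xrightarrow{p}0.
\]
Since $J_0=f_{R^*(\pi)}(\eta_\tau)>0$, $g$ is continuous and strictly increasing near its unique zero $\eta_\tau$; the standard consistency theorem for Z-estimators with a well-separated root (in the monotone-criterion version, which does not even require continuity of the sample objective) then gives $\widehat\eta_\tau^{\text{DR}}\xrightarrow{p}\eta_\tau$. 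The single-stage statement is the special case $K=1$.

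I expect the main obstacle to be the telescoping identity in the ``$L^\circ$ correct'' case: one must track carefully, via (C2$'$) and repeated conditioning, how the partial sums inside the successive $\widehat L_k$ interact across stages so that every factor involving the misspecified behavior estimate cancels and only the correctly specified conditional-reward laws survive. The non-smoothness of $\rho_\tau$ is a minor issue, entirely absorbed by the monotonicity of $\psi$; it would matter only for the asymptotic-normality statement of Theorem~\ref{thm:1}, not for consistency.
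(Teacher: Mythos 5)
Your overall architecture --- a population unbiasedness identity for the estimating function under either correctly specified nuisance block, followed by uniform convergence and a well-separated-root argument --- is essentially the paper's proof. The telescoping you describe for the ``$L^\circ$ correct'' case is exactly the add-and-subtract decomposition the paper carries out in formulas (\ref{eq:lemma2.1})--(\ref{eq:lemma2.5}), which expresses $\mathbb{E}[\psi(W;\eta,\hat{\alpha})]-\mathbb{E}[\psi^*(W;\eta)]$ as a sum over stages of products of a propensity error and a total-variation error, so that consistency of either block annihilates every term. Two steps of your write-up do not hold as stated, however. First, $\eta\mapsto\psi(W_i;\eta,\alpha)$ is \emph{not} nondecreasing for a fixed trajectory: the $k$th augmentation term carries the weight $\bigl(\prod_{k'<k}\pi_{k'}/\widehat{b}_{k'}\bigr)\bigl(1-\pi_k/\widehat{b}_k\bigr)$, whose second factor is negative whenever $\pi_k(A_k|H_k)>\widehat{b}_k(A_k|H_k)$ --- the typical situation in which the augmentation does any work --- so the shifted indicators enter with both signs. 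This matters because you lean on monotonicity both to obtain the uniform law of large numbers and to invoke the monotone-criterion consistency theorem. The ULLN still holds, but it must be justified differently; the paper does it by conditioning on the training fold (so that $\hat{\alpha}_s$ is nonrandom) and applying a maximal inequality for VC-type classes, the indicators $\mathbb{I}\{\sum_k R_k<\eta\}$ indexed by $\eta$ forming a VC class and the weights being fixed and bounded by positivity.

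Second, your route requires the \emph{misspecified} nuisance block to converge to some limit $\alpha^\circ$ and continuity of $\alpha\mapsto\mathbb{E}[\psi(W;\eta,\alpha)]$; the lemma's hypothesis only asserts that one block is consistent and says nothing about the other. The paper's product-of-errors bound sidesteps this: the factor corresponding to the misspecified block only needs to be bounded, which follows from Assumption (A5) for the propensity ratios and from total variation being at most one for the outcome model, not convergent. If you replace the monotonicity-based ULLN with the VC-class argument and replace the ``limit $\alpha^\circ$'' step with the direct product bound on the conditional bias, your proof coincides with the paper's.
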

The proof of Lemma \ref{lemma:2} along with other auxiliary theoretical results are provided in Section \ref{sup:theory} of the supplementary material. Next, we will derive the asymptotic properties of our DR quantile estimator $\widehat{\eta}_\tau^{\text{DR}}$.

Define $\left\|\cdot\right\|_{P,q}$ as the $L^q(P)$ norm which satisfies $\left\|f(W)\right\|_{P,q}=\left(\int |f(\omega)|^q dP(\omega)\right)^{1/q}$. Let $\delta(F_1,F_2)$ denote the total variation distance between two probability measures, where $F_1$ and $F_2$ are the CDF of the two distributions. Define ${f}_{\bar{R}^{*}_k({\pi})|H_k}(r|H_k)$ and ${F}_{\bar{R}^{*}_k({\pi})|H_k}(r|H_k)$ as the true conditional pdf and cdf of $\bar{R}^{*}_k({\pi})=\sum_{k'=k}^K R_{k'}^*(\pi)$ given the historical data $H_k$, respectively. Similarly, define $\widehat{f}_{{\bar{R}^{*}_k}({\pi})|H_k}(r|H_k)$ and $\widehat{F}_{{\bar{R}}^{*}_k({\pi})|H_k}(r|H_k)$ as the conditional pdf and cdf of $\bar{R}^{*}_k({\pi})$ estimated by MDN, respectively.

Now let's first introduce the assumptions needed in our main theorem.
\vspace{0.1in}\\
\textbf{Assumptions}:
\begin{description}
\item[(A1)] $\left\|\widehat{b}_k(A_k|H_k)-{b}_k(A_k|H_k)\right\|_{P,2}=o(n^{-1/4})$, $\forall$ $k\in\{1,\dots,K\}$.
\item[(A2)] $\left\|\delta\left(\widehat{F}_{\bar{R}_k^{*}({{\pi}})|H_1},F_{\bar{R}_k^{*}({{\pi}})|H_1}\right)\right\|_{P,2}=o(n^{-1/4})$, $\forall$ $k\in\{1,\dots,K\}$. 
\item[(A3)] $f_{\bar{R}_k^{*}({\pi})|H_k}(r|H_k)$ is uniformly bounded in $r$ and $H_k$, $\forall$ $k\in\{1,\dots,K\}$.
\item[(A4)] $\exists$ a constant $C_1>0$, such that ${f}_{{R}^{*}({\pi})}(\eta)\geq C_1$ holds for all $\eta$ in a neighbor of $\eta_\tau$.
\item[(A5)] $\exists$ $\epsilon>0$, s.t. $\mathbb{P}(\epsilon\leq \widehat{b}_k(A_k|H_k)\leq 1-\epsilon)=1$, $\forall$ $H_k$ and $k\in\{1,\dots,K\}$.
\item[(A6)] $\partial_{r}{f}_{R^{*}({\pi})}(r)\big|_{r=\eta_\tau}$ is bounded.
\end{description}

\begin{remark}
Assumptions (A1) and (A2) measure the accuracy in estimating propensity score and outcome at each stage. These are mild conditions that can be achieved by a lot of machine-learning-based methods. In Assumption (A3), we require the conditional pdfs to be uniformly bounded, which aims to guarantee the continuity of the cdf of the corresponding reward functions. Assumption (A4) ensures the identifiability of the quantile $\eta_\tau$ of our interest, and Assumption (A5) is an adjunctive condition on the estimated propensity score, which is commonly assumed in related literature \citep{chernozhukov2018double}. Assumption (A6) requires the derivative of $f_{R^*(\pi)}(r)$ to be bounded only at the true value $\eta_\tau$. Since the true value is unknown to us, we may need a stronger condition that requires uniform boundedness on the support of $\eta$.
\end{remark}

\begin{theorem} \label{thm:1}
    Suppose that Assumptions (A1)-(A6) are satisfied. When $M\rightarrow \infty$, $\widehat{\eta}^{\text{DR}}_\tau$ is asymptotically normal. Specifically,
    \begin{equation}
        \sigma^{-1}\sqrt{N}(\widehat{\eta}^{\text{DR}}_\tau-\eta_\tau)=-\sigma^{-1}J_0^{-1}\left(\frac{1}{\sqrt{N}}\sum_{i=1}^N\psi^*(W_i;\eta_\tau)\right)+o_p(1)\stackrel{\mathcal{D}}{\longrightarrow} \mathcal{N}(0,1),
    \end{equation}
    where 
    \begin{equation}\label{eq:28.5}
        \sigma^2=J_0^{-1}\mathbb{E}[\psi^{*2}(W;\eta_{\tau})](J_0^{-1})',\quad \text{and}\quad J_0=\partial_{\eta}\{\mathbb{E}_W[\psi^*(W;\eta)]\}|_{\eta=\eta_\tau}.
    \end{equation}
Furthermore,
\begin{equation}\label{eq:38}
    J_0=\partial_{\eta}\{\mathbb{E}_W[\psi^*(W;\eta)]\}|_{\eta=\eta_\tau}=\mathbb{E}_{H_1\sim \mathbb{G}}\left[{f}_{{R}^{*}({\pi})|H_1}(\eta_\tau|H_1)\right]={f}_{{R}^{*}({\pi})}(\eta_\tau).
\end{equation}
    The results still hold when $\sigma^2$ is replaced by its doubly robust estimator $\widehat{\sigma}^2_{\text{DR}}$, given by
    \begin{equation}\label{eq:29}
        \widehat{\sigma}^2_{\text{DR}}=\left(\widehat{J}_0^{\text{DR}}\right)^{-1} \frac{1}{S}\sum_{s=1}^S \widehat{\mathbb{E}}_{n,s}[\psi^2(W;\widehat{\eta}_s,\hat{\alpha}_{s})]\left(\widehat{J}_0^{\text{DR}}\right)^{-1},
    \end{equation}
    where $\mathbb{E}_{n,s}$ denotes the empirical expectation calculated with the data in fold $s$, and $\widehat{J}_0^{\text{DR}}$ is estimated under the doubly robust framework, as we've discussed in formula (\ref{eq:J0_1stage}) and (\ref{eq:J0_2stages}).
\end{theorem}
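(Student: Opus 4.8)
The plan is to view $\widehat\eta_\tau^{\text{DR}}$ as a Z-estimator solving the estimating equation $\mathbb P_N\psi(W;\eta,\hat\alpha)=0$, where $\mathbb P_N=\frac1N\sum_{s=1}^S\sum_{i\in\mathcal I_s}$ and, inside the average, $\psi(W_i;\eta,\hat\alpha)$ abbreviates $\psi(W_i;\eta,\hat\alpha_s)$ for $i\in\mathcal I_s$, and then to linearize it. Two features make this nonstandard: $\Psi$ (hence $\psi$) is non-differentiable in $\eta$ --- $\psi$ contains the indicator $\mathbb I\{\sum_k R_k\le\eta\}$, exactly as in quantile regression --- and the nuisances are machine-learning-estimated and, through the conditional-distribution functionals $\widehat F_{\bar R_k^*(\pi)|H_k}$, parameter-dependent. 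First I would record consistency, $\widehat\eta_\tau^{\text{DR}}\stackrel{p}{\longrightarrow}\eta_\tau$, which follows from Lemma~\ref{lemma:2} together with (A4), the latter making $\eta_\tau$ a well-separated zero of the monotone population map $\eta\mapsto\mathbb E[\psi^*(W;\eta)]=F_{R^*(\pi)}(\eta)-\tau$. I would then write the decomposition
\[
0=\underbrace{\mathbb P_N\psi^*(W;\eta_\tau)}_{(\mathrm I)}+\underbrace{\mathbb P_N\{\psi^*(W;\widehat\eta_\tau^{\text{DR}})-\psi^*(W;\eta_\tau)\}}_{(\mathrm{II})}+\underbrace{\mathbb P_N\{\psi(W;\widehat\eta_\tau^{\text{DR}},\hat\alpha)-\psi^*(W;\widehat\eta_\tau^{\text{DR}})\}}_{(\mathrm{III})}
\]
and treat the three terms in turn.

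Term $(\mathrm I)$: since $\mathbb E[\psi^*(W;\eta_\tau)]=F_{R^*(\pi)}(\eta_\tau)-\tau=0$ and, by (A3) and (A5), $\psi^*(W;\eta_\tau)$ is bounded and square-integrable, the Lindeberg--L\'evy CLT gives $\sqrt N\,(\mathrm I)\stackrel{d}{\longrightarrow}\mathcal N(0,\mathbb E[\psi^{*2}(W;\eta_\tau)])$. Term $(\mathrm{II})$: the family $\{\psi^*(\cdot;\eta)\}$ is a bounded, essentially monotone (hence VC/Donsker) class, so $\sqrt N(\mathbb P_N-P)\psi^*(\cdot;\eta)$ is stochastically equicontinuous at $\eta_\tau$; with consistency this replaces $(\mathrm{II})$ by $P\psi^*(W;\widehat\eta_\tau^{\text{DR}})-P\psi^*(W;\eta_\tau)+o_p(N^{-1/2})$, and a first-order expansion of the smooth map $\eta\mapsto F_{R^*(\pi)}(\eta)$ (using (A6)) turns this into $J_0(\widehat\eta_\tau^{\text{DR}}-\eta_\tau)+o_p(|\widehat\eta_\tau^{\text{DR}}-\eta_\tau|)$; since $F_{R^*(\pi)}(\eta)=\mathbb E_{H_1\sim\mathbb G}[F_{R^*(\pi)|H_1}(\eta|H_1)]$, differentiation yields exactly \eqref{eq:38}. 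Term $(\mathrm{III})$ is where cross-fitting is essential: conditioning on $\mathcal I_s^c$ makes $\hat\alpha_s$ deterministic, so fold $s$'s contribution splits into (a) a conditionally mean-zero piece whose conditional variance is $O(\|\psi(\cdot;\cdot,\hat\alpha_s)-\psi^*(\cdot;\cdot)\|_{P,2}^2)=o_p(1)$ by (A1)--(A2), hence $o_p(N^{-1/2})$, and (b) a bias piece $P[\psi(W;\eta,\hat\alpha_s)-\psi^*(W;\eta)\mid\mathcal I_s^c]$; here the Neyman-orthogonal (mixed-bias) structure of the DR score --- at stage $k$ the leading bias is the product of the propensity error and the conditional-distribution error --- bounds (b), uniformly for $\eta$ in a shrinking neighbourhood of $\eta_\tau$, by $\sum_{k=1}^K\|\widehat b_{k,s}-b_k\|_{P,2}\cdot\|\delta(\widehat F_{\bar R_k^*(\pi)|H_1},F_{\bar R_k^*(\pi)|H_1})\|_{P,2}=o(n^{-1/4})\cdot o(n^{-1/4})=o(n^{-1/2})$ by (A1)--(A3). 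I would absorb the Monte-Carlo layer in $(\mathrm{III})$ as well: because the $M$ pseudo-outcomes are redrawn independently across subjects, the difference between the MC-approximated score and the exact (given the MDN) score is conditionally mean-zero with variance $O(1/(NM))$, so $\sqrt N$ times it is $O_p(M^{-1/2})=o_p(1)$ precisely when $M\to\infty$ --- this is where that hypothesis is used.

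Combining, and invoking $J_0\ge C_1>0$ from (A4) (a preliminary $\widehat\eta_\tau^{\text{DR}}-\eta_\tau=O_p(N^{-1/2})$ rate, needed to remove the $o_p(|\widehat\eta_\tau^{\text{DR}}-\eta_\tau|)$ remainder, comes from the same identity by the usual rate-bootstrapping argument), I obtain $\sqrt N(\widehat\eta_\tau^{\text{DR}}-\eta_\tau)=-J_0^{-1}\sqrt N\,\mathbb P_N\psi^*(W;\eta_\tau)+o_p(1)\stackrel{d}{\longrightarrow}\mathcal N(0,\sigma^2)$ with $\sigma^2=J_0^{-2}\mathbb E[\psi^{*2}(W;\eta_\tau)]$ as in \eqref{eq:28.5}. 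For the studentized statement I would show $\widehat\sigma_{\text{DR}}^2\stackrel{p}{\longrightarrow}\sigma^2$ and apply Slutsky: $S^{-1}\sum_s\widehat{\mathbb E}_{n,s}[\psi^2(W;\widehat\eta_s,\hat\alpha_s)]\to\mathbb E[\psi^{*2}(W;\eta_\tau)]$ by a uniform law of large numbers together with the $L^2$-consistency $\psi(\cdot;\cdot,\hat\alpha_s)\to\psi^*(\cdot;\cdot)$ and $\widehat\eta_s\to\eta_\tau$, while $\widehat J_0^{\text{DR}}\stackrel{p}{\longrightarrow}J_0$ follows from the structure of \eqref{eq:J0_1stage}/\eqref{eq:J0_2stages} --- its DM part is the MDN plug-in density (consistent by (A2)), its IPW part is a standard kernel density estimator with $h\to0$, $Nh\to\infty$ under the usual kernel/bandwidth conditions, its cross term is again a product of nuisance errors, and the continuity supplied by (A6) handles substituting $\widehat\eta_\tau^{\text{DR}}$ for $\eta_\tau$. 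The multi-stage case \eqref{eq:8} needs no new idea: only the bias piece of $(\mathrm{III})$ changes, telescoping over the $K$ augmentation terms with a stage-$k$ product bound as above. I expect the main obstacle to be exactly this uniform control of $(\mathrm{III})$: because the nuisances are estimated and $\eta$-dependent, one must bound a supremum over $\eta$ near $\eta_\tau$ of an empirical process indexed by a data-dependent, a priori non-Donsker function class; cross-fitting reduces this to a conditional maximal inequality, but one still needs the total-variation condition (A2) to turn distribution-estimation error into uniform-in-$\eta$ control of the non-smooth score, and (A3)/(A5) to keep the relevant envelopes integrable.
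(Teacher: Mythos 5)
Your proposal is correct and follows essentially the same route as the paper: a cross-fitted Z-estimator linearization with a CLT for the leading term, a VC/maximal-inequality argument for stochastic equicontinuity in $\eta$, a Neyman-orthogonal product-of-nuisance-errors bound (using (A1)--(A2)) for the bias of substituting $\hat\alpha$, a Taylor expansion giving $J_0=f_{R^*(\pi)}(\eta_\tau)$, and a preliminary $O_p(n^{-1/2})$ rate to kill the remainder. The only differences are bookkeeping (you separate the parameter and nuisance perturbations into terms $(\mathrm{II})$ and $(\mathrm{III})$ where the paper centers one combined empirical process and treats the bias separately, and the paper obtains the rate via van der Vaart--Wellner Theorem 3.2.5 on the objective $\Psi$ rather than by rate-bootstrapping the score), none of which changes the substance.
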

 Estimation details are provided in Section \ref{sec:single-J0} and Section \ref{sec:multi-J0} for both single stage and multiple stage settings.
\begin{corollary}
According to Theorem \ref{thm:1}, a Wald-type $\alpha$-level confidence interval of $\eta_\tau$ can be constructed by
\begin{equation}\label{eq:CI}
    \text{CI}=\left[\widehat{\eta}^{\text{DR}}_\tau\pm \frac{1}{\sqrt N}\Phi^{-1}(1-{\alpha}/{2})\cdot\widehat{\sigma}_{\text{DR}}\right],
\end{equation}
where $\Phi$ is the cdf of the standard normal distribution. The confidence interval constructed in (\ref{eq:CI}) achieves an asymptotic coverage rate $1-\alpha$.
\end{corollary}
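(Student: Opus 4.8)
The plan is to reduce the coverage statement to the asymptotic normality already established in Theorem \ref{thm:1}, so the corollary follows by elementary probabilistic manipulations. First I would observe that, by the construction of the interval in \eqref{eq:CI}, the event $\{\eta_\tau \in \text{CI}\}$ coincides exactly with the event that the studentized statistic $T_N := \sqrt{N}(\widehat{\eta}^{\text{DR}}_\tau - \eta_\tau)/\widehat{\sigma}_{\text{DR}}$ satisfies $|T_N| \le \Phi^{-1}(1-\alpha/2)$. This is a purely algebraic rearrangement: the two-sided containment $\widehat{\eta}^{\text{DR}}_\tau - \tfrac{1}{\sqrt N}\Phi^{-1}(1-\alpha/2)\widehat{\sigma}_{\text{DR}} \le \eta_\tau \le \widehat{\eta}^{\text{DR}}_\tau + \tfrac{1}{\sqrt N}\Phi^{-1}(1-\alpha/2)\widehat{\sigma}_{\text{DR}}$ is equivalent to $|\eta_\tau - \widehat{\eta}^{\text{DR}}_\tau| \le \tfrac{1}{\sqrt N}\Phi^{-1}(1-\alpha/2)\widehat{\sigma}_{\text{DR}}$, which upon multiplying through by $\sqrt N/\widehat{\sigma}_{\text{DR}}$ yields $|T_N| \le \Phi^{-1}(1-\alpha/2)$.

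Next I would invoke the second conclusion of Theorem \ref{thm:1}, namely that the limiting standard-normal distribution is preserved when $\sigma^2$ is replaced by the doubly-robust estimator $\widehat{\sigma}^2_{\text{DR}}$, so that $T_N \stackrel{\mathcal{D}}{\longrightarrow} \mathcal{N}(0,1)$ directly. If one prefers to derive this from the first (non-studentized) conclusion, it suffices to factor $T_N = \{\sigma^{-1}\sqrt N(\widehat{\eta}^{\text{DR}}_\tau - \eta_\tau)\}\cdot(\sigma/\widehat{\sigma}_{\text{DR}})$ and apply Slutsky's theorem, using the consistency of $\widehat{\sigma}^2_{\text{DR}}$ for $\sigma^2$ (hence $\sigma/\widehat{\sigma}_{\text{DR}} \stackrel{p}{\rightarrow} 1$ by the continuous mapping theorem) together with $\sigma^{-1}\sqrt N(\widehat{\eta}^{\text{DR}}_\tau - \eta_\tau)\stackrel{\mathcal{D}}{\longrightarrow}\mathcal{N}(0,1)$.

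Finally I would pass to the limit of the coverage probability. Since $\pm\Phi^{-1}(1-\alpha/2)$ are continuity points of the standard normal CDF, the Portmanteau theorem gives $\mathbb{P}(|T_N|\le \Phi^{-1}(1-\alpha/2)) \to \mathbb{P}(|Z|\le \Phi^{-1}(1-\alpha/2))$ with $Z\sim\mathcal{N}(0,1)$, and evaluating the right-hand side yields $\Phi(\Phi^{-1}(1-\alpha/2)) - \Phi(-\Phi^{-1}(1-\alpha/2)) = (1-\alpha/2)-\alpha/2 = 1-\alpha$, the claimed asymptotic coverage. Because all the substantive work—the influence-function central limit theorem for $\widehat{\eta}^{\text{DR}}_\tau$ and the consistency of $\widehat{\sigma}^2_{\text{DR}}$—is carried by Theorem \ref{thm:1}, there is no genuinely hard step here; the only point requiring care is confirming that studentization by $\widehat{\sigma}_{\text{DR}}$ rather than the deterministic $\sigma$ does not perturb the limiting distribution, which is precisely the content of the final sentence of Theorem \ref{thm:1} and is otherwise secured by the Slutsky argument above.
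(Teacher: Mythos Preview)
Your argument is correct and is precisely the standard derivation of a Wald interval from a CLT with a consistent variance estimator; the paper itself treats the corollary as immediate from Theorem~\ref{thm:1} and gives no separate proof, so your write-up simply makes explicit the routine Slutsky/Portmanteau step that the paper leaves implicit.
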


The proof of Theorem 1 is summarized in Section \ref{sup:4} of the supplementary material. 

\section{Numerical Results}\label{sec:numerical}
In this section, we will justify the performance of our DR quantile estimator in both single-stage and multi-stage settings under different levels of heavy-tailness of the reward distribution. Throughout this paper, we use student-t distributions with different degrees of freedom to measure the level of heavy tail.

In the following sections, we will first illustrate how close our estimated quantile is to the true reward distribution, and then report the empirical coverage probability of our estimator. To show the performance of quantile aggregation in estimating the mean outcome in the heavy-tailed cases, we compare our quantile-based DR mean estimator \textit{Rquantile}, with the DR mean estimator \textit{Rmean} used in common literature\footnote{The expression of the DR mean estimator is summarized in Section \ref{sup:1} of the supplementary material.}.

Before we proceed, let's discuss how to choose the number of folds in cross fitting. Although a larger value of $S$ may yield an intuitively better performance in quantile estimation, according to some comparison results in existing work \citep{chernozhukov2018double}, there is no strong relationship between the number of folds and estimation accuracy in the context of double machine learning. To balance the computational complexity and the potential advantage of a large number of $S$, we set $S=5$ throughout the simulation and real data analysis. For the ease of calculation, we first compute the DR quantile estimator under each fold $s$ and average over all folds to obtain our final estimator.

\subsection{Quantile Estimation Performance}
Figure \ref{fig:3} shows the estimation accuracy of our DR quantile estimator. As we can see, our DR quantile estimator performs quite well in evaluating the true reward distributions under the target policy of our interest.

\begin{figure}[htbp]
\centering
\begin{minipage}[t]{0.45\textwidth}
\centering
\includegraphics[width=7cm]{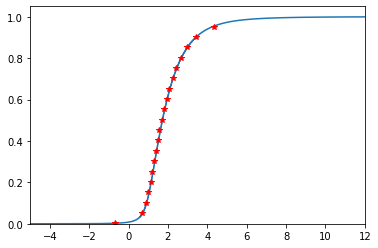}
\end{minipage}
\begin{minipage}[t]{0.45\textwidth}
\centering
\includegraphics[width=7cm]{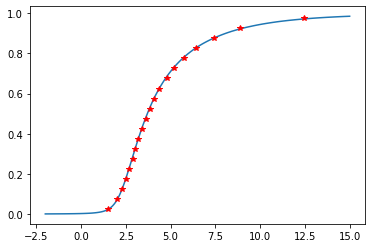}
\end{minipage}
\caption{\small Quantile estimation result in single-stage (left) and two-stage (right) settings. The blue curve denotes the true reward distribution (the CDF under target policy $\pi$), and red stars denote the estimated quantiles at $20$ equally spaced quantile levels. Details about the data generating processes are introduced in Section \ref{appendix:DGP} of the supplementary material.}\label{fig:3}
\end{figure}

\subsection{Coverage Probability}
According to the estimation details elaborated in Section \ref{sec:1stage_est_details} and \ref{sec:2stage_est_details}, we calculate the quantile $\widehat{\eta}^{\text{DR}}_\tau$ and its doubly robust variance estimator $\widehat{\sigma}^2_{\text{DR}}$ by formula (\ref{eq:29}). For each quantile level $\tau$, we repeat the estimation procedure $500$ times to calculate the empirical coverage probability, which is shown in Figure \ref{fig:5}. 

Notice that when calculating the asymptotic variance, it is required to select a proper bandwidth in the kernel function. After comparing several commonly used bandwidth selection methods, we finally choose fixed bandwidth under this heavy-tailed circumstance due to its robustness in preventing the over-smoothing issue in kernel density estimation.\footnote{See Section \ref{appendix:bandwidth} in the supplementary material for details about bandwidth selection.}

\begin{figure}[htbp]
\centering
\begin{minipage}[t]{0.32\textwidth}
\centering
\includegraphics[width=5.4cm]{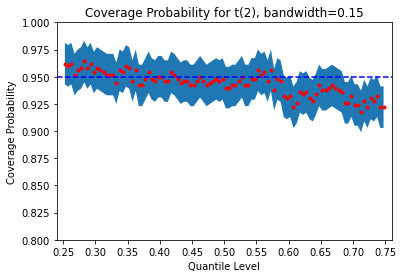}
\end{minipage}
\begin{minipage}[t]{0.32\textwidth}
\centering

\includegraphics[width=5.4cm]{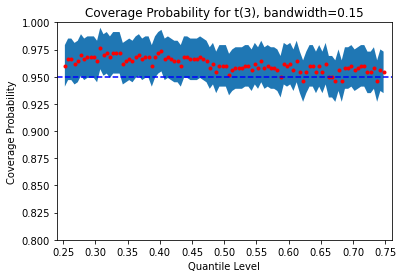}
\end{minipage}
\begin{minipage}[t]{0.32\textwidth}
\centering

\includegraphics[width=5.4cm]{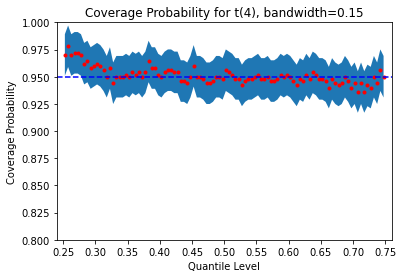}
\end{minipage}
\caption{\small Coverage Probability under noise level t(2), t(3), and t(4). The red dots correspond to the empirical coverage probability estimated from $500$ times of replicates, and the blue area denotes the confidence band at each quantile level. It is clear from the plot that $0.95$ falls into almost all of the confidence bands, which means that our CI achieves nominal coverage in most cases.}\label{fig:5}
\end{figure}

\subsection{Comparison between \textit{Rquantile} and \textit{Rmean}}

To compare our quantile-based mean DR estimator \textit{Rquantile} with the classical DR mean estimator \textit{Rmean}, we calculated the MSE for both estimators under different heavy-tailed levels of the reward distributions. Detailed comparisons are summarized in Table \ref{table:1} and Table \ref{table:2}. 
As we can see from the Mean Square Error (MSE), \textit{Rquantile} always yields smaller MSE than \textit{Rmean}. The more heavy-tailed the reward distribution is, the more powerful our method tends to be.

\begin{table}[ht]
\centering
\footnotesize
\begin{tabular}{c|l|llllllll}
\hline
\multicolumn{2}{l|}{Heavy-tailed level} & t(1.2) & t(1.5) & t(1.8) & t(2) & t(2.5) & t(3) & t(3.5) & t(4) \\ \hline
\multirow{2}{*}{MSE} & Rquantile & 0.005995  & 0.001689 & 0.000898 & 0.000993 &  0.000545 & 0.000432 & 0.000312 & 0.000371     \\
& Rmean  & 0.594783  & 0.031051 & 0.002025 & 0.001863 & 0.000620 & 0.000445 & 0.000324 &  0.000381  \\ \hline
\end{tabular}
\caption{Single-stage comparison between \textit{Rquantile} and \textit{Rmean}. We use 5-folds cross-fitting with $N=2500$. All the results are obtained from $100$ times of replicates.}\label{table:1}
\end{table}

\begin{table}[ht]
\begin{center}
\begin{tabular}{c|l|lllll}
\hline
\multicolumn{2}{c|}{Level of heavy tail} 
& t(2) & t(4)  & t(6) & t(8)  & $\mathcal{N}(0,1)$    \\ \hline
\multirow{2}{*}{MSE} 
& Rquantile & 0.006708 & 0.002729  & 0.002447  & 0.002427  & 0.001549  \\
 & Rmean  & 0.027780 & 0.003945 &  0.003558  & 0.003710  & 0.002062\\ \hline
\end{tabular}    
\end{center}
\caption{Two-stage comparison between \textit{Rquantile} and \textit{Rmean}. We use 5-folds cross-fitting with $N=2500$. All the results are obtained from $100$ times of replicates.}\label{table:2}
\end{table}

To illustrate the advantage of our estimator more clearly, we visualize the estimation results of 100 times of replicates for both \textit{Rquantile} and \textit{Rmean} when the reward follows t(1.2) distribution, which is shown in Figure \ref{fig:6}. 

\begin{figure}[htbp]
\centering
\includegraphics[width=8cm]{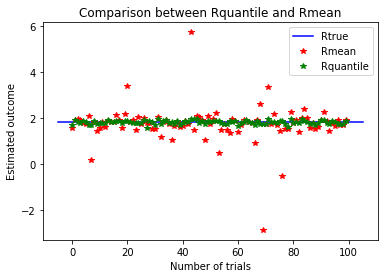}
\caption{\small Comparison when the noise follows t(1.2) distribution. The blue line denotes the true value of mean reward, red stars denote the estimated value of \textit{Rmean}, and green stars denote the estimated value of \textit{Rquantile}.
}\label{fig:6}
\end{figure}

It is obvious from Figure \ref{fig:6} that under this super heavy-tailed setting, \textit{Rquantile} outperforms \textit{Rmean} in estimating the true reward expectation. There are some ``outliers" when estimating \textit{Rmean}, which performs much worse than other points. This is because the estimated propensity scores of some subjects are quite close to $0$, leading to extremely unstable estimators for \textit{Rmean}. On the contrary, our approach divides the mean estimation procedure into many quantile levels, and solving each estimating equation won't affect as much as what we may have in obtaining \textit{Rmean}. This intuition explains the tail robustness of our estimator. Since each quantile we estimate also enjoys the double robustness property, it's reasonable that \textit{Rquantile} can achieve better results than \textit{Rmean}.

\section{Real Data Analysis} \label{sec:realdata}
In this section, we consider an advertisement experiment conducted at a world-leading tech company. The company plans to add an advertising position with three candidate choices. This experiment serves as an exploration applied to three groups of customers (180,000 users per group) randomly selected from daily active users. The algorithmic researchers conduct causal inference on this data, and come up with four different personalized intervention policies. The key response of interest is a newly defined metric (the larger, the better) related to user experience, which has a heavy-tailed distribution as shown by the histogram, Q-Q plot and log-log frequency plot in Figure \ref{fig:real_data_analysis_loglog}. For the log-log frequency plot, we divide all users into percentiles based on their metrics, and plot the log of frequency over the log of average absolute metrics within each percentile. 
\begin{figure}[tbh]
\includegraphics[width=\textwidth]{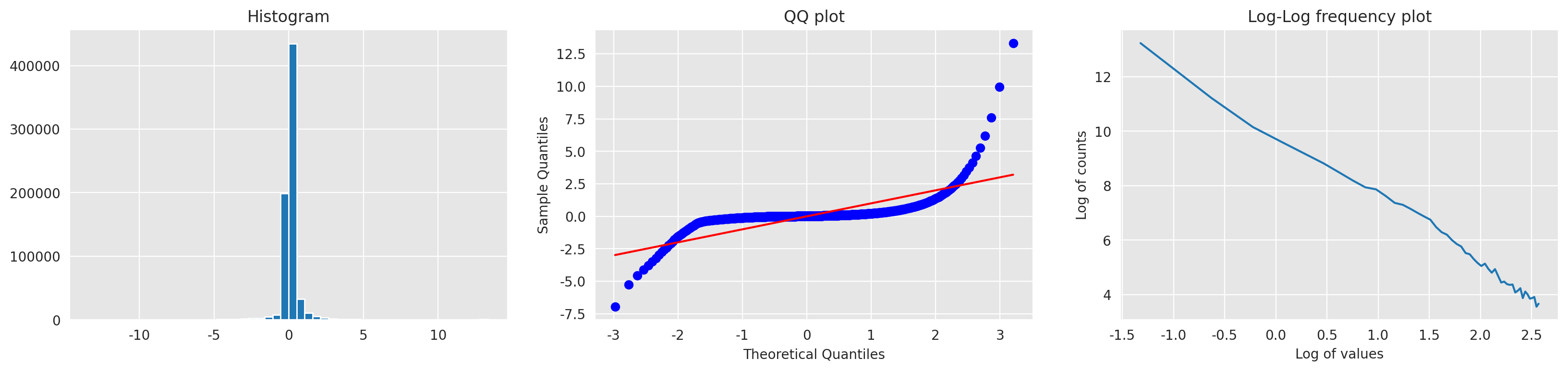}
\caption{Log-Log plot to illustrate the heavy tail of the reward distribution}
\label{fig:real_data_analysis_loglog}
\end{figure}
\begin{table}[tbh]
    \centering
    \begin{tabular}{l|c|c|c|c|c|c}
    \hline
    & \multicolumn{2}{c|}{bias} &\multicolumn{2}{c|}{std}&\multicolumn{2}{c}{RMSE}\\
    \hline
    & mean & quantile & mean & quantile & mean & quantile \\
    \hline
    $S_1$ & 0.006 & -0.013 & 0.076 & 0.069 & 0.076 & 0.071 \\
    \hline
    $S_2$ & -0.008 & -0.026 & 0.060 & 0.055 & 0.060 & 0.061 \\
    \hline
    $S_3$ & 0.007 & -0.014 & 0.066 & 0.059 & 0.067 & 0.061 \\
    \hline
    $S_4$ & 0.001 & -0.017 & 0.077 & 0.068 & 0.077 & 0.070 \\
    \hline
    \end{tabular}
    \caption{Comparison of bias, std and rmse between mean and quantile approaches. $S_1$, $S_2$, $S_3$, and $S_4$ denote four personalized policies.}
    \label{tab:real_data_analysis}
\end{table}
We wish to illustrate that our method is able to quantify the expected revenue of the four policies before any online A/B test, even with only a small sub-sample of the entire dataset. We repeat the following process 1000 times. Each time, we randomly select one percent of users from each group, and use our method to estimate 999 quantiles from 0.001 to 0.999 of the response distribution of the four policies. We then use the mean of the 999 quantiles as the estimated expected value. The true values of the four policies are obtained via another online A/B test. The mixture density network consists of two hidden layers of eight neurons, and four components.

The bias, standard deviation and root MSE are presented in Table \ref{tab:real_data_analysis}. Our doubly robust quantile-based mean estimator outperforms the traditional doubly robust mean estimator on this heavy-tailed real dataset.  Table \ref{tab:real_data_analysis_cp} shows that the coverage probabilities of the $25\%$, $50\%$ and $75\%$ quantiles of the reward distribution of the four personalized policies are all close to the $95\%$ nominal level.  Furthermore, we also compare the performance of DM, IPW and DR methods. Details are shown in Section \ref{sup:realdata} of the Supplmentary Materials. 

\begin{table}[tbh]
	\centering
	\begin{tabular}{c|c|c|c|c}
		\hline
		& $S_1$ & $S_2$ & $S_3$ & $S_4$ \\
		\hline
		$25\%$ & 0.926 & 0.944 & 0.933 & 0.911\\
		\hline
		$50\%$ & 0.961 & 0.958 & 0.957 & 0.948\\
		\hline
	    $75\%$ & 0.949 & 0.926 & 0.942 & 0.915\\
		\hline
	\end{tabular}
	\caption{Coverage probabilities of the first, second, third quartiles of the value distribution corresponding to the four personalized policies.}
	\label{tab:real_data_analysis_cp}
\end{table}


\section{Conclusion}
\label{sec:conc}

In this paper, we conducted comprehensive research on the doubly robust off policy evaluation procedure for the entire reward distribution in sequential decision making problems. We construct our estimating equation by combining the framework of doubly robust estimation with quantile regression, and provide an algorithm that allows a broad class of machine-learning-based tools to be utilized in estimating propensity scores and reward functions at each stage.

Moreover, the implementation of deep conditional generative learning models also enables us to handle the nuisance functions with the parameter of our interest. With theoretical guarantees, we can derive the asymptotic distribution of our doubly robust quantile estimator. Furthermore, a doubly robust variance estimator was proposed to improve the stability of our confidence interval. 

Based on our estimated DR quantiles, we proposed a tail-robust doubly-robust mean estimator which significantly outperforms the classical DR mean estimator used in the earlier literature. The simulation and real data results both illustrate the power of our estimator in decreasing the MSE when the reward is heavy-tailed.

\bibliographystyle{chicago}
\bibliography{Reference}


\clearpage
\pagenumbering{arabic}
\setcounter{page}{1}
\appendix
\begin{center}
{\Large\bf SUPPLEMENTARY MATERIAL }
\vspace{0.1in}
{\large \bf \\for ``Quantile Off-Policy Evaluation via Deep
Conditional Generative Learning"}
\end{center}

\section{Assumptions and Notations}\label{sup:1}
In this section, we aim to elaborate more on the explicit expressions of some assumptions and formulas mentioned in the main paper.
\subsection{Assumption (C1$'$)-(C3$'$)}\label{appendix:Assumption}
In multi-stage setting, we also accept the potential outcome framework in causal inference. 
\begin{enumerate}
    \item[(C1$'$)] Consistency (or SUTVA) \citep{rubin1990comment}: For any $k\in \{1,\dots,K\}$,
    \[
    R_k=R_k^*(\bar{A}_k)=\sum_{\bar{a}_k\in\bar{\mathcal{A}}_k}R^*(\bar{a}_k)\mathbb{I}\{\bar{A}_k=\bar{a}_k\};
    \]
    When $k\geq 2$,
    \[
    X_k=X_k^{*}(\bar{A}_{k-1})=
    \sum_{\bar{a}_{k-1}\in\bar{\mathcal{A}}_{k-1}}X_k^{*}(\bar{a}_{k-1})\mathbb{I}\{\bar{A}_{k-1}=\bar{a}_{k-1}\}.
    \] 
    \item[(C2$'$)] Sequential Randomization Assumption (SRA) \citep{rosenbaum1983central}: Given any treatment sequence $\bar{b}_K$, we have 
    \[A_k\perp\{R_k(\bar{b}_k),X_{k+1}(\bar{b}_k),R_{k+1}(\bar{b}_{k+1}),\dots,R_K(\bar{b}_K)\}|H_k,\quad \forall k\in \{1,\dots,K\}.\]
    \item[(C3$'$)] Positivity Assumption: For any $k\in \{1,\dots,K\}$, there exists a constant $\epsilon>0$, s.t. $\mathbb{P}(\epsilon\leq b_k(A_k|H_k)\leq 1-\epsilon)=1$ for any $A_k\in\{0,1\}$.
\end{enumerate}
Note that (C1$'$)-(C3$'$) are just natural extensions of (C1)-(C3) in multi-stage setting, which is also widely assumed in causal inference literature.

\subsection{$\psi(W_i;\eta,\hat{\alpha})$ and $\psi^*(W_i;\eta)$}
The estimating equation of our interest is defined as
\begin{eqnarray}
    \begin{aligned}       \psi(W_i;\eta,\hat{\alpha})\equiv&\partial_{\eta}\Psi(W_i;\eta,\widehat{b},\widehat{\mathbb{E}})\\
     =\frac{1}{N}\sum_{s=1}^S\sum_{i\in \mathcal{I}_s} & \left[ \frac{\prod_{k'=1}^K \pi_{k'}(A_{k',i}|H_{k',i})}{\prod_{k'=1}^K \widehat{b}_{k'}(A_{k',i}|H_{k',i})}\Big(\mathbb{I}\Big\{\sum_{k=1}^K R_{i,k}<\eta\Big\}-\tau\Big)\right.\\
         &\left. +\sum_{k=1}^{K}\left(\prod_{k'=1}^{k-1}\frac{\pi_{k'}(A_{k',i}|H_{k',i})}{\widehat{b}_{k'}(A_{k',i}|H_{k',i})}\right)\left(1-\frac{\pi_{k}(A_{k,i}|H_{k,i})}{\widehat{b}_{k}(A_{k,i}|H_{k,i})}\right)\widehat{l}_k(H_{k,i})\right],
    \end{aligned}
\end{eqnarray}
where $\widehat{l}_k(H_{k})=\widehat{\mathbb{E}}\left[\mathbb{I}\{R_1+\dots+{R}_{k-1}+\widehat{R}_{k}+\dots+\widehat{R}_K<\eta\}-\tau|\{A_{k'}\}_{k'=1}^{k-1},\{A_{k'}\}_{k'=k}^K\sim{\pi},X_1\right]$, in which $\widehat{R}_{k},\dots, \widehat{R}_{K}$ are the estimated rewards obtained by the data generating models following treatment $(\pi_k,\dots,\pi_K)$ starting from stage $k$. 

When the nuisance functions are replaced by the true models, we have
\begin{eqnarray}
    \begin{aligned}
        \psi^*(W_i;\eta)=\frac{1}{N}\sum_{s=1}^S\sum_{i\in \mathcal{I}_s} & \left[ \frac{\prod_{k'=1}^K \pi_{k'}(A_{k',i}|H_{k',i})}{\prod_{k'=1}^K {b}_{k'}(A_{k',i}|H_{k',i})}\Big(\mathbb{I}\Big\{\sum_{k=1}^K R_{i,k}<\eta\Big\}-\tau\Big)\right.\\
         &\left. +\sum_{k=1}^{K}\left(\prod_{k'=1}^{k-1}\frac{\pi_{k'}(A_{k',i}|H_{k',i})}{{b}_{k'}(A_{k',i}|H_{k',i})}\right)\left(1-\frac{\pi_{k}(A_{k,i}|H_{k,i})}{{b}_{k}(A_{k,i}|H_{k,i})}\right){l}_k(H_{k,i})\right].
    \end{aligned}
\end{eqnarray}

In the rest of this supplementary material, we will focus on the case $K=2$. Therefore, the expressions become
\begin{equation}\label{eq:17}
\begin{aligned}
    &\psi(W_i;\eta,\hat{\alpha})\equiv\partial_{\eta}\Psi(W_i;\eta,\widehat{b},\widehat{\mathbb{E}})\\
    =&\left[\frac{\pi_1(A_{1,i}|H_{1,i})\pi_2(A_{2,i}|H_{2,i})}{\widehat{b}_1(A_{1,i}|H_{1,i},\hat{\alpha})\widehat{b}_2(A_{2,i}|H_{2,i},\hat{\alpha})}(\mathbb{I}\{R_{1,i}+{R}_{2,i}<\eta\}-\tau)\right.\\
    & \left.+\left(1-\frac{\pi_1(A_{1,i}|H_{1,i})}{\widehat{b}_1(A_{1,i}|H_{1,i})}\right)\widehat{\mathbb{E}}[(\mathbb{I}\{\widehat{R}_{1}+\widehat{R}_{2}<\eta\}-\tau)|X_{1,i},(A_{1,i},A_{2,i})\sim \pi]\right.\\
    &\left.+\left(\frac{\pi_1(A_{1,i}|H_{1,i})}{\widehat{b}_1(A_{1,i}|H_{1,i})}\right)\left(1-\frac{\pi_2(A_{2,i}|H_{2,i})}{\widehat{b}_2(A_{2,i}|H_{2,i})}\right)\widehat{\mathbb{E}}[(\mathbb{I}\{{R}_{1,i}+\widehat{R}_{2}<\eta\}-\tau)|H_{2,i},A_{2,i}\sim \pi]\right],
\end{aligned}
\end{equation}
and
\begin{equation}
\begin{aligned}
    \psi^*(W_i;\eta)= & \left[\frac{\pi_1(A_{1,i}|H_{1,i})\pi_2(A_{2,i}|H_{2,i})}{{b}_1(A_{1,i}|H_{1,i}){b}_2(A_{2,i}|H_{2,i})}(\mathbb{I}\{R_{1,i}+{R}_{2,i}<\eta\}-\tau)+a_1^*\widehat{\mathbb{E}}[(\mathbb{I}\{{R}_{1}+{R}_{2}<\eta\}\right.\\
    & \left.-\tau)|X_{1,i},(A_{1,i},A_{2,i})\sim \pi]+a_2^*\widehat{\mathbb{E}}[(\mathbb{I}\{{R}_{1,i}+{R}_{2}<\eta\}-\tau)|H_{2,i},A_{2,i}\sim \pi]\right],
\end{aligned}
\end{equation}
where we define $a_1^*$ and $a_2^*$ as
\begin{eqnarray}\label{eq:13_*}
    \begin{aligned}
        a_1^*&=\left(1-\frac{\pi_1(A_{1,i}|H_{1,i})}{{b}_1(A_{1,i}|H_{1,i})}\right),\quad a_2^*=\left(\frac{\pi_1(A_{1,i}|H_{1,i})}{{b}_1(A_{1,i}|H_{1,i})}\right)\left(1-\frac{\pi_2(A_{2,i}|H_{2,i})}{{b}_2(A_{2,i}|H_{2,i})}\right).
    \end{aligned}
\end{eqnarray}
\subsection{\textit{Rmean}}
In simulation studies, we compare our quantile-based DR mean estimator with the traditional DR mean estimator used in common literature, named as \textit{Rmean}. 
In single-stage setting, \textit{Rmean} is estimated from
\begin{equation}\label{eq:32}
    {Rmean}=\frac{1}{n}\sum_{i=1}^n \left[\frac{\pi_1(A_{1,i}|H_{1,i})}{\widehat{b}_1(A_{1,i}|H_{1,i})}R_{1,i}+\left(1-\frac{\pi_1(A_{1,i}|H_{1,i})}{\widehat{b}_1(A_{1,i}|H_{1,i})}\right)\widehat{\mathbb{E}}[\widehat{R}_{1}|X_1=X_{1,i},A_1\sim\pi]\right].
\end{equation}
In two-stage settings, 
\begin{equation}\label{eq:33}
    \begin{aligned}
      {Rmean}=\frac{1}{n}\sum_{i=1}^n &\left[\frac{\pi_1(A_{1,i}|H_{1,i})\pi_2(A_{2,i}|H_{2,i})}{\widehat{b}_1(A_{1,i}|H_{1,i})\widehat{b}_2(A_{2,i}|H_{2,i})}(R_{1,i}+{R}_{2,i})\right.\\
      &\left.+\left(1-\frac{\pi_1(A_{1,i}|H_{1,i})}{\widehat{b}_1(A_{1,i}|H_{1,i})}\right)\widehat{\mathbb{E}}[(\widehat{R}_{1}+\widehat{R}_{2})|H_1=H_{1,i},(A_1,A_2)\sim\pi]\right.\\
      &\left.+\left(\frac{\pi_1(A_{1,i}|H_{1,i})}{\widehat{b}_1(A_{1,i}|H_{1,i})}\right)\left(1-\frac{\pi_2(A_{2,i}|H_{2,i})}{\widehat{b}_2(A_{2,i}|H_{2,i})}\right)\widehat{\mathbb{E}}[(R_{1,i}+\widehat{R}_{2})|H_2=H_{2,i},A_2\sim \pi]\right].
    \end{aligned}
\end{equation}

\section{More on Simulation and Real Data Analysis}
\subsection{Simulation}
\subsubsection{Data Generating Process}\label{appendix:DGP}
We first introduce the data generating processes in both single stage and multiple stage settings.

In single stage quantile estimation problem, we generate the data as follows:
\begin{equation}
    \begin{aligned}
      &X\sim\mathbb{G}=\mathcal{N}(0,1);\\
      &A= b(X), \text{ where } b(X)=\mathbb{I}\{X+\frac{1}{4}\epsilon>0\};\\
      &R=(1-X+2AX)(1+\frac{1}{4}\epsilon'),
    \end{aligned}
\end{equation}
where $\epsilon$ and $\epsilon'$ are noise terms with different levels of heavy tail. The more heavy-tailed $\epsilon$ and $\epsilon'$ are, the harder it would be to estimate the reward distribution. As mentioned in the main paper, we use the student-t distribution with different degrees of freedom to generate heavy-tailed distributions. To illustrate the performance of our estimator under different levels of heavy tail, we let $\epsilon \text{ and }\epsilon'$ follow t distribution t(1.5), t(1.8), t(2), t(2.5), t(3), t(3.5), and t(4) respectively. The target policy here is defined as $\pi(A|X)=\mathbb{I}\{X>0\}$.

Let's take t(3) as an example. Figure \ref{fig:1} shows the PDF of reward $R^*(\pi)$ when $\epsilon \text{ and }\epsilon'$ follows t distribution with $df=3$. From the probability plot shown in Figure \ref{fig:2}, the rewards generated in this setting, shown by the blue dots, aligns very well with the baseline distribution t(3) which is shown by the red line. Therefore, the heavy tail level of $\epsilon$ and $\epsilon'$ indeed represents the heavy tail level of the reward distribution.

\begin{figure}[htbp]
\centering
\begin{minipage}[t]{0.48\textwidth}
\centering
\includegraphics[width=8cm]{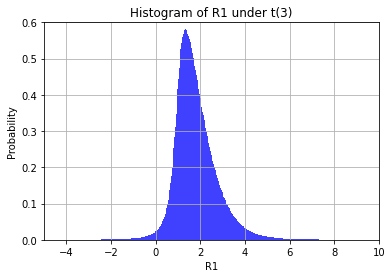}
\caption{\small PDF of $R^*(\pi)$ when $\epsilon\sim$ t(3)}
\label{fig:1}
\end{minipage}
\begin{minipage}[t]{0.48\textwidth}
\centering
\includegraphics[width=8cm]{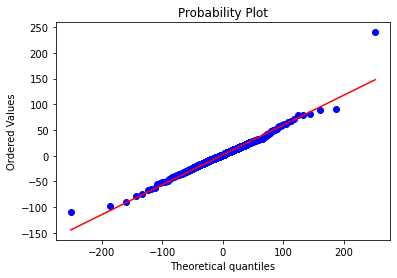}
\caption{\small Probability plot of $R^*(\pi)$ compared with baseline distribution t(3), shown by the red line. The reward distribution is indeed heavy-tailed.}
\label{fig:2}
\end{minipage}
\end{figure}

In multiple stage settings, we will only consider the case when $K=2$ for illustration purpose. The observational covariates-action-reward triplet $(X_1,A_1,R_1,X_2,A_2,R_2)$ is generated as follows:
\begin{equation}
    \begin{aligned}
      &X_1\sim\mathbb{G}=\mathcal{N}(0,1);\\
      &A_1|X_1= b_1(X_1), \text{ where } b_1(X_1)=\mathbb{I}\{X_1+\frac{1}{4}\epsilon_1>0\};\\
      &R_1|(X_1,A_1)=(1-X_1+2A_1X_1)(1+\frac{1}{4}\epsilon_2);\\
      &X_2|(X_1,A_1)\sim \frac{1}{2}X_1+\frac{1}{2}\epsilon_3;\\
      &A_2|(X_1,A_1,X_2)=b_2(H_2), \text{ where }  b_2(H_2)=\mathbb{I}\{X_2+\frac{1}{4}\epsilon_4>0\};\\
      &R_2|(X_1,A_1,X_2,A_2)=(1+0.5X_1+A_1X_1-X_2+3A_2X_2)(1+\frac{1}{4}\epsilon_5),
    \end{aligned}
\end{equation}
\begin{figure}[htbp]
\centering
\begin{minipage}[t]{0.48\textwidth}
\centering
\includegraphics[width=8cm]{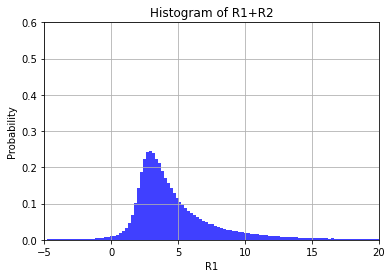}
\caption{\small PDF of $R^*(\pi)$ when $\epsilon\sim$ t(2)}\label{fig:7}
\end{minipage}
\begin{minipage}[t]{0.48\textwidth}
\centering
\includegraphics[width=8cm]{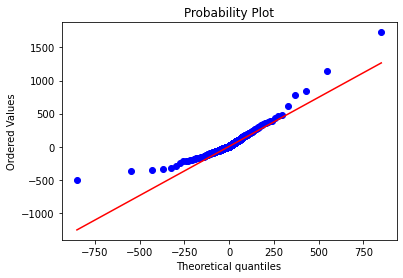}
\caption{\small Probability plot of $R^*(\pi)$ compared with baseline distribution t(2), shown by the red line. }\label{fig:8}
\end{minipage}
\end{figure}
where we define $\epsilon=(\epsilon_1,\epsilon_2,\epsilon_3,\epsilon_4,\epsilon_5)$ as the noise terms which denote student-t distributions with different degrees of freedom to control the heavy tail level of the cumulative reward distribution.
The target policy sequence at each stage is defined as
\begin{equation}
    \begin{aligned}
      &\pi_1(A_1|X_1)=\mathbb{I}\{X_1>0\},\\
      &\pi_2(A_2|X_1,A_1,X_2)=\mathbb{I}\{X_2>0\}.
    \end{aligned}
\end{equation}

Consider the cases when $\epsilon$ follows t(2), t(4), t(6), t(8) and $\mathcal{N}(0,1)$. Unlike the single-stage setting, here we use relatively larger degrees of freedom in t distribution since the estimation is harder when the number of decision stages increases. 

The PDF of $R^*(\pi)=R^*_1(\pi)+R^*_2(\pi)$ is shown in Figure \ref{fig:7}, and the probability plot of $R^*(\pi)$ comparing with t(2) distribution is shown in Figure \ref{fig:8}. The true reward distribution $R^*(\pi)$ is indeed quite heavy-tailed.

\subsubsection{Quantile Estimation Bias and Variance}

\begin{figure}[tbh]\label{fig:4}
\centering
\begin{minipage}[t]{0.48\textwidth}
\centering
\includegraphics[width=7.5cm]{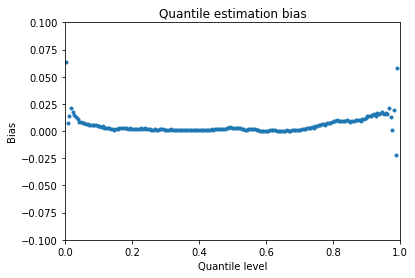}
\end{minipage}
\begin{minipage}[t]{0.48\textwidth}
\centering

\includegraphics[width=7.5cm]{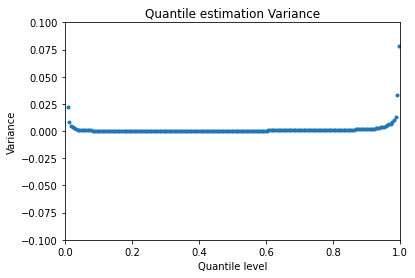}
\end{minipage}
\caption{\small Quantile estimation bias and variance at different quantile levels. The estimation bias and variance are quite close to $0$ at most of the quantile levels in the middle, and the performance tends to become unstable when $\tau$ approaches $0$ or $1$. The poor performance at extreme quantile levels is reasonable because of the lack of data points, which makes the statistical inference even harder to handle.
}
\end{figure}

\subsubsection{Bandwidth Selection}\label{appendix:bandwidth}
Since the estimation of $\widehat{J}_0^{\text{DR}}$ involves KDE, the choice of bandwidth $h$ is also an important problem to deal with. In Figure \ref{fig:10}, we tried three commonly used methods: cross-validation, Scott's method, and fixed bandwidths where $h=0.10$, $0.15$ and $0.20$, to compare their performances in estimating the standard error of $\widehat{\eta}_{\tau}^{\text{DR}}$. Surprisingly, both cross-validation and Scott's method encountered some over-smoothing issues, which is potentially caused by the heavy tail of the reward distribution. On the contrary, a reasonable fixed bandwidth tends to stabilize the estimation of $\widehat{J}_0^{\text{DR}}$, yielding a smaller MSE of the standard error estimated at all quantile levels. Since the performances under different fixed bandwidths are all pretty well, we will fix $h=0.15$ in simulation studies.

\begin{figure}[htbp]
\centering
\begin{minipage}[t]{0.32\textwidth}
\centering
\includegraphics[width=5.4cm]{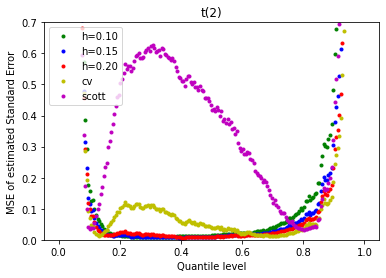}
\end{minipage}
\begin{minipage}[t]{0.32\textwidth}
\centering

\includegraphics[width=5.4cm]{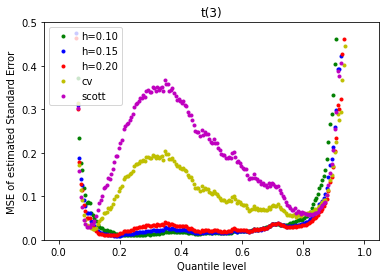}
\end{minipage}
\begin{minipage}[t]{0.32\textwidth}
\centering

\includegraphics[width=5.4cm]{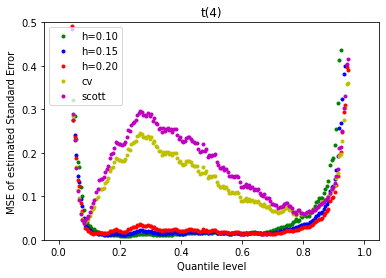}
\end{minipage}
\caption{\small The comparison between the MSE of the standard error at each quantile level $\tau$ with noise distribution t(2), t(3), and t(4).}\label{fig:10}
\end{figure}
 \subsubsection{Comparison between DM, IPW and DR}
 
\begin{figure}[htbp]
\centering
\begin{minipage}[t]{0.48\textwidth}
\centering
\includegraphics[width=7.5cm]{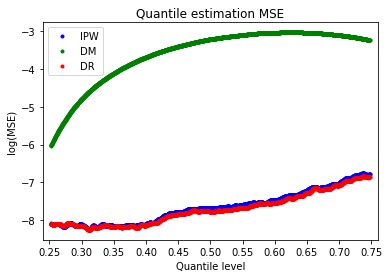}
\end{minipage}
\begin{minipage}[t]{0.48\textwidth}
\centering

\includegraphics[width=7.5cm]{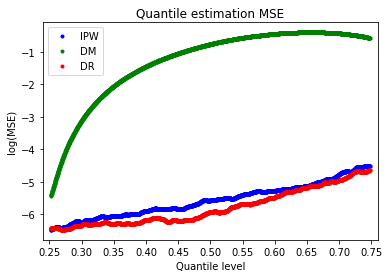}
\end{minipage}
\caption{\small The comparison between DM, IPW, and DR estimators with heavy tail level t(4) under single-stage (left) and multi-stage (right) settings. Logarithm transformation was taken to better distinguish the performance of the three methods.}\label{fig:13}
\end{figure}
 
 As we can see from the comparisons in Figure \ref{fig:13}, the performance of DM is clearly worse than IPW and DR. The performance of IPW estimator wasn't badly influenced due to the similarity of behavior policy and target policy under this specific simulation setting. As is expected to us, our DR estimator always yields the best result.

\subsection{Real Data Analysis}\label{sup:realdata}
In this section, we add the comparison between DM, IPW and DR in our real dataset.
\begin{figure}[tbh]
	\begin{center}
		\includegraphics[width=\textwidth]{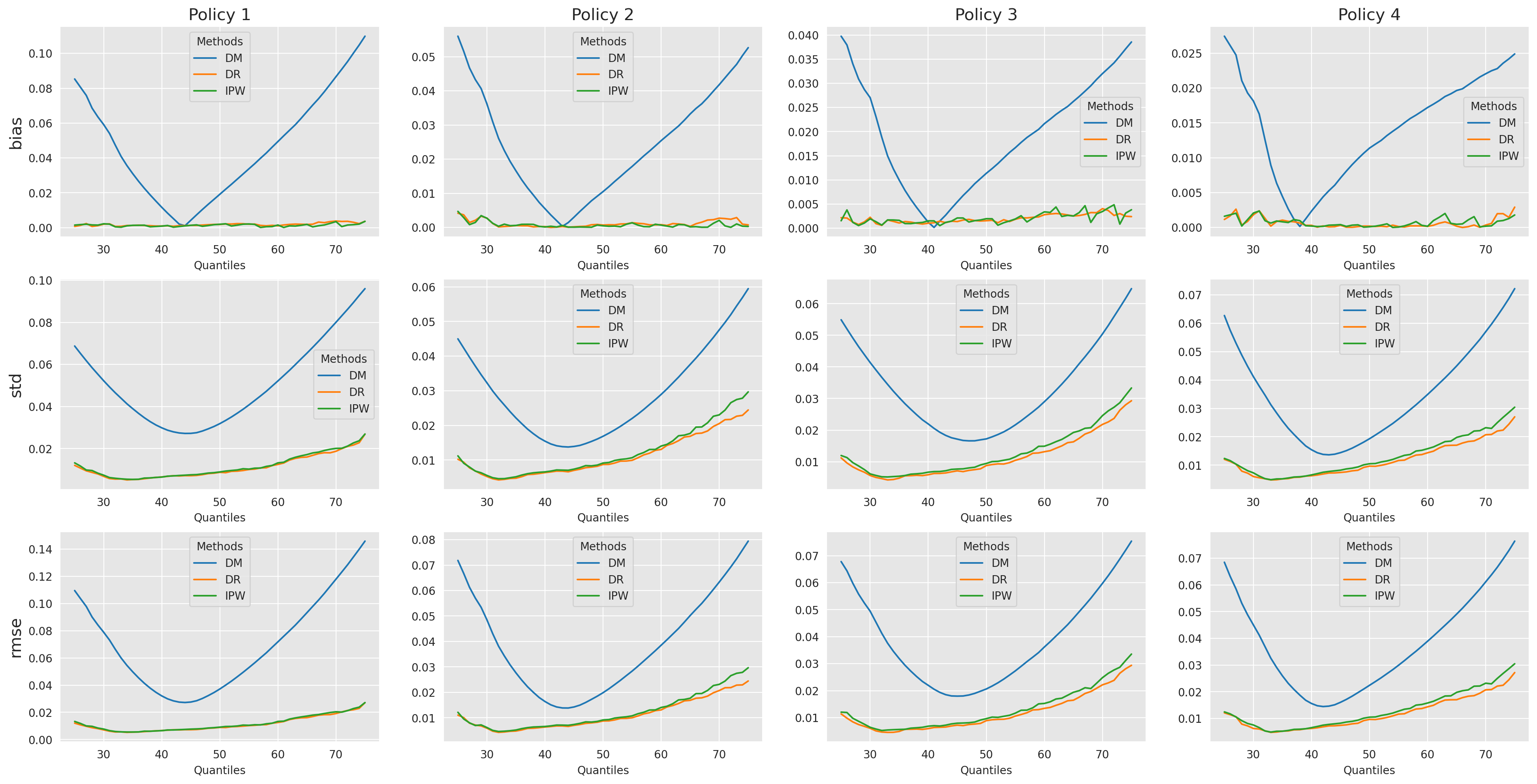}
	\end{center}
	\caption{Bias, standard deviation, and root mean squared error of quantiles over 100 replicates. The four columns correspond to four target policies, and each plot shows the comparison of three methods: DM-direct method, IPW-inverse probability weighting method and DR-doubly robust method.}
	\label{fig:real_data_analysis_cmp}
\end{figure}

As show in Figure \ref{fig:real_data_analysis_cmp}, our doubly robust approach performs the best in terms of bias, standard deviation and root mean squared errors, for all the four policies we considered.

\clearpage
\section{More on Theory Section}\label{sup:theory}
In this section, we provide several lemmas and proofs that are omitted in the main paper. 
\subsection{Lemma \ref{lemma:1}: Some Basic Results}\label{sup:2}
\begin{lemma}\label{lemma:1} Suppose $W=\{(X_k,A_k,R_k)\}_{k=1}^K$ is the full data with baseline information $X_1\sim \mathbb{G}$, and $\eta_\tau$ is the $\tau$th quantile of the potential cumulative reward function $R^*(\pi)$. Then we have
\begin{equation}
    \mathbb{E}_W[\psi^*(W;\eta_\tau)]=0.
\end{equation}
\end{lemma}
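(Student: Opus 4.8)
# Proof Proposal for Lemma 1

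\textbf{Overall strategy.} The claim $\mathbb{E}_W[\psi^*(W;\eta_\tau)] = 0$ is a statement that the population version of the doubly-robust estimating function is unbiased at the true quantile $\eta_\tau$, under the true nuisance functions. The natural plan is to decompose $\psi^*$ into the IPW term plus the $K$ augmentation terms, take expectations one block at a time, and show that each augmentation term contributes zero in expectation while the IPW term reduces to $\mathbb{E}[\mathbb{I}\{R^*(\pi)<\eta_\tau\}-\tau]$, which vanishes by the definition of the $\tau$th quantile (using Assumption (A3), which guarantees continuity of the CDF of $R^*(\pi)$ so that $\mathbb{P}(R^*(\pi)<\eta_\tau)=\tau$ exactly). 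I will work with general $K$, but the $K=2$ case displayed in \eqref{eq:17} is the illustrative template.

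\textbf{Key steps, in order.} First, I would rewrite the IPW term's expectation using iterated conditioning: conditioning on $H_K, A_K$ and invoking the sequential randomization assumption (C2$'$) together with consistency (C1$'$), the product of policy ratios $\prod_{k'=1}^K \pi_{k'}(A_{k'}|H_{k'})/b_{k'}(A_{k'}|H_{k'})$ performs a change of measure from the behavior policy to the target policy, so that $\mathbb{E}[\prod_{k'} \frac{\pi_{k'}}{b_{k'}}(\mathbb{I}\{\sum_k R_k < \eta\}-\tau)] = \mathbb{E}_{A\sim\pi}[\mathbb{I}\{\sum_k R^*_k(\pi)<\eta\}-\tau]$. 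Second, for the $k$th augmentation term, I would condition on $(H_k, A_k)$ and observe two things: (i) the inner conditional expectation $l_k(H_k) = \mathbb{E}[\mathbb{I}\{R_1+\cdots+R_{k-1}+\widehat R_k + \cdots + \widehat R_K < \eta\}-\tau \mid H_k, A_{k:K}\sim\pi]$, when the generative model is the \emph{true} conditional law, equals the true conditional expectation of $\mathbb{I}\{\bar R^*_{k}(\pi) < \eta - \sum_{j<k}R_j\}-\tau$ given $H_k$ with future actions following $\pi$; and (ii) the factor $(1 - \pi_k(A_k|H_k)/b_k(A_k|H_k))$ has conditional mean zero given $H_k$ under the behavior policy, since $\mathbb{E}_{A_k\sim b_k}[\pi_k(A_k|H_k)/b_k(A_k|H_k)\mid H_k] = \sum_{a}\pi_k(a|H_k) = 1$. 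The prefactor $\prod_{k'=1}^{k-1}\pi_{k'}/b_{k'}$ and $l_k(H_k)$ are both $H_k$-measurable, so the whole $k$th term has conditional expectation zero given $H_k$, hence unconditional expectation zero. Third, summing: $\mathbb{E}_W[\psi^*(W;\eta_\tau)] = \mathbb{E}_{A\sim\pi}[\mathbb{I}\{R^*(\pi)<\eta_\tau\}-\tau] = F_{R^*(\pi)}(\eta_\tau) - \tau = 0$ by (A3) and the definition of $\eta_\tau$.

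\textbf{Anticipated main obstacle.} The delicate part is step (i): making rigorous that when the generative/outcome model used inside $l_k$ is the true conditional distribution, the telescoping across stages genuinely recovers $\mathbb{E}_{A\sim\pi}[\mathbb{I}\{R^*(\pi)<\eta\}]$ rather than some mismatched object. One must carefully track that $R_1,\dots,R_{k-1}$ inside $l_k$ are the \emph{observed} rewards (so they implicitly carry the behavior-policy actions $A_{1:k-1}$), while $\widehat R_k,\dots,\widehat R_K$ are generated under $\pi$ from stage $k$ onward — and that under (C1$'$)-(C2$'$) the observed $R_j$ for $j<k$, conditional on $H_k$ with $A_{1:k-1}$ being whatever was realized, have the right conditional law to be concatenated with the $\pi$-generated tail. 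The cleanest route is an induction on the number of stages or a careful telescoping argument analogous to the classical sequential-doubly-robust identity (cf.\ \citealp{zhang2013robust}): write $\mathbb{I}\{R^*(\pi)<\eta\}-\tau$ under full target execution, and show that the IPW term plus the first $j$ augmentation terms equals the target-policy expectation plus a remainder that is a $\pi$-weighted-to-stage-$j$ version, which the $(j{+}1)$th augmentation term cancels. All other steps — the change of measure, the mean-zero property of the augmentation multipliers, and the final quantile identity — are routine given (A3), (C1$'$)-(C3$'$), and the boundedness/positivity assumptions.
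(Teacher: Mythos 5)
Your proposal is correct and follows essentially the same route as the paper's proof: decompose $\psi^*$ into the IPW term plus the augmentation terms, kill each augmentation term by noting that $\bigl(1-\pi_k(A_k|H_k)/b_k(A_k|H_k)\bigr)$ has conditional mean zero given $H_k$ while the prefactor and $l_k(H_k)$ are $H_k$-measurable, and reduce the IPW term via change of measure to $F_{R^*(\pi)}(\eta_\tau)-\tau=0$. The only differences are cosmetic: you state the argument for general $K$ where the paper writes out $K=2$, and your worry about a telescoping cancellation is unnecessary here since, with the true nuisance functions, each augmentation term vanishes individually.
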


\begin{proof}
For the brevity of content, we'll only show the proof when $K=2$. The results in $K$ stage settings can be derived using the same logic.

To prove Lemma \ref{lemma:1}, we first separate $\psi^*(W;\eta_\tau)$ into three parts. Let $\psi(W^*;\eta_\tau)=a+b+c$, where
\begin{equation}
\begin{aligned}
    a &=\frac{\pi_1(A_{1}|H_{1})\pi_2(A_{2}|H_{2})}{{b}_1(A_{1}|H_{1}){b}_2(A_{2}|H_{2})}(\mathbb{I}\{R_{1}+{R}_{2}<\eta_\tau\}-\tau), \\
    b &=a_1^*\cdot \widehat{\mathbb{E}}[(\mathbb{I}\{{R}_{1}+{R}_{2}<\eta_\tau\}-\tau)|X_{1},(A_{1},A_{2})\sim \pi], \\
    c &=a_2^*\cdot \widehat{\mathbb{E}}[(\mathbb{I}\{{R}_{1}+\widehat{R}_{2}<\eta_\tau\}-\tau)|H_{2},A_{2}\sim \pi],
\end{aligned}
\end{equation}
where $a_1^*$ and $a_2^*$ are defined in Formula (\ref{eq:13_*}). Now it suffices to show $\mathbb{E}[a]=\mathbb{E}[b]=\mathbb{E}[c]=0$.

\noindent \textbf{1. $\boldsymbol{\mathbb{E}[a]=0}$.}

By change of measure theorem, 
\[
\begin{aligned}
\mathbb{E}[a]&=\mathbb{E}\left[\frac{\pi_1(A_{1}|H_{1})\pi_2(A_{2}|H_{2})}{{b}_1(A_{1}|H_{1}){b}_2(A_{2}|H_{2})}(\mathbb{I}\{R_{1}+{R}_{2}<\eta_\tau\}-\tau)\right]\\
&=\mathbb{E}\left[\mathbb{I}\{R_{1}^*(\pi)+{R}_{2}^*(\pi)<\eta_\tau\}\right]-\tau=\tau-\tau=0.
\end{aligned}
\]

\noindent \textbf{2. $\boldsymbol{\mathbb{E}[b]=0}$.}

\begin{equation}
\begin{aligned}
    \mathbb{E}[b]&=\mathbb{E}\left[\left(1-\frac{\pi_1(A_{1}|H_{1})}{{b}_1(A_{1}|H_{1})}\right) \widehat{\mathbb{E}}[(\mathbb{I}\{{R}_{1}+{R}_{2}<\eta_\tau\}-\tau)|X_{1},(A_{1},A_{2})\sim \pi]\right] \\
    &=\mathbb{E}\left[\mathbb{E}\left[\left(1-\frac{\pi_1(A_{1}|H_{1})}{{b}_1(A_{1}|H_{1})}\right)\Big|H_1\right]\cdot  {\mathbb{E}}\left[(\mathbb{I}\{{R}_{1}^{*}(\pi_1)+{R}_{2}^{*}(\pi)<\eta_\tau\}-\tau)\right]\right]\\ 
    &=\mathbb{E}\left[0\cdot {\mathbb{E}}\left[(\mathbb{I}\{{R}_{1}^{*}(\pi_1)+{R}_{2}^{*}(\pi)<\eta_\tau\}-\tau)\right]\right]=0.
\end{aligned}
\end{equation}

\noindent \textbf{3. $\boldsymbol{\mathbb{E}[c]=0}$.}

Similarly, since 
\begin{equation}
\begin{aligned}
    &\mathbb{E}[a_2^*|H_2]=\mathbb{E}\left[\left(\frac{\pi_1(A_{1}|H_{1})}{{b}_1(A_{1}|H_{1})}\right)\cdot\left(1-\frac{\pi_2(A_{2}|H_{2})}{{b}_2(A_{2}|H_{2})}\right)\Big|H_2\right]\\
    &=\mathbb{E}\left[\left(\frac{\pi_1(A_{1}|H_{1})}{{b}_1(A_{1}|H_{1})}\right)\cdot\mathbb{E}\left[\left(1-\frac{\pi_2(A_{2}|H_{2})}{{b}_2(A_{2}|H_{2})}\right)\Big|H_2\right]\right]
    =\mathbb{E}\left[\left(\frac{\pi_1(A_{1}|H_{1})}{{b}_1(A_{1}|H_{1})}\right)\cdot 0\right]=0,
\end{aligned}
\end{equation}
then we have
\begin{equation}
    \begin{aligned}
        \mathbb{E}[c]&=\mathbb{E}\left[a_2^*\cdot \widehat{\mathbb{E}}[(\mathbb{I}\{{R}_{1}+\widehat{R}_{2}<\eta\}-\tau)|H_{2},A_{2}\sim \pi]\right]\\
        &=\mathbb{E}\left[\mathbb{E}\left[a_2^*|H_2\right]\cdot\widehat{\mathbb{E}}[(\mathbb{I}\{{R}_{1}^{*}(\pi_1)+{R}_{2}^{*}(\pi)<\eta\}-\tau)|H_{2},A_{2}\sim \pi]\right]\\
        &=\mathbb{E}[0\cdot \widehat{\mathbb{E}}[(\mathbb{I}\{{R}_{1}^{*}(\pi_1)+{R}_{2}^{*}(\pi)<\eta\}-\tau)|H_{2},A_{2}\sim \pi]]=0.
    \end{aligned}
\end{equation}
The proof is thus complete.
\end{proof}

\subsection{Proof of Lemma \ref{lemma:2}}\label{sup:3}
Let's consider the case when $K=2$ for simplicity.
\begin{proof}
Suppose without the loss of generality that the sample size According to our estimating procedure, the quantile estimator $\widehat{\eta}_\tau^{\text{DR}}$ is the solution to the following estimating equation:
    \begin{equation}
        \frac{1}{S}\sum_{s=1}^S\mathbb{E}_{n,s}\left[ \psi(W_{s},{\eta},\hat{\alpha}_s)\right]=0,
    \end{equation}
where $W_{s}$ denotes the data in $\mathcal{I}_s$, and $\mathbb{E}_{n,s}$ is the empirical expectation over $s$th subgroup.
    
Suppose that $\Delta_n$ is a sequence of positive numbers that converge to $0$. We define $\mathcal{N}_{\Delta_n}(\eta_\tau)=\{\eta:\left\|\eta-\eta_{\tau}\right\|\leq \Delta_n\}$ as a shrinking neighborhood of $\eta_\tau$. According to Uniform Law of Large Numbers (ULLN), it suffices to show the following three conditions hold:
\begin{enumerate}
    \item $\sup_{\eta\in \mathcal{N}_{\Delta_n}(\eta_{\tau})}\left|\frac{1}{S}\sum_{s=1}^S\mathbb{E}_{n,s}\left[\psi(W;\eta,\hat{\alpha})\right]-\mathbb{E}\left[\psi^*(W;\eta)\right]\right|=o_p(1)$;
    \item $\forall \epsilon>0$, $\inf\{|\mathbb{E}\left[\psi^*(W;\eta)\right]|:\left\|\eta-\eta_{\tau}\right\|\geq \epsilon \}>0=\mathbb{E}\left[\psi^*(W;\eta_{\tau})\right]$;
    \item $\frac{1}{S}\sum_{s=1}^S\mathbb{E}_{n,s}\left[\psi(W;\widehat{\eta}_\tau^{\text{DR}},\hat{\alpha})\right]=o_p(1)$.
\end{enumerate}

Condition 2 can be easily proved by the identifiability of $\eta_{\tau}$. According to Assumption (A4), there exists a positive constant $C_1$, such that
\begin{equation}
    \partial_{\eta}\{\mathbb{E}_W[\psi^*(W;\eta)]\}={f}_{{R}^{*}({\pi})}(\eta)\geq C_1
\end{equation}
for all $\eta\in \mathcal{N}_{\Delta_n}(\eta_{\tau})$. According to Lemma 1, $\eta_\tau$ is the solution to $\mathbb{E}_W[\psi^*(W;\eta)]=0$. Therefore, the claim of condition 2 follows.

Condition 3 guarantees the estimated $\widehat{\eta}_\tau^{\text{DR}}$ to be close to the solution to the empirical estimating equation, which naturally holds under a valid estimating procedure. To complete the consistency proof, it remains to establish Condition 1.

Define the empirical process $\mathbb{G}_{n,s}\left[\psi(W_{s};\eta,\hat{\alpha}_s)\right]$ as a linear operator on measurable functions $\psi$ via
\[
\mathbb{G}_{n,s}\left[\psi(W_{s};\eta,\hat{\alpha}_s)\right]=\sqrt{n}\left[\frac{1}{n} \sum_{i\in \mathcal{I}_s}\psi(W_{i};\eta,\hat{\alpha}_s)-\mathbb{E}\left[\psi(W;\eta,\hat{\alpha}_s)\right]\right].
\]
Since
\begin{equation*}
    \begin{aligned}
        &\sup_{\eta\in \mathcal{N}_{\Delta_n}(\eta_{\tau})}\left|\frac{1}{S}\sum_{s=1}^S\mathbb{E}_{n,s}\left[\psi(W;\eta,\hat{\alpha})\right]-\mathbb{E}\left[\psi^*(W;\eta)\right]\right|\\
        &\leq \frac{1}{S}\sum_{s=1}^S\bigg\{\frac{1}{\sqrt{n}}\cdot \sup_{\eta\in \mathcal{N}_{\Delta_n}(\eta_{\tau})}\left|\mathbb{G}_{n,s}\left[\psi(W;\eta,\hat{\alpha}_s)\right]\right|+\sup_{\eta\in \mathcal{N}_{\Delta_n}(\eta_{\tau})}\left|\mathbb{E}\left[\psi(W;\eta,\hat{\alpha}_s)\right]-\mathbb{E}\left[\psi^*(W;\eta)\right]\right|\bigg\},
    \end{aligned}
\end{equation*}
it suffices to show that for any $s\in \{1,\dots,S\}$, the two terms on the RHS of the above inequality are $o_p(1)$. For the brevity of content, we omit the subscript $s$ in $\mathbb{G}_{n,s}$ and $\hat{\alpha}_s$ in the following proof to illustrate the general results that hold for any $s$. 

\noindent \textbf{Step 1}: Prove that $\sup_{\eta\in N_{\Delta_n}(\eta_{\tau})}\left|\mathbb{G}_{n}\left[\psi(W;\eta,\hat{\alpha})\right]\right|=o_p(\sqrt{n})$.

Define $\mathcal{F}=\{\psi(W;{\eta},\hat{\alpha}):\widehat{\eta}_{\tau}^{\text{DR}}\in \mathcal{N}_{\Delta_n}(\eta_{\tau})\}$. $\mathcal{F}$ is a VC Class. By Conditioning on $(W_i)_{i\in \mathcal{I}^c}$, $\hat{\alpha}$ can be regarded as a fixed value. Therefore, according to the Maximal Inequality in VC type classes \citep{chernozhukov2014gaussian,chernozhukov2018double}, there exist a sufficiently large constant $C$, such that with probability $1-o(1)$,
\begin{equation}
    \sup_{f\in \mathcal{F}}\left|\mathbb{G}_n(f)\right|\leq C\cdot \left(r_n\log^{1/2}(1/r_n)+n^{-1/2+1/q}\log n \right),
\end{equation}
where $r_n=\sup_{\eta\in N_{\Delta_n}(\eta_{\tau})}\left\|\psi(W;{\eta},\hat{\alpha})\right\|_{P,2}$, and $q\geq 2$ is an integer that can be arbitrarily large. To be more specific, 
\begin{equation*}
\small
    \begin{aligned}
      r_n=&sup_{\eta\in N_{\Delta_n}(\eta_{\tau})}\left\|\psi(W;{\eta},\hat{\alpha})\right\|_{P,2}\\
    =&sup_{\eta\in N_{\Delta_n}(\eta_{\tau})}\left\|\frac{\pi_1(A_{1,i}|H_{1,i})\pi_2(A_{2,i}|H_{2,i})}{\widehat{b}_1(A_{1,i}|H_{1,i},\hat{\alpha})\widehat{b}_2(A_{2,i}|H_{2,i},\hat{\alpha})}(\mathbb{I}\{R_{1,i}+{R}_{2,i}<\eta\}-\tau)\right.\\
    & \left.+\left(1-\frac{\pi_1(A_{1,i}|H_{1,i})}{\widehat{b}_1(A_{1,i}|H_{1,i})}\right)\widehat{\mathbb{E}}[(\mathbb{I}\{\widehat{R}_{1}+\widehat{R}_{2}<\eta\}-\tau)|X_{1,i},(A_{1,i},A_{2,i})\sim \pi]\right.\\
    &\left.+\left(\frac{\pi_1(A_{1,i}|H_{1,i})}{\widehat{b}_1(A_{1,i}|H_{1,i})}\right)\left(1-\frac{\pi_2(A_{2,i}|H_{2,i})}{\widehat{b}_2(A_{2,i}|H_{2,i})}\right)\widehat{\mathbb{E}}[(\mathbb{I}\{{R}_{1,i}+\widehat{R}_{2}<\eta\}-\tau)|H_{2,i},A_{2,i}\sim \pi]\right\|_{P,2}
    \end{aligned}
\end{equation*}
\begin{equation*}
\small
    \begin{aligned}
    \leq &sup_{\eta\in N_{\Delta_n}(\eta_{\tau})}\left\|\frac{\pi_1(A_{1,i}|H_{1,i})\pi_2(A_{2,i}|H_{2,i})}{\widehat{b}_1(A_{1,i}|H_{1,i},\hat{\alpha})\widehat{b}_2(A_{2,i}|H_{2,i},\hat{\alpha})}\right\|_{P,2}\cdot\left\|\mathbb{I}\{R_{1,i}+{R}_{2,i}<\eta\}-\tau\right\|_{P,2}\\
    & +\left\|1-\frac{\pi_1(A_{1,i}|H_{1,i})}{\widehat{b}_1(A_{1,i}|H_{1,i})}\right\|_{P,2}\cdot\left\|\widehat{\mathbb{E}}[(\mathbb{I}\{\widehat{R}_{1}+\widehat{R}_{2}<\eta\}-\tau)|X_{1,i},(A_{1,i},A_{2,i})\sim \pi]\right\|_{P,2}\\
    &+\left\|\left(\frac{\pi_1(A_{1,i}|H_{1,i})}{\widehat{b}_1(A_{1,i}|H_{1,i})}\right)\left(1-\frac{\pi_2(A_{2,i}|H_{2,i})}{\widehat{b}_2(A_{2,i}|H_{2,i})}\right)\right\|_{P,2}\cdot\left\|\widehat{\mathbb{E}}[(\mathbb{I}\{{R}_{1,i}+\widehat{R}_{2}<\eta\}-\tau)|H_{2,i},A_{2,i}\sim \pi]\right\|_{P,2}\\
    \leq & \frac{1}{\epsilon^2}\cdot 2+\frac{1}{\epsilon}\cdot 2+\frac{1}{\epsilon^2}\cdot 2\leq \frac{6}{\epsilon^2},
    \end{aligned}
\end{equation*}
where the second last inequality is obtained from the Positivity Assumption (C3) and Assumption (A5). Therefore, $r_n=O(1)$, and we have $\sup_{f\in \mathcal{F}}\left|\mathbb{G}_n(f)\right|\leq O_p(1)=o(\sqrt{n})$. 

\vspace{0.1in}
\noindent\textbf{Step 2}: Prove that $\sup_{\eta\in N_{\Delta_n}(\eta_{\tau})}\left|\mathbb{E}\left[\psi(W;\eta,\hat{\alpha})\right]-\mathbb{E}\left[\psi^*(W;\eta)\right]\right|=o_p(1)$.

In the following proof, we use $\widehat{\mathbb{E}}$ to denote the error aggregated by Monte Carlo Method, and use $\widehat{R}_k$ to denote the estimated reward obtained by MDN. Likewise, $\widehat{R}_k^*(\pi)$ is to denote the estimated potential outcome under policy $\pi$ at the $k$th stage. When $M\rightarrow \infty$, no error comes from the Monte Carlo method, and thus $\widehat{\mathbb{E}}=\mathbb{E}$. 

Let's first start with $\mathbb{E}\left[\psi(W;\eta,\hat{\alpha})\right]$.
\begin{eqnarray}\label{eq:lemma2.1}
\small
    \begin{aligned}
      &\mathbb{E}\left[\psi(W;\eta,\hat{\alpha})\right]\\
      =&\mathbb{E}\left[\left(\frac{\pi_1(A_{1,i}|H_{1,i})}{\widehat{b}_1(A_{1,i}|H_{1,i})}\right)\left(\frac{\pi_2(A_{2,i}|H_{2,i})}{\widehat{b}_2(A_{2,i}|H_{2,i})}\right)\cdot(\mathbb{I}\{R_{1,i}+R_{2,i}<\eta\}-\tau)\right.\\
    &\left.+\left(1-\frac{\pi_1(A_{1,i}|H_{1,i})}{\widehat{b}_1(A_{1,i}|H_{1,i})}\right)\cdot\widehat{\mathbb{E}}\Big[\mathbb{I}\{\widehat{R}_{1}+\widehat{R}_{2}<\eta\}-\tau\big|X_{1,i},(A_{1,i},A_{2,i})\sim \pi\Big]\right.\\
    &\left. + \left(\frac{\pi_1(A_{1,i}|H_{1,i})}{\widehat{b}_1(A_{1,i}|H_{1,i})}\right)\left(1-\frac{\pi_2(A_{2,i}|H_{2,i})}{\widehat{b}_2(A_{2,i}|H_{2,i})}\right)\cdot\widehat{\mathbb{E}}\Big[\mathbb{I}\{{R}_{1,i}+\widehat{R}_{2}<\eta\}-\tau\big| H_{2,i},A_{2,i}\sim\pi\Big]\right]\\
    =&\mathbb{E}\Bigg[\left(\frac{\pi_1(A_{1,i}|H_{1,i})}{\widehat{b}_1(A_{1,i}|H_{1,i})}\right)\left(\frac{\pi_2(A_{2,i}|H_{2,i})}{\widehat{b}_2(A_{2,i}|H_{2,i})}\right) \cdot \left(\mathbb{E}[\mathbb{I}\{R_{1,i}+R^{*}_{2}(\pi)<\eta\}\big| H_{2,i}]-\widehat{\mathbb{E}}[\mathbb{I}\{R_{1,i}+\widehat{R}^{*}_{2}(\pi)<\eta\}\big| H_{2,i}]\right)\\
    &+\left(\frac{\pi_1(A_{1,i}|H_{1,i})}{\widehat{b}_1(A_{1,i}|H_{1,i})}\right)\widehat{\mathbb{E}}[\mathbb{I}\{R_{1,i}+\widehat{R}^{*}_{2}(\pi)<\eta\}-\tau\big| H_{2,i}]
    \end{aligned}
\end{eqnarray}
\begin{eqnarray*}
\small
    \begin{aligned}
    &+\left(1-\frac{\pi_1(A_{1,i}|H_{1,i})}{\widehat{b}_1(A_{1,i}|H_{1,i})}\right)\widehat{\mathbb{E}}[\mathbb{I}\{\widehat{R}^{*}_{1}(\pi)+\widehat{R}^{*}_{2}(\pi)<\eta\}-\tau\big| H_{1,i}]\Bigg]:=L1+L2+L3,
    \end{aligned}
\end{eqnarray*}
where we define the last three lines in (\ref{eq:lemma2.1}) as $L1$, $L2$ and $L3$. Then
\begin{equation*}
\small
\begin{aligned}
    L2=&\mathbb{E}\left[\left(\frac{\pi_1(A_{1,i}|H_{1,i})}{\widehat{b}_1(A_{1,i}|H_{1,i})}\right)\cdot \widehat{\mathbb{E}}[\mathbb{I}\{R_{1,i}+\widehat{R}^{*}_{2}(\pi)<\eta\}-\tau\big| H_{2,i}]\right]\\
    =&\mathbb{E}\left[\left(\frac{\pi_1(A_{1,i}|H_{1,i})}{\widehat{b}_1(A_{1,i}|H_{1,i})}\right)\left(\frac{\pi_2(A_{2,i}|H_{2,i})}{{b}_2(A_{2,i}|H_{2,i})}\right)\cdot\widehat{\mathbb{E}}[\mathbb{I}\{R_{1,i}+\widehat{R}^{*}_{2}(\pi)<\eta\}-\tau\big| H_{2,i}]\right]\\
    =&\mathbb{E}\left[\left(\frac{\pi_1(A_{1,i}|H_{1,i})}{\widehat{b}_1(A_{1,i}|H_{1,i})}\right)\left(\frac{\pi_2(A_{2,i}|H_{2,i})}{{b}_2(A_{2,i}|H_{2,i})}\right) \cdot \left(\widehat{\mathbb{E}}[\mathbb{I}\{R_{1,i}+\widehat{R}^{*}_{2}(\pi)<\eta\}\big| H_{2,i}]-\widehat{\mathbb{E}}[\mathbb{I}\{R_{1,i}+{R}^{*}_{2}(\pi)<\eta\}\big| H_{2,i}]\right)\right.\\
    &\left.+\left(\frac{\pi_1(A_{1,i}|H_{1,i})}{\widehat{b}_1(A_{1,i}|H_{1,i})}\right) \widehat{\mathbb{E}}[\mathbb{I}\{R_{1,i}+{R}^{*}_{2}(\pi)<\eta\}-\tau\big| H_{2,i}]\right],
\end{aligned}
\end{equation*}
where we add and subtract a term to maintain the equivalence of the formula.

Plug in $L2$ to Formula (\ref{eq:lemma2.1}), we have
\begin{equation*}\label{eq:lemma2.2}
\small
    \begin{aligned}
      &\mathbb{E}\left[\psi(W;\eta,\hat{\alpha})\right]\\
      =&\mathbb{E}\left[\left(\frac{\pi_1(A_{1,i}|H_{1,i})}{\widehat{b}_1(A_{1,i}|H_{1,i})}\right)\left(\frac{\pi_2(A_{2,i}|H_{2,i})}{\widehat{b}_2(A_{2,i}|H_{2,i})}\right) \cdot \left(\mathbb{E}[\mathbb{I}\{R_{1,i}+R^{*}_{2}(\pi)<\eta\}\big| H_{2,i}]-\widehat{\mathbb{E}}[\mathbb{I}\{R_{1,i}+\widehat{R}^{*}_{2}(\pi)<\eta\}\big| H_{2,i}]\right)\right.\\
    &\left. +\left(\frac{\pi_1(A_{1,i}|H_{1,i})}{\widehat{b}_1(A_{1,i}|H_{1,i})}\right)\left(\frac{\pi_2(A_{2,i}|H_{2,i})}{{b}_2(A_{2,i}|H_{2,i})}\right) \cdot \left(\widehat{\mathbb{E}}[\mathbb{I}\{R_{1,i}+\widehat{R}^{*}_{2}(\pi)<\eta\}\big| H_{2,i}]-\widehat{\mathbb{E}}[\mathbb{I}\{R_{1,i}+{R}^{*}_{2}(\pi)<\eta\}\big| H_{2,i}]\right)\right.\\
    &\left.+\left(\frac{\pi_1(A_{1,i}|H_{1,i})}{\widehat{b}_1(A_{1,i}|H_{1,i})}\right) \widehat{\mathbb{E}}[\mathbb{I}\{R_{1,i}+{R}^{*}_{2}(\pi)<\eta\}-\tau\big| H_{2,i}]\right.\\
    &\left.+\left(1-\frac{\pi_1(A_{1,i}|H_{1,i})}{\widehat{b}_1(A_{1,i}|H_{1,i})}\right)\widehat{\mathbb{E}}[\mathbb{I}\{\widehat{R}^{*}_{1}(\pi)+\widehat{R}^{*}_{2}(\pi)<\eta\}-\tau\big| H_{1,i}]\right]\\
    =&\mathbb{E}\Bigg[\left(\frac{\pi_1(A_{1,i}|H_{1,i})}{\widehat{b}_1(A_{1,i}|H_{1,i})}\right)\left(\frac{\pi_2(A_{2,i}|H_{2,i})}{\widehat{b}_2(A_{2,i}|H_{2,i})}-\frac{\pi_2(A_{2,i}|H_{2,i})}{{b}_2(A_{2,i}|H_{2,i})}\right)\\
    &\qquad\cdot \left(\mathbb{E}[\mathbb{I}\{R_{1,i}+R^{*}_{2}(\pi)<\eta\}\big| H_{2,i}]-\widehat{\mathbb{E}}[\mathbb{I}\{R_{1,i}+\widehat{R}^{*}_{2}(\pi)<\eta\}\big| H_{2,i}]\right)\\
    &+\left(\frac{\pi_1(A_{1,i}|H_{1,i})}{\widehat{b}_1(A_{1,i}|H_{1,i})}\right) \widehat{\mathbb{E}}[\mathbb{I}\{R_{1,i}+{R}^{*}_{2}(\pi)<\eta\}-\tau\big| H_{2,i}]+\left(1-\frac{\pi_1(A_{1,i}|H_{1,i})}{\widehat{b}_1(A_{1,i}|H_{1,i})}\right)\widehat{\mathbb{E}}[\mathbb{I}\{\widehat{R}^{*}_{1}(\pi)+\widehat{R}^{*}_{2}(\pi)<\eta\}-\tau\big| H_{1,i}]\Bigg]
    \end{aligned}
\end{equation*}

Also, similar to the logic in Lemma 1, it's easy to show that the augmentation terms in $\psi^*(W;\eta)$ are $0$. Therefore,
\begin{equation*}
\small
\begin{aligned}
    \mathbb{E}\left[\psi^*(W;\eta)\right]&=\mathbb{E}\left[\frac{\pi_1(A_{1,i}|H_{1,i})\pi_2(A_{2,i}|H_{2,i})}{{b}_1(A_{1,i}|H_{1,i}){b}_2(A_{2,i}|H_{2,i})}\cdot\Big(\mathbb{I}\{(R_{1,i}+R_{2,i}<\eta\}-\tau\Big)\right]\\
    &=\mathbb{E}\left[\mathbb{I}\{(R_{1}^*(\pi)+R_{2}^*(\pi)<\eta\}-\tau\right].
\end{aligned}
\end{equation*}
Then we have

\begin{equation}\label{eq:lemma2.3}
\small
    \begin{aligned}
     &\mathbb{E}\left[\psi(W;\eta,\hat{\alpha})\right]-\mathbb{E}\left[\psi^*(W;\eta)\right]= \mathbb{E}\left[\psi(W;\eta,\hat{\alpha})\right]-\mathbb{E}\left[\mathbb{I}\{(R_{1}^*(\pi)+R_{2}^*(\pi)<\eta\}-\tau\right]\\
     =&\mathbb{E}\left[\left(\frac{\pi_1(A_{1,i}|H_{1,i})}{\widehat{b}_1(A_{1,i}|H_{1,i})}\right)\left(\frac{\pi_2(A_{2,i}|H_{2,i})}{\widehat{b}_2(A_{2,i}|H_{2,i})}-\frac{\pi_2(A_{2,i}|H_{2,i})}{{b}_2(A_{2,i}|H_{2,i})}\right)\right.\\
    &\left.\qquad\cdot \left(\mathbb{E}[\mathbb{I}\{R_{1,i}+R^{*}_{2}(\pi)<\eta\}\big| H_{2,i}]-\widehat{\mathbb{E}}[\mathbb{I}\{R_{1,i}+\widehat{R}^{*}_{2}(\pi)<\eta\}\big| H_{2,i}]\right)\right.\\
    &\left.+\left(\frac{\pi_1(A_{1,i}|H_{1,i})}{\widehat{b}_1(A_{1,i}|H_{1,i})}\right) \widehat{\mathbb{E}}[\mathbb{I}\{R_{1,i}+{R}^{*}_{2}(\pi)<\eta\}-\tau\big| H_{2,i}]\right.\\
    &+\left.\left(1-\frac{\pi_1(A_{1,i}|H_{1,i})}{\widehat{b}_1(A_{1,i}|H_{1,i})}\right)\widehat{\mathbb{E}}[\mathbb{I}\{\widehat{R}^{*}_{1}(\pi)+\widehat{R}^{*}_{2}(\pi)<\eta\}-\tau\big| H_{1,i}]-\mathbb{E}\left[\mathbb{I}\{(R_{1}^*(\pi)+R_{2}^*(\pi)<\eta\}-\tau|H_{1,i}\right]\right].
    \end{aligned}
\end{equation}
Let's then consider the last two lines in formula (\ref{eq:lemma2.3}).
\begin{equation}\label{eq:lemma2.4}
\small
    \begin{aligned}
      &\mathbb{E}\left[\left(\frac{\pi_1(A_{1,i}|H_{1,i})}{\widehat{b}_1(A_{1,i}|H_{1,i})}\right) \widehat{\mathbb{E}}[\mathbb{I}\{R_{1,i}+{R}^{*}_{2}(\pi)<\eta\}-\tau\big| H_{2,i}]\right.\\
    &\left.+\left(1-\frac{\pi_1(A_{1,i}|H_{1,i})}{\widehat{b}_1(A_{1,i}|H_{1,i})}\right)\widehat{\mathbb{E}}[\mathbb{I}\{\widehat{R}^{*}_{1}(\pi)+\widehat{R}^{*}_{2}(\pi)<\eta\}-\tau\big| H_{1,i}]-\mathbb{E}\left[\mathbb{I}\{(R_{1}^*(\pi)+R_{2}^*(\pi)<\eta\}-\tau|H_{1,i}\right]\right]\\
    &=\mathbb{E}\left[\left(\frac{\pi_1(A_{1,i}|H_{1,i})}{\widehat{b}_1(A_{1,i}|H_{1,i})}\right)\left(\widehat{\mathbb{E}}[\mathbb{I}\{(R^*_{1}(\pi)+{R}^{*}_{2}(\pi)<\eta\}\big| H_{1,i}]-\widehat{\mathbb{E}}[\mathbb{I}\{\widehat{R}^{*}_{1}(\pi)+\widehat{R}^{*}_{2}(\pi)<\eta\}\big| H_{1,i}]\right)\right.\\
    &\left. + \widehat{\mathbb{E}}[\mathbb{I}\{\widehat{R}^{*}_{1}(\pi)+\widehat{R}^{*}_{2}(\pi)<\eta\}\big| H_{1,i}]- \widehat{\mathbb{E}}[\mathbb{I}\{R^*_{1}(\pi)+R^*_{2}(\pi)<\eta\}\big|H_{1,i}]\right]
    \end{aligned}
\end{equation}
\begin{equation*}
\small
    \begin{aligned}
    &=\mathbb{E}\left[\left(\frac{\pi_1(A_{1,i}|H_{1,i})}{\widehat{b}_1(A_{1,i}|H_{1,i})}\right)\left(\widehat{\mathbb{E}}[\mathbb{I}\{R^*_{1}(\pi)+{R}^{*}_{2}(\pi)<\eta\}\big| H_{1,i}]-\widehat{\mathbb{E}}[\mathbb{I}\{\widehat{R}^{*}_{1}(\pi)+\widehat{R}^{*}_{2}(\pi)<\eta\}\big| H_{1,i}]\right)\right.\\
    &\left. - \left(\frac{\pi_1(A_{1,i}|H_{1,i})}{{b}_1(A_{1,i}|H_{1,i})}\right)\left( \widehat{\mathbb{E}}[\mathbb{I}\{R^*_{1}(\pi)+R^*_{2}(\pi)<\eta\}\big|H_{1,i}]-\widehat{\mathbb{E}}[\mathbb{I}\{\widehat{R}^{*}_{1}(\pi)+\widehat{R}^{*}_{2}(\pi)<\eta\}\big| H_{1,i}]\right)\right]\\
    &=\mathbb{E}\left[\left(\frac{\pi_1(A_{1,i}|H_{1,i})}{\widehat{b}_1(A_{1,i}|H_{1,i})}-\frac{\pi_1(A_{1,i}|H_{1,i})}{{b}_1(A_{1,i}|H_{1,i})}\right)\left( \widehat{\mathbb{E}}[\mathbb{I}\{R^*_{1}(\pi)+R^*_{2}(\pi)<\eta\}\big|H_{1,i}]-\widehat{\mathbb{E}}[\mathbb{I}\{\widehat{R}^{*}_{1}(\pi)+\widehat{R}^{*}_{2}(\pi)<\eta\}\big| H_{1,i}]\right)\right].
    \end{aligned}
\end{equation*}

Combining the result of Formula (\ref{eq:lemma2.3}) and Formula (\ref{eq:lemma2.4}), we have
\begin{equation}\label{eq:lemma2.5}
\small
    \begin{aligned}
    &\left|\mathbb{E}\left[\psi(W;\eta,\hat{\alpha})\right]-\mathbb{E}\left[\psi^*(W;\eta)\right]\right|\\
     = &\left|\mathbb{E}\left[\left(\frac{\pi_1(A_{1,i}|H_{1,i})}{\widehat{b}_1(A_{1,i}|H_{1,i})}\right)\left(\frac{\pi_2(A_{2,i}|H_{2,i})}{\widehat{b}_2(A_{2,i}|H_{2,i})}-\frac{\pi_2(A_{2,i}|H_{2,i})}{{b}_2(A_{2,i}|H_{2,i})}\right)\right.\right.\\
    &\left.\left.\qquad\cdot \left(\mathbb{E}[\mathbb{I}\{R_{1,i}+R^{*}_{2}(\pi)<\eta\}\big| H_{2,i}]-\widehat{\mathbb{E}}[\mathbb{I}\{R_{1,i}+\widehat{R}^{*}_{2}(\pi)<\eta\}\big| H_{2,i}]\right)\right]\right.\\
    +&\left.\mathbb{E}\left[\left(\frac{\pi_1(A_{1,i}|H_{1,i})}{\widehat{b}_1(A_{1,i}|H_{1,i})}-\frac{\pi_1(A_{1,i}|H_{1,i})}{{b}_1(A_{1,i}|H_{1,i})}\right)\right.\right.\\
    &\left.\left.\qquad\cdot \left( \widehat{\mathbb{E}}[\mathbb{I}\{R^*_{1}(\pi)+R^*_{2}(\pi)<\eta\}\big|H_{1,i}]-\widehat{\mathbb{E}}[\mathbb{I}\{\widehat{R}^{*}_{1}(\pi)+\widehat{R}^{*}_{2}(\pi)<\eta\}\big| H_{1,i}]\right)\right]\right|\\
    &\leq \frac{1}{\epsilon^3}\cdot \left\|\widehat{b}_2(A_{2,i}|H_{2,i})-{b}_2(A_{2,i}|H_{2,i})\right\|_{P,2}\cdot\left\|\delta\left(\widehat{F}_{{R}^{*}_2({\pi})|H_2},F_{{R}_2^{*}({\pi})|H_2}\right)\right\|_{P,2}\\
      & + \frac{1}{\epsilon^2}\cdot \left\|\widehat{b}_1(A_{1,i}|H_{1,i})-{b}_1(A_{1,i}|H_{1,i})\right\|_{P,2}\cdot\left\|\delta\left(\widehat{F}_{{R}^{*}({\pi})|H_1},F_{{R}^{*}({\pi})|H_1}\right)\right\|_{P,2}
    \end{aligned}
\end{equation}
where the last inequality holds by condition (C3$'$) and Assumption (A5).

Therefore, as long as the propensity score models are consistently estimated, or the outcome regression models for $\widehat{R}_1$ and $\widehat{R}_2$ obtained by MDN are consistently estimated, we have
\begin{equation*}
    \left|\mathbb{E}\left[\psi(W;\eta,\hat{\alpha})\right]-\mathbb{E}\left[\psi^*(W;\eta)\right]\right|=o(1),
\end{equation*}
according to formula (\ref{eq:lemma2.5}).

To be more specific, we only need one of the following conditions to be satisfied:

\begin{enumerate}
    \item $\left\|\widehat{b}_1(A_1|H_1)-{b}_1(A_1|H_1)\right\|_{P,2}=o(1)$, and $\left\|\widehat{b}_2(A_2|H_2)-{b}_2(A_2|H_2)\right\|_{P,2}=o(1)$;
    \item $\left\|\delta\left(\widehat{F}_{{R}^{*}(\pi)|H_1},F_{{R}^{*}({\pi})|H_1}\right)\right\|_{P,2}=o(1)$, and $\left\|\delta\left(\widehat{F}_{{R}_2^{*}(\pi)|H_2},F_{{R}_2^{*}(\pi)|H_2}\right)\right\|_{P,2}=o(1)$.
\end{enumerate}
The proof of Lemma 2 is thus complete.

Notice that Assumption (A1) and (A2) are not necessary to prove the double robustness. These assumptions are imposed to further prove the asymptotic normality of our quantile estimator in Section \ref{sup:4}.
\end{proof}

\subsection{Lemma \ref{lemma:3}: Double Robustness of our Variance Estimator}\label{sup:3.5}

\begin{lemma}\label{lemma:3}
In kernel density estimation, suppose the kernel function $K$ is a real-valued, Borel-measurable function in $L^\infty(\mathbb{R})$ which satisfies $\lim_{x\rightarrow \infty}|xK(x)|=\lim_{x\rightarrow \infty}|xK^2(x)|=0$. Define the bandwidth as $h_N$, which can be a function of the sample size. We assume $\lim_{N\rightarrow \infty} h_N=0$, and $\lim_{N\rightarrow \infty} Nh_N=\infty$. 

We claim that $\widehat{\sigma}^2_{\text{DR}}$ is a consistent estimator of ${\sigma}^2$, as long as one of the following two parts of models is consistently estimated: 
\begin{enumerate}
    \item[(1)] The propensity score functions at each stage, i.e. $\{\widehat{b}_k(H_{k})\}_{k=1}^K$.
    \item[(2)] The conditional expectation functions at each stage, i.e. $\{\widehat{L}_k(H_{k})\}_{k=1}^K$,\\
    or equivalently the data generating models for $\{\widehat{R}_k|(H_k,A_k)\}_{k=1}^K$.
\end{enumerate}
\end{lemma}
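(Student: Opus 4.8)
The plan is to split the claim along the factorization $\widehat{\sigma}^2_{\text{DR}} = (\widehat{J}_0^{\text{DR}})^{-1}\,\overline{V}_N\,(\widehat{J}_0^{\text{DR}})^{-1}$, where $\overline{V}_N := \frac{1}{S}\sum_{s=1}^S \widehat{\mathbb{E}}_{n,s}[\psi^2(W;\widehat{\eta}_s,\hat\alpha_s)]$, and to prove separately that (i) $\overline{V}_N \stackrel{p}{\to} \mathbb{E}[\psi^{*2}(W;\eta_\tau)]$ and (ii) $\widehat{J}_0^{\text{DR}} \stackrel{p}{\to} J_0$, with $J_0 = f_{R^*(\pi)}(\eta_\tau)$ bounded away from zero by (A4); the continuous mapping theorem then gives $\widehat{\sigma}^2_{\text{DR}} \stackrel{p}{\to} J_0^{-1}\mathbb{E}[\psi^{*2}(W;\eta_\tau)](J_0^{-1})' = \sigma^2$. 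Part (i) is not where double robustness enters: invoking the uniform bound $\|\psi(W;\eta,\hat\alpha)\|_{P,2}\le 6/\epsilon^2$ from Step~1 of the proof of Lemma~\ref{lemma:2} (which uses only (C3$'$) and (A5)), the consistency $\widehat{\eta}_s \stackrel{p}{\to} \eta_\tau$ from Lemma~\ref{lemma:2}, convergence of $\hat\alpha_s$ to its population (possibly pseudo-true) limit $\alpha^*$, and $M\to\infty$ to annihilate the Monte Carlo error inside $\widehat{\mathbb{E}}$, a uniform law of large numbers over a shrinking neighborhood of $(\eta_\tau,\alpha^*)$ yields the convergence, with limit exactly $\mathbb{E}[\psi^{*2}(W;\eta_\tau)]$ by the definition $\psi^*(W;\eta)=\partial_\eta\Psi(W;\eta,b,\mathbb{E})$.

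The substance is part (ii). I would follow the template of the proof of Lemma~\ref{lemma:2}: write $\widehat{J}_0^{\text{DR}} = \frac{1}{S}\sum_s \mathbb{E}_{n,s}[g(W;\widehat{\eta}_{\text{DR}},\hat\alpha_s)]$ with $g$ the bracketed summand of \eqref{eq:J0_2stages} (resp. \eqref{eq:J0_1stage}), and decompose $\widehat{J}_0^{\text{DR}} - J_0$ into an empirical-process fluctuation $\tfrac{1}{\sqrt n}\mathbb{G}_{n,s}[g(\,\cdot\,;\widehat{\eta}_{\text{DR}},\hat\alpha_s)]$ plus a conditional bias $\mathbb{E}[g(W;\widehat{\eta}_{\text{DR}},\hat\alpha_s)\mid\hat\alpha_s] - J_0$. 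For the fluctuation I would condition on the training fold so $\hat\alpha_s$ is fixed; the only summand with a second moment that blows up is the kernel piece $\tfrac1h K(\tfrac{\,\cdot\,-\widehat{\eta}_{\text{DR}}}{h})$, whose per-observation variance is $O(h^{-1})$ (a consequence of $K\in L^\infty(\mathbb{R})$ together with the Parzen tail condition $\lim_{x\to\infty}|xK^2(x)|=0$), so its sample-mean fluctuation is $O_p((Nh)^{-1/2}) = o_p(1)$ since $Nh\to\infty$, while the estimated-density piece $\widehat{f}_{\bar R_k^*(\pi)}$ is bounded and fluctuates at $O_p(N^{-1/2})$. The random evaluation point $\widehat{\eta}_{\text{DR}}$ inside the kernel is handled by the same stochastic-equicontinuity / VC-type maximal inequality over $\mathcal{N}_{\Delta_n}(\eta_\tau)$ used in Step~1 of Lemma~\ref{lemma:2}, combined with $\widehat{\eta}_{\text{DR}}\stackrel{p}{\to}\eta_\tau$.

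For the conditional-bias term I would rerun the add-and-subtract algebra of \eqref{eq:lemma2.1}--\eqref{eq:lemma2.5}, pairing each propensity ratio with a conditional-density/CDF error so the bias collapses to a finite sum of products $\|\widehat{b}_k-b_k\|_{P,2}\cdot\|\delta(\widehat{F}_{\bar R_k^*(\pi)|H_k},F_{\bar R_k^*(\pi)|H_k})\|_{P,2}$ up to $\epsilon$-factors from (C3$'$)/(A5). The one genuinely new term relative to Lemma~\ref{lemma:2} is that, even with a correct propensity, the ``IPW part'' of $g$ does not equal $J_0$ exactly: there is a kernel-smoothing bias $\int K(u)\{f_{R^*(\pi)}(\eta_\tau - hu) - f_{R^*(\pi)}(\eta_\tau)\}\,du$, which tends to $0$ by a Bochner-type argument from the normalization $\int K = 1$, the tail condition $\lim_{x\to\infty}|xK(x)|=0$, $h_N\to 0$, and the continuity of $f_{R^*(\pi)}$ near $\eta_\tau$ guaranteed by (A3) and (A6). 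Hence, if all $\widehat{b}_k$ are consistent every cross-product vanishes and the smoothing bias vanishes, so $\widehat{J}_0^{\text{DR}}\to J_0$; and if instead all the MDN conditional-density models (equivalently the generators $\widehat{R}_k\mid(H_k,A_k)$) are consistent, the cross-products again vanish and the ``DM part'' $\widehat{f}_{R^*(\pi)}(\widehat{\eta}_{\text{DR}})$ converges to $\mathbb{E}_{H_1\sim\mathbb{G}}[f_{R^*(\pi)|H_1}(\eta_\tau|H_1)] = J_0$ by dominated convergence (domination from (A3), continuity from (A6), and $\widehat{\eta}_{\text{DR}}\to\eta_\tau$). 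Either way part (ii) holds and the lemma follows.

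I expect the main obstacle to be the simultaneous control of the kernel-density variance and the kernel-smoothing bias under the minimal Parzen conditions, the plug-in of the random evaluation point $\widehat{\eta}_{\text{DR}}$ inside the kernel argument, and the doubly-robust bias cancellation, all within the cross-fitting bookkeeping. The delicate technical point is verifying that the class $\{\,w\mapsto \tfrac1h K(\tfrac{\eta - r(w)}{h}) : \eta\in\mathcal{N}_{\Delta_n}(\eta_\tau)\,\}$ remains VC-type uniformly in the bandwidth, so that the maximal inequality used in Lemma~\ref{lemma:2} applies with the correct $h$-dependence; the rest is essentially a rerun of the Lemma~\ref{lemma:2} argument.
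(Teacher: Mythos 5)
Your proposal takes essentially the same route as the paper: the paper's own proof consists of (a) establishing weak consistency of the kernel-density component via the Chebyshev bias--variance argument under the Parzen conditions on $K$ and $h_N$, and (b) deferring the doubly-robust cancellation entirely to ``follow the idea in Lemma~\ref{lemma:2},'' which is precisely the skeleton you flesh out. If anything, your plan is considerably more detailed than the paper's proof, which omits the sandwich decomposition, the treatment of the middle variance term, and the explicit bias algebra for $\widehat{J}_0^{\text{DR}}$ altogether.
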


\begin{proof}
Firstly, we show that when $\lim_{N\rightarrow \infty} h_N=0$, and $\lim_{N\rightarrow \infty} Nh_N=\infty$, the kernel density estimator for $f(r):=f_{R^*(\pi)}(r)$ attains weak consistency. 

Let's first define $f_N(r)$ as the KDE of $f(r)$. By Chebyshev's Inequality, at each point of continuity $r$ of $f$ and for any $\epsilon>0$,
\[
\mathbb{P}(|f_N(r)-f(r)|>\epsilon)\leq \frac{\mathbb{E}[(f_N(r)-f(r))^2]}{\epsilon^2}=\frac{\text{Var}[f_N(r)]+\text{Bias}^2[f_N(r)]}{\epsilon^2}.
\]
According to the result in \citet{parzen1962estimation}, $
lim_{N\rightarrow \infty}\text{Bias}^2[f_N(r)]=0$ when $\lim_{N\rightarrow \infty} h_N=0$. That is, $f_N(r)$ is asymptotically unbiased when $\lim_{N\rightarrow \infty} h_N=0$.

According to the mild assumptions on $K$ stated in Lemma 3, the asymptotic variance of the density estimator is given by
\[
lim_{N\rightarrow \infty} Nh_N\text{Var}[f_N(r)]=f(r)\int_{-\infty}^\infty K^2(u) dy<\infty.
\]
Therefore, both the asymptotic bias and variance go to $0$ as $N\rightarrow\infty$. This illustrates the asymptotic consistency of the kernel density estimator in $\widehat{\sigma}^2_{\text{DR}}$.

The rest of the proof is trivial. One can follow the idea in Lemma 2 to similarly prove the double robustness of our variance estimator. For the brevity of content, we omit the details for this part.
\end{proof}

\subsection{Proof of Theorem 1}\label{sup:4}

\begin{proof}
same as the proof in Lemma 1-3, we will only illustrate the case when $K=2$ for the brevity of content. We will give a brief sketch in Step 1, and leave the rest of the details in Step 2-5 accordingly.

Based on our estimating procedure in theory section, we have
\begin{equation}
\begin{aligned}
    &\sum_{s=1}^S\mathbb{E}_{n,s}[\psi(W_s;\widehat{\eta}_{\tau}^{\text{DR}},\hat{\alpha}_s)]=\frac{1}{n}\sum_{s=1}^S\sum_{i\in \mathcal{I}_s}\psi(W_{i,s};\widehat{\eta}_{\tau}^{\text{DR}},\hat{\alpha}_s)=0;  \\
    &\mathbb{E}[\psi^*(W_s;\eta_{\tau})] =\mathbb{E}[a+b+c]=0\quad \text{ for any }s\in\{1,\dots,S\}.
\end{aligned}
\end{equation}

\noindent\textbf{Step 1}: 

First, we apply Taylor series expansion on $ \mathbb{E}[\psi^*(W;\eta)]$ around $\eta_\tau$.
\begin{equation}\label{eq:36}
\begin{aligned}
    \mathbb{E}[\psi^*(W;\eta)]&=\mathbb{E}[\psi^*(W;\eta_{\tau})]+J_0(\eta-\eta_{\tau})+\frac{1}{2}H_0(\eta-\eta_{\tau})^2+o(\left\|\eta-\eta_{\tau}\right\|^2)\\
    &=J_0(\eta-\eta_{\tau})+\frac{1}{2}H_0(\eta-\eta_{\tau})^2+o(\left\|\eta-\eta_{\tau}\right\|^2),
\end{aligned}
\end{equation}
where $J_0=\partial_{\eta}\{\mathbb{E}_W[\psi^*(W;\eta)]\}|_{\eta=\eta_{\tau}}$ and $H_0=\partial^2_{\eta}\{\mathbb{E}_W[\psi^*(W;\eta)]\}|_{\eta=\eta_{\tau}}$.

Suppose that $\mathbb{E}[\sum_{s=1}^S\psi(W_s;\widehat{\eta}_{\tau}^{\text{DR}},\hat{\alpha}_s)]=\mathbb{E}[\sum_{s=1}^S\psi^*(W_s;\widehat{\eta}_{\tau}^{\text{DR}})]+o(n^{-1/2})$ (We will prove it in Step 4). Since $\mathbb{E}[\psi^*(W_s;{\eta}_{\tau})]=0$, by plugging in the definition of empirical operator, we got
\begin{equation}\label{eq:37}
\begin{aligned}
    & \sqrt{n}\sum_{s=1}^S\mathbb{E}_{n,s}[\psi(W_s;\widehat{\eta}_{\tau}^{\text{DR}},\hat{\alpha}_s)-\psi^*(W_s;\eta_{\tau})]\\
    &=\sum_{s=1}^S\mathbb{G}_{n,s}[\psi(W_s;\widehat{\eta}_{\tau}^{\text{DR}},\hat{\alpha}_s)-\psi^*(W_s;\eta_{\tau})]+\sqrt{n}\sum_{s=1}^S\mathbb{E}[\psi(W_s;\widehat{\eta}_{\tau}^{\text{DR}},\hat{\alpha}_s)]\\
    & =\sum_{s=1}^S\mathbb{G}_{n,s}[\psi(W_s;\widehat{\eta}_{\tau}^{\text{DR}},\hat{\alpha}_s)-\psi^*(W_s;\eta_{\tau})]+\sqrt{n}\sum_{s=1}^S\mathbb{E}[\psi^*(W_s;\widehat{\eta}_{\tau}^{\text{DR}})]+o_p(1). 
\end{aligned}
\end{equation}

Let $\eta=\widehat{\eta}_{\tau}^{\text{DR}}$ in formula (\ref{eq:36}) and combine it with formula (\ref{eq:37}). Then
\begin{equation}
\begin{aligned}
    \sqrt{n}\sum_{s=1}^S&J_0(\widehat{\eta}_{\tau}^{\text{DR}}-\eta_{\tau})+\sqrt{n}\sum_{s=1}^S\mathbb{E}_{n,s}[\psi^*(W_s;\eta_{\tau})]=-\sum_{s=1}^S\mathbb{G}_{n,s}[\psi(W_s;\widehat{\eta}_{\tau}^{\text{DR}},\hat{\alpha}_s)-\psi^*(W_s;\eta_{\tau})]\\
    &-\frac{1}{2}\sqrt{n}\sum_{s=1}^S H_0 (\widehat{\eta}_{\tau}^{\text{DR}}-\eta_{\tau})^2-\sqrt{n}\cdot o(\left\|\widehat{\eta}_{\tau}^{\text{DR}}-\eta_{\tau}\right\|^2)+o_p(1).
\end{aligned}
\end{equation}
Therefore,
\begin{eqnarray*}\label{eq:thm.1}
\begin{aligned}
    &\sqrt{S}\cdot \sqrt{N}\left\|J_0(\widehat{\eta}_{\tau}^{\text{DR}}-\eta_{\tau})+\frac{1}{S}\sum_{s=1}^S\mathbb{E}_{n,s}[\psi^*(W_s;\eta_{\tau})]\right\|\\
    =&\sqrt{n}\left\|\sum_{s=1}^S J_0(\widehat{\eta}_{\tau}^{\text{DR}}-\eta_{\tau})+\sum_{s=1}^S\mathbb{E}_{n,s}[\psi^*(W_s;\eta_{\tau})]\right\|\leq  \left\|\sum_{s=1}^S\mathbb{G}_{n,s}[\psi(W_s;\widehat{\eta}_{\tau}^{\text{DR}},\hat{\alpha}_s)-\psi^*(W_s;\eta_{\tau})]\right\|\\
    +&\frac{1}{2}\sqrt{n}\sum_{s=1}^S H_0 (\widehat{\eta}_{\tau}^{\text{DR}}-\eta_{\tau})^2+\sqrt{n}\cdot o(\left\|\widehat{\eta}_{\tau}^{\text{DR}}-\eta_{\tau}\right\|^2)+o_p(1).
\end{aligned}
\end{eqnarray*}
To finish the proof of this theorem, it suffices to show that
\begin{equation}
    \sqrt{N}\left\|J_0(\widehat{\eta}_{\tau}^{\text{DR}}-\eta_{\tau})+\frac{1}{S}\sum_{s=1}^S\mathbb{E}_{n,s}[\psi^*(W_s;\eta_{\tau})]\right\|=o_p(1),
\end{equation}
which is satisfied if we can prove that all terms on the RHS of the inequality in formula (\ref{eq:thm.1}) are $o_p(1)$. That is, we only need to show
\begin{enumerate}
    \item $\left\|\sum_{s=1}^S\mathbb{G}_{n,s}[\psi(W_s;\widehat{\eta}_{\tau}^{\text{DR}},\hat{\alpha}_s)-\psi^*(W_s;\eta_{\tau})]\right\|=o_p(1)$, which follows directly from triangle inequality if we can prove for any $s\in\{1,\dots,S\}$, \[\left\|\mathbb{G}_{n,s}[\psi(W_s;\widehat{\eta}_{\tau}^{\text{DR}},\hat{\alpha}_s)-\psi^*(W_s;\eta_{\tau})]\right\|=o_p(1).\]
    \item $\left\|\widehat{\eta}_{\tau}^{\text{DR}}-\eta_{\tau}\right\|= O_p(n^{-1/2})$.
    \item $\mathbb{E}[\sum_{s=1}^S\psi(W_s;\widehat{\eta}_{\tau}^{\text{DR}},\hat{\alpha}_s)]=\mathbb{E}[\sum_{s=1}^S\psi^*(W_s;\widehat{\eta}_{\tau}^{\text{DR}})]+o(n^{-1/2})$, which follows naturally if we can prove for any $s\in\{1,\dots,S\}$,
    \[\mathbb{E}[\psi(W_s;\widehat{\eta}_{\tau}^{\text{DR}},\hat{\alpha}_s)]=\mathbb{E}[\psi^*(W_s;\widehat{\eta}_{\tau}^{\text{DR}})]+o(n^{-1/2}).\]
    \item $H_0$ is bounded.
\end{enumerate}
Detailed proofs of these four conditions are provided in Step 2-5. Assuming condition 1-4 hold, we have
\begin{eqnarray}
   \sqrt{N}\left\|J_0(\widehat{\eta}_{\tau}^{\text{DR}}-\eta_{\tau})+\frac{1}{S}\sum_{s=1}^S\mathbb{E}_{n,s}[\psi^*(W_s;\eta_{\tau})]\right\|=o_p(1),
\end{eqnarray}
then
\begin{eqnarray*}
   \sqrt{N}(\widehat{\eta}_{\tau}^{\text{DR}}-\eta_{\tau})=-J_0^{-1}\frac{1}{S}\sum_{s=1}^S\mathbb{E}_{n,s}[\psi^*(W_s;\eta_{\tau})]+o_p(1)=-J_0^{-1}\frac{1}{\sqrt{N}}\sum_{i=1}^N \psi^*(W_i;\eta_{\tau})+o_p(1).
\end{eqnarray*}
The claim of this theorem thus follows.

Next, we will prove that all of these four conditions are satisfied in Step 2, Step 3, Step 4 and Step 5 respectively.

In the following proof, when the result holds for any fold $s\in\{1,\dots,S\}$, the subscript $s$ in $W_s$, $\hat{\alpha}_s$, $\mathbb{G}_{n,s}$ and $\mathbb{E}_{n,s}$ will be omitted to avoid the redundancy of notations.

\noindent\textbf{Step 2}: Prove that $\left\|\mathbb{G}_{n}[\psi(W;\widehat{\eta}_{\tau}^{\text{DR}},\hat{\alpha})-\psi^*(W;\eta_{\tau})]\right\|=o_p(1)$.

Define $\mathcal{N}_{\tau_n}(\eta_{\tau})=\{\eta:\left\|{\eta}-\eta_{\tau}\right\|\leq\tau_n\}$ as a neighborhood of $\eta_{\tau}$ with size $\tau_n$, and define $\mathcal{F}=\{\psi(W;\eta,\hat{\alpha})-\psi(W;{\eta}_{\tau},\hat{\alpha}):\eta\in \mathcal{N}_{\tau_n}(\eta_{\tau})\}$, where $\tau_n$ is a sequence of positive constants converging to zero. (According to Step 3, $\tau_n$ can be any sequence with order $\tau_n\geq O(n^{-1/2})$).

Since
\begin{equation}\label{eq:43}
    \begin{aligned}
        \left\|\mathbb{G}_n[\psi(W;\widehat{\eta}_{\tau}^{\text{DR}},\hat{\alpha})-\psi^*(W;\eta_{\tau})]\right\| &\leq \sup_{\eta\in\mathcal{N}_{\tau_n}(\eta_\tau)}\left\|\mathbb{G}_n[\psi(W;\eta,\hat{\alpha})-\psi(W;{\eta}_{\tau},\hat{\alpha})]\right\|\\
        &+\left\|\mathbb{G}_n[\psi(W;{\eta}_{\tau},\hat{\alpha})-\psi^*(W;{\eta}_{\tau})]\right\|,
    \end{aligned}
\end{equation}
It suffices to show both terms on the RHS of Formula (\ref{eq:43}) is $o_p(1)$.

\noindent\textbf{Step 2.1}: Prove that $\sup_{\eta\in\mathcal{N}_{\tau_n}(\eta_\tau)}\left\|\mathbb{G}_n[\psi(W;{\eta},\hat{\alpha})-\psi(W;{\eta}_{\tau},\hat{\alpha})]\right\|=o_p(1)$.

First, it can be shown that $\mathcal{F}_{0}^{\hat{\alpha}}=\{\psi(W;{\eta},\hat{\alpha}):\eta\in \mathcal{N}_{\tau_n}(\eta_{\tau})\}$ is a VC Class with VC-index $V(\mathcal{F}_{0}^{\hat{\alpha}})$. Also, for some $q> 2$, the entropy is bounded by some constant $C_2$, i.e. $\left\|F_{0}^{\hat{\alpha}}\right\|_{P,q}\leq C_2$. According to Lemma 2.6.18 in \citet{van1996weak}, $\mathcal{F}$ is also a VC Class. By Conditioning on $(W_i)_{i\in \mathcal{I}^c}$, $\hat{\alpha}$ can be regarded as a fixed value. By applying the Maximal Inequality specialized to VC type classes \citep{chernozhukov2014gaussian}, there exists a sufficiently large constant $C$, such that with probability $1-o(1)$,
\begin{equation}
    \sup_{f\in \mathcal{F}}\left|\mathbb{G}_n(f)\right|\leq C\cdot \left(r_N\log^{1/2}(1/r_N)+n^{-1/2+1/q}\log n \right),
\end{equation}
where 
\begin{equation}
    r_N=\sup_{\left\|{\eta}-\eta_{\tau}\right\|\leq\tau_N}\left\|\psi(W;\eta,\hat{\alpha})-\psi(W;{\eta}_{\tau},\hat{\alpha})\right\|_{P,2}.
\end{equation}

If we can show that $r_N=o(1)$, then $\sup_{f\in \mathcal{F}}\left|\mathbb{G}_n(f)\right|=o(1)$, and the claim of Step 2.1 follows.

To prove this, we split $\psi$ into three parts and show that each part is indeed $o(1)$. Let $\psi(W;\eta,\hat{\alpha})=a(\eta,\hat{\alpha})+b(\eta,\hat{\alpha})+c(\eta,\hat{\alpha})$, where
\begin{equation}\label{eq:thm.4}
\begin{aligned}
    a(\eta,\hat{\alpha}) &=\frac{\pi_1(A_{1}|H_{1})\pi_2(A_{2}|H_{2})}{\widehat{b}_1(A_{1}|H_{1})\widehat{b}_2(A_{2}|H_{2})}(\mathbb{I}\{R_{1}+{R}_{2}<\eta\}-\tau), \\
    b(\eta,\hat{\alpha}) &=a_1(\hat{\alpha})\cdot \widehat{\mathbb{E}}[(\mathbb{I}\{\widehat{R}_{1}+\widehat{R}_{2}<\eta\}-\tau)|X_{1},(A_{1},A_{2})\sim \pi], \\
    c(\eta,\hat{\alpha}) &=a_2(\hat{\alpha})\cdot \widehat{\mathbb{E}}[(\mathbb{I}\{{R}_{1}+\widehat{R}_{2}<\eta\}-\tau)|H_{2},A_{2}\sim \pi],
\end{aligned}
\end{equation}
and $a_1(\hat{\alpha})$, $a_2(\hat{\alpha})$ are defined as
\begin{equation}\label{eq:13}
    \begin{split}
        a_1(\hat{\alpha})&=\left(1-\frac{\pi_1(A_{1,i}|H_{1,i})}{\widehat{b}_1(A_{1,i}|H_{1,i})}\right),\quad a_2(\hat{\alpha})=\left(\frac{\pi_1(A_{1,i}|H_{1,i})}{\widehat{b}_1(A_{1,i}|H_{1,i})}\right)\left(1-\frac{\pi_2(A_{2,i}|H_{2,i})}{\widehat{b}_2(A_{2,i}|H_{2,i})}\right).
    \end{split}
\end{equation}
Specifically, when the estimating equation is $\psi^*(W;\eta)$, we let $\psi^*(W;\eta):=a^*(\eta)+b^*(\eta)+c^*(\eta)$.

\noindent\textbf{Step 2.1.a}: Prove that $\sup_{\eta\in \mathcal{N}_{\tau_n}(\eta_{\tau})}\left\|a(\eta,\hat{\alpha})-a({\eta}_{\tau},\hat{\alpha})\right\|_{P,2}=o(1)$.
\begin{equation}\label{eq:48}
\small
    \begin{aligned}
        &\sup_{\eta\in \mathcal{N}_{\tau_n}(\eta_{\tau})}\left\|a(\eta,\hat{\alpha})-a({\eta}_{\tau},\hat{\alpha})\right\|_{P,2}\\
        &=\sup_{\eta\in \mathcal{N}_{\tau_n}(\eta_{\tau})}\left\|\frac{\pi_1(A_1|H_1)\pi_2(A_2|H_2)}{\widehat{b}_1(A_{1}|H_{1})\widehat{b}_2(A_{2}|H_{2})}\Big(\mathbb{I}\{R_{1}+R_{2}<\eta\}-\mathbb{I}\{R_{1}+R_{2}<\eta_\tau\}\Big)\right\|_{P,2}\\
        &=\sup_{\eta\in \mathcal{N}_{\tau_n}(\eta_{\tau})}\left\|\frac{\pi_1(A_1|H_1)\pi_2(A_2|H_2)}{\widehat{b}_1(A_{1}|H_{1})\widehat{b}_2(A_{2}|H_{2})}\mathbb{I}\big\{R_{1}+R_{2}\in \left[\min(\eta_{\tau},\eta),\max(\eta_{\tau},\eta)\right)\big\}\right\|_{P,2}\\
        &\leq \left\|\frac{\pi_1(A_1|H_1)\pi_2(A_2|H_2)}{\widehat{b}_1(A_{1}|H_{1})\widehat{b}_2(A_{2}|H_{2})}\right\|_{P,2}\cdot \sup_{\eta\in \mathcal{N}_{\tau_n}(\eta_{\tau})}\left(\mathbb{E}\left|\mathbb{I}\big\{R^{*}({\pi})\in \left[\min(\eta_{\tau},\eta),\max(\eta_{\tau},\eta)\right]\big\}\right|^2\right)^{\frac{1}{2}}\\
        &=\left\|\frac{\pi_1(A_1|H_1)\pi_2(A_2|H_2)}{\widehat{b}_1(A_{1}|H_{1})\widehat{b}_2(A_{2}|H_{2})}\right\|_{P,2}\cdot \sup_{\eta\in \mathcal{N}_{\tau_n}(\eta_{\tau})}\left(\mathbb{E}\Big[\mathbb{I}\big\{R^{*}({\pi})\in [\min(\eta_{\tau},\eta),\max(\eta_{\tau},\eta)]\big\}\Big]\right)^{\frac{1}{2}}\\
        &\leq \frac{1}{\epsilon^2}  \cdot \sup_{\eta\in \mathcal{N}_{\tau_n}(\eta_{\tau})}\left|F_{R^{*}({\pi})}(\eta)-F_{R^{*}({\pi})}({\eta}_{\tau})\right|^{\frac{1}{2}}\\
        &=\frac{1}{\epsilon^2}  \cdot \sup_{\eta\in \mathcal{N}_{\tau_n}(\eta_{\tau})}\left|\mathbb{E}_{H_1\sim \mathbb{G}}\left[F_{R^{*}(\pi)|H_1}(\eta|H_1)-F_{R^{*}(\pi)|H_1}({\eta}_{\tau}|H_1)\right]\right|^{\frac{1}{2}},
    \end{aligned}
\end{equation}
where $F_{{R}^{*}(\pi)}(\eta)$ is the marginal cumulative density function of random variable $R^{*}(\pi)=R_1^{*}(\pi)+R_2^{*}(\pi)$. Notice that the first inequality in (\ref{eq:48}) holds because of Cauchy-Schwarz inequality, and the second inequality holds by Assumption (A5).

According to Assumption (A3), there exists a constant $C_3$, such that for all $r\in\mathbb{R}$ and $H_1$, $\left|f_{R^{*}(\pi)|H_1}(r|H_1)\right|\leq C_3$. Thus,
\begin{equation}\label{eq:thm.3}
        \left|F_{R^{*}(\pi)|H_1}(\eta|H_1)-F_{R^{*}(\pi)|H_1}({\eta}_{\tau}|H_1)\right|\leq C_3 \cdot \left\|\eta-\eta_{\tau}\right\|.
\end{equation}
Then we have
\begin{equation}
    \begin{aligned}
        &\sup_{\eta\in \mathcal{N}_{\tau_n}(\eta_{\tau})}\left\|a(\eta,\hat{\alpha})-a({\eta}_{\tau},\hat{\alpha})\right\|_{P,2}\\
        &\leq\frac{1}{\epsilon^2}  \cdot \sup_{\eta\in \mathcal{N}_{\tau_n}(\eta_{\tau})}\left|\mathbb{E}_{H_1\sim \mathbb{G}}\left[F_{R^{*}(\pi)|H_1}(\eta|H_1)-F_{R^{*}(\pi)|H_1}({\eta}_{\tau}|H_1)\right]\right|^{\frac{1}{2}}\\
        &\leq \frac{1}{\epsilon^2}  \cdot \sup_{\eta\in \mathcal{N}_{\tau_n}(\eta_{\tau})}\left(C_3\cdot \left\|\eta-\eta_{\tau}\right\|\right)^{\frac{1}{2}} =\frac{1}{\epsilon^2}  \cdot o(1)=o(1),
    \end{aligned}
\end{equation}
which finishes the proof of this Step 2.1.a.

\noindent\textbf{Step 2.1.b}: Prove that $\sup_{\eta\in \mathcal{N}_{\tau_n}(\eta_{\tau})}\left\|b(\eta,\hat{\alpha})-b({\eta}_{\tau},\hat{\alpha})\right\|_{P,2}=o(1)$.
\begin{equation}\label{eq:thm.5}
    \begin{aligned}
       &\sup_{\eta\in \mathcal{N}_{\tau_n}(\eta_{\tau})}\left\|b(\eta,\hat{\alpha})-b({\eta}_{\tau},\hat{\alpha})\right\|_{P,2}\\
       =&\sup_{\eta\in \mathcal{N}_{\tau_n}(\eta_{\tau})}\left\|\left(1-\frac{\pi_1(A_{1,i}|H_{1,i})}{\widehat{b}_1(A_{1,i}|H_{1,i})}\right)\widehat{\mathbb{E}}[(\mathbb{I}\{\widehat{R}^*_{1}(\pi)+\widehat{R}^*_{2}(\pi)<\eta\}-\mathbb{I}\{\widehat{R}^*_{1}(\pi)+\widehat{R}^*_{2}(\pi)<{\eta}_{\tau}\})\big| H_1]\right\|_{P,2}\\
       =&\sup_{\eta\in \mathcal{N}_{\tau_n}(\eta_{\tau})}\left\|\left(1-\frac{\pi_1(A_{1,i}|H_{1,i})}{\widehat{b}_1(A_{1,i}|H_{1,i})}\right)\cdot\left(\widehat{F}_{R^*(\pi)|H_1}(\eta;\hat{\alpha}|H_1)-\widehat{F}_{R^*(\pi)|H_1}({\eta}_{\tau};\hat{\alpha}|H_1)\right)\right\|_{P,2}\\
       &\leq \left\|1-\frac{\pi_1(A_{1,i}|H_{1,i})}{\widehat{b}_1(A_{1,i}|H_{1,i})}\right\|_{P,2}\cdot \sup_{\eta\in \mathcal{N}_{\tau_n}(\eta_{\tau})}\left\|\widehat{F}_{R^*(\pi)|H_1}(\eta;\hat{\alpha}|H_1)-\widehat{F}_{R^*(\pi)|H_1}({\eta}_{\tau};\hat{\alpha}|H_1)\right\|_{P,2}\\
       &\leq \frac{2}{\epsilon}\cdot \sup_{\eta\in \mathcal{N}_{\tau_n}(\eta_{\tau})}\left\|\widehat{F}_{R^*(\pi)|H_1}(\eta;\hat{\alpha}|H_1)-\widehat{F}_{R^*(\pi)|H_1}({\eta}_{\tau};\hat{\alpha}|H_1)\right\|_{P,2},
    \end{aligned}
\end{equation}
where $\widehat{F}_{R^*(\pi)}(r;\hat{\alpha}|H_1)$ is the estimated cdf of $(\widehat{R}_1+\widehat{R}_2)$ given baseline information $H_1$ and policy $\pi$. Also, we have
\begin{eqnarray}\label{eq:thm.2}
   \begin{aligned}
    &\left\|\widehat{F}_{R^*(\pi)|H_1}(\eta;\hat{\alpha}|H_1)-\widehat{F}_{R^*(\pi)|H_1}({\eta}_\tau;\hat{\alpha}|H_1)\right\|_{P,2} \leq \left\|\widehat{F}_{R^*(\pi)|H_1}(\eta;\hat{\alpha}|H_1)-{F}_{R^*(\pi)|H_1}(\eta|H_1)\right\|_{P,2}\\
    &+\left\|\widehat{F}_{R^*(\pi)|H_1}({\eta}_\tau;\hat{\alpha}|H_1)-{F}_{R^*(\pi)|H_1}({\eta}_\tau|H_1)\right\|_{P,2}
    +\left\|{F}_{R^*(\pi)|H_1}(\eta|H_1)-{F}_{R^*(\pi)|H_1}({\eta}_\tau|H_1)\right\|_{P,2}.
\end{aligned}
\end{eqnarray}
Now let's analyze the three terms on the RHS of formula (\ref{eq:thm.2}).

\noindent On the one hand, according to Assumption (A2), for any $\eta$,
\begin{equation}\label{eq:49}
    \left\|\widehat{F}_{R^*(\pi)|H_1}(\eta;\hat{\alpha}|H_1)-{F}_{R^*(\pi)|H_1}(\eta|H_1)\right\|_{P,2}\leq \left\|\delta\left(\widehat{F}_{{R}^{*}(\pi)|H_1},F_{{R}^{*}(\pi)|H_1}\right)\right\|_{P,2}=o(n^{-1/4}).
\end{equation}
The first and second terms in (\ref{eq:thm.2}) can thus be bounded.

\noindent On the other hand, by assuming (A3) and utilizing the same proof as in formula (\ref{eq:thm.3}), 
\begin{equation}\label{eq:50}
    \sup_{\eta\in \mathcal{N}_{\tau_n}(\eta_{\tau})}\left\|{F}_{R^*(\pi)|H_1}(\eta|H_1)-{F}_{R^*(\pi)|H_1}({\eta}_\tau|H_1)\right\|_{P,2}=o(1).
\end{equation}
Combining the result of (\ref{eq:49}), (\ref{eq:50}) with formula (\ref{eq:thm.2}), we have
\begin{equation*}
\begin{aligned}
    &\sup_{\eta\in \mathcal{N}_{\tau_n}(\eta_{\tau})}\left\|b(\eta,\hat{\alpha})-b({\eta}_{\tau},\hat{\alpha})\right\|_{P,2}\\
    &\leq\frac{2}{\epsilon}\cdot \sup_{\eta\in \mathcal{N}_{\tau_n}(\eta_{\tau})}\left\|\widehat{F}_{R^*(\pi)|H_1}(\eta;\hat{\alpha}|H_1)-\widehat{F}_{R^*(\pi)|H_1}({\eta}_{\tau};\hat{\alpha}|H_1)\right\|_{P,2}
    =\frac{2}{\epsilon}\cdot o(1)=o(1).
    \end{aligned}
\end{equation*}
The proof of Step 2.1.b is thus complete.

\noindent\textbf{Step 2.1.c}: Prove that $\sup_{\eta\in \mathcal{N}_{\tau_n}(\eta_{\tau})}\left\|c(\eta,\hat{\alpha})-c({\eta}_{\tau},\hat{\alpha})\right\|_{P,2}=o(1)$.

Similar to Step 2.1.b, it follows from Condition (C3$'$) that
\begin{equation}\label{eq:51}
    \begin{aligned}
      & \sup_{\eta\in \mathcal{N}_{\tau_n}(\eta_{\tau})}\left\|c(\eta,\hat{\alpha})-c({\eta}_{\tau},\hat{\alpha})\right\|_{P,2}\\
      &=\sup_{\eta\in \mathcal{N}_{\tau_n}(\eta_{\tau})}\left\|a_2(\hat{\alpha})\cdot \widehat{\mathbb{E}}[(\mathbb{I}\{{R}_{1}+\widehat{R}_{2}^*(\pi)<\eta\}-\mathbb{I}\{{R}_{1}+\widehat{R}_{2}^*(\pi)<{\eta}_{\tau}\})\big| H_{2,i}]\right\|_{P,2}\\
      &=\sup_{\eta\in \mathcal{N}_{\tau_n}(\eta_{\tau})}\left\|a_2(\hat{\alpha})\cdot \widehat{\mathbb{E}}\left[\mathbb{I}\big\{{R}_{1}+\widehat{R}_{2}^*(\pi)\in \left[\min(\eta_{\tau},\eta),\max(\eta_{\tau},\eta)\right]\big\}\big| H_{2,i}\right]\right\|_{P,2}\\
      &\leq \left\|a_2(\hat{\alpha})\right\|_{P,2}\cdot \sup_{\eta\in \mathcal{N}_{\tau_n}(\eta_{\tau})}\left\| \widehat{F}_{{R}_{2}^*(\pi)|H_2}(\eta-R_{1};\hat{\alpha}|H_{2,i})-\widehat{F}_{{R}_{2}^*(\pi)|H_2}({\eta}_{\tau}-R_{1};\hat{\alpha}|H_{2,i})\right\|_{P,2},
    \end{aligned}
\end{equation}
where $\widehat{F}_{{R}_{2}^*(\pi)|H_2}(r;\hat{\alpha}|H_{2,i})$ is the estimated cdf of $\widehat{R}_2$ given historical data $H_2$ and policy $\pi$. 
We know that
\begin{equation}
    \begin{aligned}
      \left\|a_2(\hat{\alpha})\right\|_{P,2}=\left\|\left(\frac{\pi_1(A_{1,i}|H_{1,i})}{\widehat{b}_1(A_{1,i}|H_{1,i})}\right)\left(1-\frac{\pi_2(A_{2,i}|H_{2,i})}{\widehat{b}_2(A_{2,i}|H_{2,i})}\right)\right\|_{P,2}\leq \frac{2}{\epsilon^2}.
    \end{aligned}
\end{equation}
Also, 
\begin{equation}\label{eq:52}
\begin{aligned}
    &\left\| \widehat{F}_{{R}_{2}^*(\pi)|H_2}(\eta-R_{1};\hat{\alpha}|H_{2,i})-\widehat{F}_{{R}_{2}^*(\pi)|H_2}({\eta}_{\tau}-R_{1};\hat{\alpha}|H_{2,i})\right\|_{P,2}\\
    &\leq \left\|\widehat{F}_{{R}_{2}^*(\pi)|H_2}(\eta-R_{1};\hat{\alpha}|H_{2,i})-{F}_{{R}_{2}^*(\pi)|H_2}(\eta-R_{1}|H_{2,i})\right\|_{P,2}\\
    &+\left\|\widehat{F}_{{R}_{2}^*(\pi)|H_2}({\eta}_{\tau}-R_{1};\hat{\alpha}|H_{2,i})-{F}_{{R}_{2}^*(\pi)|H_2}({\eta}_{\tau}-R_{1}|H_{2,i})\right\|_{P,2}\\
    &+\left\|{F}_{{R}_{2}^*(\pi)|H_2}(\eta-R_{1}|H_{2,i})-{F}_{{R}_{2}^*(\pi)|H_2}({\eta}_{\tau}-R_{1}|H_{2,i})\right\|_{P,2},
\end{aligned}
\end{equation}
where according to Assumption (A2), it holds for any $\eta$ that
\begin{equation}\label{eq:53}
\begin{aligned}
   \left\|\widehat{F}_{{R}_{2}^*(\pi)|H_2}(\eta-R_{1};\hat{\alpha}|H_{2,i})-{F}_{{R}_{2}^*(\pi)|H_2}(\eta-R_{1}|H_{2,i})\right\|_{P,2}\leq \left\|\delta\left(\widehat{F}_{{R}_2^{*}(\pi)|H_2},F_{{R}_2^{*}(\pi)|H_2}\right)\right\|_{P,2}=o_p(n^{-1/4}),   
\end{aligned}
\end{equation}
and by using Assumption (A3) and the proof in formula (\ref{eq:thm.3}),
\begin{equation}\label{eq:54}
    \sup_{\eta\in \mathcal{N}_{\tau_n}(\eta_{\tau})}\left\|{F}_{{R}_{2}^*(\pi)|H_2}(\eta-R_{1}|H_{2,i})-{F}_{{R}_{2}^*(\pi)|H_2}({\eta}_{\tau}-R_{1}|H_{2,i})\right\|_{P,2}=o(1).
\end{equation}

By summarizing the result of formula (\ref{eq:53}) and (\ref{eq:54}), the LHS of formula (\ref{eq:52}) is thus $o_p(1)$. Therefore, we have
\begin{equation}
\begin{aligned}
    &\sup_{\eta\in \mathcal{N}_{\tau_n}(\eta_{\tau})}\left\|c(\eta,\hat{\alpha})-c({\eta}_{\tau},\hat{\alpha})\right\|_{P,2}\\
    &\leq \left\|a_2(\hat{\alpha})\right\|_{P,2}\cdot \sup_{\eta\in \mathcal{N}_{\tau_n}(\eta_{\tau})}\left\| \widehat{F}_{{R}_{2}^*(\pi)|H_2}(\eta-R_{1};\hat{\alpha}|H_{2,i})-\widehat{F}_{{R}_{2}^*(\pi)|H_2}({\eta}_{\tau}-R_{1};\hat{\alpha}|H_{2,i})\right\|_{P,2}\\
    &\leq \frac{2}{\epsilon^2}\cdot o(1)=o(1).  
\end{aligned}
\end{equation}

Combining the results in Step 2.1.a-2.1.c, we have
\[
\sup_{\eta\in\mathcal{N}_{\tau_n}(\eta_\tau)}\left\|\mathbb{G}_n[\psi(W;{\eta},\hat{\alpha})-\psi(W;{\eta}_{\tau},\hat{\alpha})]\right\|=o_p(1).
\]
The claim of step 2.1 thus follows.

\noindent\textbf{Step 2.2}: Prove that $\left\|\mathbb{G}_n[\psi(W;{\eta}_{\tau},\hat{\alpha})-\psi^*(W;{\eta}_{\tau})]\right\|=o_p(1)$.

Notice that under each Fold $s$, $\hat{\alpha}$ is independent with $W$. By Chebyshev's Inequality, it suffices to show that
\begin{equation}
    \text{Var}[\mathbb{G}_n[\psi(W;{\eta}_{\tau},\hat{\alpha})-\psi^*(W;{\eta}_{\tau})]]=\text{Var}[\psi(W;{\eta}_{\tau},\hat{\alpha})-\psi^*(W;{\eta}_{\tau})]=o(1).
\end{equation}
According to our definition in formula (\ref{eq:thm.4}), it follows from triangular inequality that
\begin{equation}\label{eq:65}
    \begin{aligned}
      &(\text{Var}[\psi(W;{\eta}_{\tau},\hat{\alpha})-\psi^*(W;{\eta}_{\tau})])^{1/2}\leq (\text{Var}[a({\eta}_{\tau},\hat{\alpha})-a^*({\eta}_{\tau})])^{1/2}\\
      &+(\text{Var}[b({\eta}_{\tau},\hat{\alpha})-b^*({\eta}_{\tau})])^{1/2}+(\text{Var}[c({\eta}_{\tau},\hat{\alpha})-c^*({\eta}_{\tau})])^{1/2},
    \end{aligned}
\end{equation}
in order to show the LHS is $o_p(1)$, we only need to prove that the three parts on the RHS are all $o_p(1)$.
\begin{equation}
\small
    \begin{aligned}
      &(\text{Var}[a({\eta}_{\tau},\hat{\alpha})-a^*({\eta}_{\tau})])^{1/2}\leq \left\|a({\eta}_{\tau},\hat{\alpha})-a^*({\eta}_{\tau})\right\|_{P,2}\\ &=\left\|\left(\frac{\pi_1(A_{1}|H_{1})\pi_2(A_{2}|H_{2})}{\widehat{b}_1(A_{1}|H_{1})\widehat{b}_2(A_{2}|H_{2})}-\frac{\pi_1(A_{1}|H_{1})\pi_2(A_{2}|H_{2})}{{b}_1(A_{1}|H_{1}){b}_2(A_{2}|H_{2})}\right)(\mathbb{I}\{R_{1}+{R}_{2}<{\eta}_{\tau}\}-\tau)\right\|_{P,2}\\
      &\leq \frac{1}{\epsilon^4}\left\|\widehat{b}_1(A_{1}|H_{1})\widehat{b}_2(A_{2}|H_{2})-{b}_1(A_{1}|H_{1}){b}_2(A_{2}|H_{2})\right\|_{P,2}\cdot \left\|\mathbb{I}\{R_{1}+{R}_{2}<{\eta}_{\tau}\}-\tau \right\|_{P,2}\\
      &\leq \frac{2}{\epsilon^4}\left\|\widehat{b}_1(A_{1}|H_{1})\widehat{b}_2(A_{2}|H_{2})-{b}_1(A_{1}|H_{1}){b}_2(A_{2}|H_{2}) \right\|_{P,2}\\
      &\leq\frac{2}{\epsilon^4}\left[\left\|\widehat{b}_1(A_{1}|H_{1})\Big\{\widehat{b}_2(A_{2}|H_{2})-{b}_2(A_{2}|H_{2})\Big\}\right\|_{P,2}+\left\|{b}_2(A_{2}|H_{2})\Big\{\widehat{b}_1(A_{1}|H_{1})-{b}_1(A_{1}|H_{1})\Big\}\right\|_{P,2}\right]\\
      &\leq \frac{2}{\epsilon^{5}}\left[\left\|\widehat{b}_1(A_{1}|H_{1})-{b}_1(A_{1}|H_{1})\right\|_{P,2}+\left\|\widehat{b}_2(A_{2}|H_{2})-{b}_2(A_{2}|H_{2})\right\|_{P,2}\right]=o(1),
    \end{aligned}
\end{equation}
where we used Condition (C3$'$), Assumption (A5), triangle inequality, and last equality holds by Assumption (A1). 

Likewise,
\begin{equation}\label{eq:56}
\small
    \begin{aligned}
      &(\text{Var}[b({\eta}_{\tau},\hat{\alpha})-b^*({\eta}_{\tau})])^{1/2}\leq \left\|b({\eta}_{\tau},\hat{\alpha})-b^*({\eta}_{\tau})\right\|_{P,2}\\
      &= \left\| a_1(\hat{\alpha})\cdot \widehat{\mathbb{E}}[(\mathbb{I}\{\widehat{R}_{1}^*(\pi)+\widehat{R}_{2}^*(\pi)<\eta_\tau\}-\tau)|H_{1}]-a_1^*\cdot \widehat{\mathbb{E}}[(\mathbb{I}\{{R}_{1}^*(\pi)+{R}_{2}^*(\pi)<\eta_\tau\}-\tau)|H_{1}]\right\|_{P,2}\\
      &\leq \left\| a_1(\hat{\alpha})\cdot\Big\{ \widehat{\mathbb{E}}[\mathbb{I}\{\widehat{R}_{1}^*(\pi)+\widehat{R}_{2}^*(\pi)<\eta_\tau\}-\mathbb{I}\{{R}_{1}^*(\pi)+{R}_{2}^*(\pi)<\eta_\tau\}|H_{1}]\Big\}\right\|_{P,2}\\
      &+\left\| \Big\{a_1(\hat{\alpha})-a_1^*\Big\}\cdot \widehat{\mathbb{E}}[(\mathbb{I}\{{R}_{1}^*(\pi)+{R}_{2}^*(\pi)<\eta_\tau\}-\tau)|H_{1}]\right\|_{P,2}\\
      &\leq\left\|\frac{\pi_1(A_{1,i}|H_{1,i})}{\widehat{b}_1(A_{1,i}|H_{1,i})}\right\|_{P,2}\cdot \left\|\widehat{\mathbb{E}}[\mathbb{I}\{\widehat{R}_{1}^*(\pi)+\widehat{R}_{2}^*(\pi)<\eta_\tau\}-\mathbb{I}\{{R}_{1}^*(\pi)+{R}_{2}^*(\pi)<\eta_\tau\}|H_{1}]\right\|_{P,2}^2\\
      & +\left\|\frac{\pi_1(A_{1,i}|H_{1,i})}{\widehat{b}_1(A_{1,i}|H_{1,i})}-\frac{\pi_1(A_{1,i}|H_{1,i})}{{b}_1(A_{1,i}|H_{1,i})}\right\|_{P,2}\cdot \left\|\widehat{\mathbb{E}}[(\mathbb{I}\{{R}_{1}^*(\pi)+{R}_{2}^*(\pi)<\eta_\tau\}-\tau)|H_{1}]\right\|_{P,2}\\
      &\leq \frac{2}{\epsilon}\cdot \left\|\widehat{F}_{R^*(\pi)|H_1}(\eta_\tau;\hat{\alpha}|H_1)-{F}_{R^*(\pi)|H_1}(\eta_\tau|H_1)\right\|_{P,2}+\frac{2}{\epsilon^2}\left\|{\widehat{b}_1(A_{1,i}|H_{1,i})}-{{b}_1(A_{1,i}|H_{1,i})}\right\|_{P,2}=o(1).
    \end{aligned}
\end{equation}
Under Assumption (A1), (A2) and (A5), the last equality in Formula (\ref{eq:56}) holds because
\begin{equation*}
    \begin{aligned}
      & \left\|\widehat{F}_{R^*(\pi)|H_1}(\eta_\tau;\hat{\alpha}|H_1)-{F}_{R^*(\pi)|H_1}(\eta_\tau|H_1)\right\|_{P,2}\leq \left\|\delta\left(\widehat{F}_{{R}^{*}(\pi)|H_1},F_{{R}^{*}(\pi)|H_1}\right)\right\|_{P,2}=o(n^{-1/4}).
    \end{aligned}
\end{equation*}
Similarly,
\begin{equation}
\small
    \begin{aligned}
      &(\text{Var}[c({\eta}_{\tau},\hat{\alpha})-c^*({\eta}_{\tau})])^{1/2}\leq \left\|c({\eta}_{\tau},\hat{\alpha})-c^*({\eta}_{\tau})\right\|_{P,2}\\
      &= \left\| a_2(\hat{\alpha})\cdot \widehat{\mathbb{E}}[(\mathbb{I}\{{R}_{1}+\widehat{R}_{2}^*(\pi)<\eta_\tau\}-\tau)|H_{2}]-a_2^*\cdot \widehat{\mathbb{E}}[(\mathbb{I}\{{R}_{1}+{R}_{2}^*(\pi)<\eta_\tau\}-\tau)|H_{2}]\right\|_{P,2}\\
      &\leq \left\| a_2(\hat{\alpha})\cdot\Big\{ \widehat{\mathbb{E}}[\mathbb{I}\{{R}_{1}+\widehat{R}_{2}^*(\pi)<\eta_\tau\}-\mathbb{I}\{{R}_{1}+{R}_{2}^*(\pi)<\eta_\tau\}|H_{2}]\Big\}\right\|_{P,2}\\
      &+\left\| \Big\{a_2(\hat{\alpha})-a_2^*\Big\}\cdot \widehat{\mathbb{E}}[(\mathbb{I}\{{R}_{1}+{R}_{2}^*(\pi)<\eta_\tau\}-\tau)|H_{2}]\right\|_{P,2}\\
      &\leq \frac{2}{\epsilon^2}\cdot \left\|\widehat{\mathbb{E}}[\mathbb{I}\{{R}_{1}+\widehat{R}_{2}^*(\pi)<\eta_\tau\}-\mathbb{I}\{{R}_{1}+{R}_{2}^*(\pi)<\eta_\tau\}|H_{2}]\right\|_{P,2}\\
      & +\frac{4}{\epsilon^3}\cdot\Big\{\left\|\widehat{b}_1(A_{1}|H_{1})-{b}_1(A_{1}|H_{1})\right\|_{P,2}+\left\|\widehat{b}_2(A_{2}|H_{2})-{b}_2(A_{2}|H_{2})\right\|_{P,2}\Big\}\\
      &\leq \frac{2}{\epsilon^2}\cdot \left\|\widehat{F}_{R_2^*(\pi)|H_2}(\eta_\tau-R_1;\hat{\alpha}|H_2)-{F}_{R_2^*(\pi)|H_2}(\eta_\tau-R_1|H_2)\right\|_{P,2}+\frac{4}{\epsilon^3}\cdot o(n^{-1/4})\\
      &\leq \frac{2}{\epsilon^2}\cdot\left\|\delta\left(\widehat{F}_{{R}_2^{*}(\pi)|H_2},F_{{R}_2^{*}(\pi)|H_2}\right)\right\|_{P,2}+o(n^{-1/4})=o(1),
    \end{aligned}
\end{equation}
where we used Assumption (A1), (A2), (A5) and Positivity assumption (C3$'$).

The proof of Step 2.2 is thus complete.

\noindent\textbf{Step 3}: Prove that $\left\|\widehat{\eta}_{\tau}^{\text{DR}}-\eta_{\tau}\right\|= O_p(n^{-1/2})$.

Similar to $\psi(W_i;\eta,\alpha)$, we split the original optimization problem $\Psi(W_i;\eta,\alpha)$ into three parts.
\begin{equation}
    \begin{aligned}
      \Psi(W_i;\eta,\hat{\alpha}):=A(\eta,\hat{\alpha})+B(\eta,\hat{\alpha})+C(\eta,\hat{\alpha}),
    \end{aligned}
\end{equation}
where 
\begin{equation}\label{eq:thm.6}
\begin{aligned}
    A(\eta,\hat{\alpha}) &=\frac{\pi_1(A_{1}|H_{1})\pi_2(A_{2}|H_{2})}{\widehat{b}_1(A_{1}|H_{1})\widehat{b}_2(A_{2}|H_{2})}\rho_\tau(R_{1}+{R}_{2}-\eta), \\
    B(\eta,\hat{\alpha}) &=a_1(\hat{\alpha})\cdot \widehat{\mathbb{E}}[\rho_\tau(\widehat{R}_{1}+\widehat{R}_{2}-\eta)|X_{1},(A_{1},A_{2})\sim \pi], \\
    C(\eta,\hat{\alpha}) &=a_2(\hat{\alpha})\cdot \widehat{\mathbb{E}}[\rho_\tau({R}_{1}+\widehat{R}_{2}-\eta)|H_{2},A_{2}\sim \pi],
\end{aligned}
\end{equation}
and $a_1(\hat{\alpha})$, $a_2(\hat{\alpha})$ are defined in formula (\ref{eq:13}). Specifically, when the nuisance parameters $\hat{\alpha}$ is replaced by the true values, we let $\Psi^*(W_i;\eta):=A^*(\eta)+B^*(\eta)+C^*(\eta)$ in the same manner. 

Furthermore, we denote $M_n(\eta,\hat{\alpha})=\mathbb{E}_n[\Psi(W;\eta,\hat{\alpha})]$ as the empirical average of our optimization function, and $M(\eta,\hat{\alpha})=\mathbb{E}[\Psi(W;\eta,\hat{\alpha})]$. When the models with nuisance parameters are correctly specified, we define $M_n^*(\eta)=\mathbb{E}_n[\Psi^*(W;\eta)]$ and $M^*(\eta)=\mathbb{E}[\Psi^*(W;\eta)]$.

According to Theorem 3.2.5 of \cite{van1996weak}, to finish the proof of this step, it suffices to show that for every $n$ and a sufficiently small $\delta$, the following two conditions hold:
\begin{enumerate}
    \item $\mathbb{E}\left[\sup_{\left\|\eta-\eta_{\tau}\right\|<\delta}\left|M_n(\eta,\hat{\alpha})-M^*(\eta)-M_n(\eta_{\tau},\hat{\alpha})+M^*(\eta_{\tau})\right|\right]\leq\frac{{\delta}}{\sqrt{n}}$;
    \item There exists a constant $C_4$, such that $M^*(\eta)-M^*(\eta_{\tau})\geq C_4\cdot\left\|\eta-\eta_{\tau}\right\|^2$.
\end{enumerate}

*Note: This is an extension to the original proof of Theorem 3.2.5. The result of Theorem 3.2.5 doesn't change after involving nuisance parameter $\hat{\alpha}$ in this case. Moreover, notice that the two conditions listed above hold for any $s\in\{1,\dots,S\}$, and we omitted the subscript $s$ to avoid redundancy.

\noindent\textbf{Step 3.1}: Prove $\mathbb{E}\left[\sup_{\left\|\eta-\eta_{\tau}\right\|<\delta}\left|M_n(\eta,\hat{\alpha})-M^*(\eta)-M_n(\eta_{\tau},\hat{\alpha})+M^*(\eta_{\tau})\right|\right]\leq\frac{{\delta}}{\sqrt{n}}$.

Since
\begin{equation}\label{eq:59}
    \begin{aligned}
        & \mathbb{E}\left[\sup_{\left\|\eta-\eta_{\tau}\right\|<\delta}\left|M_n(\eta,\hat{\alpha})-M^*(\eta)-M_n(\eta_{\tau},\hat{\alpha})+M^*(\eta_{\tau})\right|\right]\\
        &\leq \mathbb{E}\left[\sup_{\left\|\eta-\eta_{\tau}\right\|<\delta}\left|M_n(\eta,\hat{\alpha})-M(\eta,\hat{\alpha})-M_n(\eta_{\tau},\hat{\alpha})+M(\eta_{\tau},\hat{\alpha})\right|\right]\\
        &+ \sup_{\left\|\eta-\eta_{\tau}\right\|<\delta}\left|\left(M(\eta,\hat{\alpha})-M(\eta_{\tau},\hat{\alpha})\right)-\left(M^*(\eta)-M^*(\eta_{\tau})\right)\right|,
    \end{aligned}
\end{equation}
to finish the proof of this step, it suffices to show that the first term on the RHS of Formula (\ref{eq:59}) is less than or equal to $\frac{\delta}{\sqrt{n}}$, and the second term is actually $o(n^{-\frac{1}{2}}\left\|\eta-\eta_{\tau}\right\|)$. We will prove them in {Step 3.1.a} and {Step 3.1.b} accordingly.

\noindent\textbf{Step 3.1.a}: Prove that
$\mathbb{E}\left[\sup_{\left\|\eta-\eta_{\tau}\right\|<\delta}\left|M_n(\eta,\hat{\alpha})-M(\eta,\hat{\alpha})-M_n(\eta_{\tau},\hat{\alpha})+M(\eta_{\tau},\hat{\alpha})\right|\right]\leq\frac{\delta}{\sqrt{n}}$.

 Thus, it is equivalent to proving that 
\begin{equation}
    \begin{aligned}
      \mathbb{E}\left[\sup_{\eta}\left|M_n(\eta,\hat{\alpha})-M(\eta,\hat{\alpha})-M_n(\eta_{\tau},\hat{\alpha})+M(\eta_{\tau},\hat{\alpha})\right|\right]\leq O(n^{-1/2}\left\|\widehat{\eta}_{\tau}^{\text{DR}}-\eta_{\tau}\right\|).
    \end{aligned}
\end{equation}

Let $\mathcal{F}_{\delta}=\{\Psi(W;{\eta},\hat{\alpha})-\Psi(W;{\eta}_{\tau},\hat{\alpha}):\left\|\eta-\eta_{\tau}\right\|\leq\delta\}$,and  $\mathcal{F}_\eta=\{\Psi(W;{\eta},\hat{\alpha}):\left\|\eta-\eta_{\tau}\right\|\leq\delta\}$. 

We claim that for any $\eta_1$ and $\eta_2$ satisfying $\left\|\eta-\eta_{\tau}\right\|\leq \delta$, there exist a square-integrable function $M(W;\hat{\alpha})$, such that
\begin{equation}\label{eq:62}
    \left|\Psi(W;\eta_1,\hat{\alpha})-\Psi(W;\eta_2,\hat{\alpha})\right|\leq M(W;\hat{\alpha})\left\|\eta_1-\eta_2\right\|.
\end{equation}

To prove this claim, we split formula (\ref{eq:62}) into three parts:
\begin{equation}
    \begin{aligned}
      &\left|\Psi(W;\eta_1,\hat{\alpha})-\Psi(W;\eta_2,\hat{\alpha})\right|\leq\left|A(W;\eta_1,\hat{\alpha})-A(W;\eta_2,\hat{\alpha})\right|\\
      &\qquad +\left|B(W;\eta_1,\hat{\alpha})-B(W;\eta_2,\hat{\alpha})\right|+\left|C(W;\eta_1,\hat{\alpha})-C(W;\eta_2,\hat{\alpha})\right|.
    \end{aligned}
\end{equation}
Without the loss of generality, we assume that $\eta_1<\eta_2$. Recall that ${R}^*({\pi})$ is the conditional cumulative reward given $H_{1}$. Since
\begin{equation}
\small
    ({R}^*(\pi)-\eta_2)\mathbb{I}\{{R}^*(\pi)<\eta_2\}-({R}^*(\pi)-\eta_1)\mathbb{I}\{{R}^*(\pi)<\eta_1\}=\left\{\begin{matrix}\eta_1-\eta_2, & {R}^*(\pi)<\eta_1\\ {R}^*(\pi)-\eta_2, & \eta_1<{R}^*(\pi)<\eta_2\\ 0, & {R}^*(\pi)>\eta_2\end{matrix}\right.
\end{equation}
we have $\left|({R}^*(\pi)-\eta_2)\mathbb{I}\{{R}^*(\pi)<\eta_2\}-({R}^*(\pi)-\eta_1)\mathbb{I}\{{R}^*(\pi)<\eta_1\}\right|\leq \left\|\eta_2-\eta_1\right\|$. 

Thus,
\begin{equation}\label{eq:63}
    \begin{aligned}
      &\left|A(W;\eta_1,\hat{\alpha})-A(W;\eta_2,\hat{\alpha})\right|\\
      &=\left|\frac{\pi_1(A_{1}|H_{1})\pi_2(A_{2}|H_{2})}{\widehat{b}_1(A_{1}|H_{1})\widehat{b}_2(A_{2}|H_{2})}\left(\rho_{\tau}(R_{1}+{R}_{2}-\eta_1)-\rho_{\tau}(R_{1}+{R}_{2}-\eta_2)\right)\right|\\ 
      &\leq \frac{1}{\epsilon^2}\cdot\left|\rho_{\tau}({R}^*(\pi)-\eta_1)-\rho_{\tau}({R}^*(\pi)-\eta_2)\right|\\
      &\leq\frac{1}{\epsilon^2}\cdot\Big\{\left|({R}^*(\pi)-\eta_2)\mathbb{I}\{{R}^*(\pi)<\eta_2\}-({R}^*(\pi)-\eta_1)\mathbb{I}\{{R}^*(\pi)<\eta_1\}\right|+\tau\left\|\eta_2-\eta_1\right\|\Big\}\\
      &\leq \frac{\tau+1}{\epsilon^2}\cdot\left\|\eta_2-\eta_1\right\|.
    \end{aligned}
\end{equation}
Similarly,
\begin{equation}\label{eq:64}
    \begin{aligned}
      &\left|B(W;\eta_1,\hat{\alpha})-B(W;\eta_2,\hat{\alpha})\right|\\
      &=\left|a_1(\hat{\alpha})\widehat{\mathbb{E}}\left[\rho_{\tau}(\widehat{R}_{1}+\widehat{R}_{2}-\eta_1)-\rho_{\tau}(\widehat{R}_{1}+\widehat{R}_{2}-\eta_2)\big| H_1\right]\right|\\
      &\leq\frac{2}{\epsilon}\cdot\widehat{\mathbb{E}}\left[\left|\rho_{\tau}(\widehat{R}_{1}+\widehat{R}_{2}-\eta_1)-\rho_{\tau}(\widehat{R}_{1}+\widehat{R}_{2}-\eta_2)\right|\big|H_1\right]\leq \frac{2(\tau+1)}{\epsilon}\cdot\left\|\eta_2-\eta_1\right\|.
    \end{aligned}
\end{equation}
and
\begin{equation}\label{eq:655}
    \begin{aligned}
      &\left|C(W;\eta_1,\hat{\alpha})-C(W;\eta_2,\hat{\alpha})\right|\\
      &=\left|a_2(\hat{\alpha})\cdot\widehat{\mathbb{E}}\left[\rho_{\tau}({R}_{1,i}+\widehat{R}_{2}-\eta_1)-\rho_{\tau}({R}_{1}+\widehat{R}_{2}-\eta_2)\big|H_{2}\right]\right|\\
      &\leq \frac{2}{\epsilon^2}\cdot\widehat{\mathbb{E}}\left[\left|\rho_{\tau}({R}_{1}+\widehat{R}_{2}-\eta_1)-\rho_{\tau}({R}_{1}+\widehat{R}_{2}-\eta_2)\right|\big|H_{2}\right]\leq \frac{2(\tau+1)}{\epsilon^2}\cdot\left\|\eta_2-\eta_1\right\|.
    \end{aligned}
\end{equation}

Combining the result of formula (\ref{eq:63}), (\ref{eq:64}) and (\ref{eq:655}), we can simply set  $M(W;\hat{\alpha})=5(\tau+1)/\epsilon^2$, so that the condition in Formula (\ref{eq:62}) is satisfied. More specifically, $\Psi(W;\eta,\hat{\alpha})$ is Lipschitz continuous of order 1. Therefore, $\mathcal{F}_{\delta}$ has envelope function $F:=\delta \cdot M(W;\hat{\alpha})$. Based on Theorem 2.7.11 and Lemma 2.7.8 of \citet{van1996weak}, its bracketing number satisfies
\begin{equation}
    N_{[\text{ }]}(2\epsilon\left\|F\right\|,\mathcal{F}_{\delta},\left\|\cdot\right\|)\leq N(\epsilon,\{\eta:\left\|\eta-\eta_{\tau}\right\|\leq \delta\},\left\|\cdot\right\|)\leq \frac{C\delta}{\epsilon}.
\end{equation}
Hence, according to the maximal inequality with bracketing numbers, we have
\begin{equation}
    \mathbb{E}\left[\left\|\sqrt{n}(M_n(\eta,\hat{\alpha})-M(\eta,\hat{\alpha}))\right\|_{\mathcal{F}_{\delta}}\right]\ \preceq \int_{0}^{\left\|F\right\|} \sqrt{\log N_{[\text{ }]}(\epsilon,\mathcal{F}_{\delta},\left\|\cdot\right\|)}\text{ }d\epsilon \preceq \delta.
\end{equation}
Plugging in the definition of $\|\cdot\|_{\mathcal{F}_\delta}$, it shows that
\begin{equation*}
    \mathbb{E}\left[\sup_{\left\|\eta-\eta_{\tau}\right\|<\delta}\left|M_n(\eta,\hat{\alpha})-M(\eta,\hat{\alpha})-M_n(\eta_{\tau},\hat{\alpha})+M(\eta_{\tau},\hat{\alpha})\right|\right]\leq O(n^{-1/2}\left\|\widehat{\eta}_{\tau}^{\text{DR}}-\eta_{\tau}\right\|)\leq \frac{\delta}{\sqrt{n}}.
\end{equation*}
Thus, the claim of Step 3.1.a follows.

\noindent\textbf{Step 3.1.b:} Prove $\sup_{\left\|\eta-\eta_{\tau}\right\|<\delta}\left|\left(M(\eta,\hat{\alpha})-M(\eta_{\tau},\hat{\alpha})\right)-\left(M^*(\eta)-M^*(\eta_{\tau})\right)\right|=o(n^{-\frac{1}{2}}\left\|{\eta}-\eta_{\tau}\right\|)$.

By using exactly the same strategy as deriving formula (\ref{eq:lemma2.5}) in Lemma 2, we can obtain similar results for $\Psi(W;\eta,\hat{\alpha})$:
\begin{equation}\label{eq:70}
\small
    \begin{aligned}
    &M(\eta,\hat{\alpha})-M^*(\eta)=\mathbb{E}[\Psi(W;\eta,\hat{\alpha})]-\mathbb{E}[\Psi^*(W;\eta)]\\
    =&\mathbb{E}\left[\left(\frac{\pi_1(A_{1}|H_{1})}{\widehat{b}_1(A_{1}|H_{1})}\right)\left(\frac{\pi_2(A_{2}|H_{2})}{\widehat{b}_2(A_{2}|H_{2})}-\frac{\pi_2(A_{2}|H_{2})}{{b}_2(A_{2}|H_{2})}\right) \right.\\
    &\qquad \left.\cdot \left(\widehat{\mathbb{E}}[\rho_{\tau}(R_{1}+{R}^{*}_{2}(\pi)-\eta)\big| H_{2}]-\widehat{\mathbb{E}}[\rho_{\tau}(R_{1}+\widehat{R}^{*}_{2}(\pi)-\eta)\big| H_{2}]\right)\right]\\
    +&\mathbb{E}\left[\left(\frac{\pi_1(A_{1}|H_{1})}{\widehat{b}_1(A_{1}|H_{1})}-\frac{\pi_1(A_{1}|H_{1})}{{b}_1(A_{1}|H_{1})}\right) \right.\\
    &\qquad \left.\cdot \left(\widehat{\mathbb{E}}[\rho_
    {\tau}(R^*_{1}(\pi)+{R}^{*}_{2}(\pi)-\eta)\big| H_{1}]-\widehat{\mathbb{E}}[\rho_{\tau}(\widehat{R}^{*}_{1}(\pi)+\widehat{R}^{*}_{2}(\pi)-\eta)\big| H_{1}]\right)\right].
    \end{aligned}
\end{equation}

Similarly, we can get the expression for $M(\eta_{\tau},\hat{\alpha})-M^*(\eta_{\tau})$ by substituting $\eta_{\tau}$ for $\eta$ in Formula (\ref{eq:70}).

Therefore, 

\begin{equation}\label{eq:71}
\small
    \begin{aligned}
    &\left|M(\eta,\hat{\alpha})-M^*(\eta)-\left(M(\eta_{\tau},\hat{\alpha})-M^*(\eta_{\tau})\right)\right|\\
    &\leq \left|\mathbb{E}\left[\left(\frac{\pi_1(A_{1}|H_{1})}{\widehat{b}_1(A_{1}|H_{1})}\right)\left(\frac{\pi_2(A_{2}|H_{2})}{\widehat{b}_2(A_{2}|H_{2})}-\frac{\pi_2(A_{2}|H_{2})}{{b}_2(A_{2}|H_{2})}\right) \cdot \widehat{\mathbb{E}}[\rho_{\tau}(R_{1}+{R}^{*}_{2}(\pi)-\eta)\right.\right.\\
    &\left.\left.\qquad-\rho_{\tau}(R_{1}+\widehat{R}^{*}_{2}(\pi)-\eta)-\rho_{\tau}(R_{1}+{R}^{*}_{2}(\pi)-\eta_{\tau})+\rho_{\tau}(R_{1}+\widehat{R}^{*}_{2}(\pi)-\eta_{\tau})\big| H_{2}]\right]\right|\\
    &+\left|\mathbb{E}\left[\left(\frac{\pi_1(A_{1}|H_{1})}{\widehat{b}_1(A_{1}|H_{1})}-\frac{\pi_1(A_{1}|H_{1})}{{b}_1(A_{1}|H_{1})}\right) \cdot \widehat{\mathbb{E}}[\rho_{\tau}(R^*_{1}(\pi)+{R}^{*}_{2}(\pi)-\eta)\right.\right.\\
    &\left.\left.\qquad-\rho_{\tau}(\widehat{R}^{*}_{1}(\pi)+\widehat{R}^{*}_{2}(\pi)-\eta)-\rho_
    {\tau}(R^*_{1}(\pi)+{R}^{*}_{2}(\pi)-\eta_{\tau})+\rho_{\tau}(\widehat{R}^{*}_{1}(\pi)+\widehat{R}^{*}_{2}(\pi)-\eta_{\tau})\big| H_{1}]\right]\right|.
    \end{aligned}
\end{equation}
Let's first focus on bounding the last term in formula (\ref{eq:71}).
\begin{equation}
\small
    \begin{aligned}
    &  \left|\widehat{\mathbb{E}}\left[\rho_{\tau}(R^*_{1}(\pi)+{R}^{*}_{2}(\pi)-\eta)-\rho_{\tau}(\widehat{R}^{*}_{1}(\pi)+\widehat{R}^{*}_{2}(\pi)-\eta)\right.\right.\\
    &\left.\left.\qquad-\rho_
    {\tau}(R^*_{1}(\pi)+{R}^{*}_{2}(\pi)-\eta_{\tau})+\rho_{\tau}(\widehat{R}^{*}_{1}(\pi)+\widehat{R}^{*}_{2}(\pi)-\eta_{\tau})\big| H_{1}\right]\right|\\
    &= \left|\int_{r\in \mathbb{R}} \left(\rho_\tau(r-\eta_{\tau})-\rho_\tau(r-\eta)\right)\cdot\left(\widehat{f}_{R^*(\pi)|H_1}(r|H_{1})-{f}_{R^*(\pi)|H_1}(r|H_{1})\right) dr\right|\\
    & \leq \int_{r\in \mathbb{R}} \left|\rho_\tau(r-\eta_{\tau})-\rho_\tau(r-\eta)\right|\cdot\left|\widehat{f}_{R^*(\pi)|H_1}(r|H_{1})-{f}_{R^*(\pi)|H_1}(r|H_{1})\right| dr.
    \end{aligned}
\end{equation}
By mean value theorem, for any fixed $r$, there exists a constant \\$\eta_r\in \left(\min\{\eta,\eta_{\tau}\},\max\{\eta,\eta_{\tau}\}\right)$, such that
\begin{equation}
    \begin{aligned}
      \rho_\tau(r-\eta_{\tau})-\rho_\tau(r-\eta)=\left(\tau-\mathbb{I}\{r< \eta_r\}\right)\cdot (\eta-\eta_{\tau}).
    \end{aligned}
\end{equation}

Therefore, for any $r$, it holds that
\begin{equation}
    \begin{aligned}
      \left|\rho_\tau(r-\eta_{\tau})-\rho_\tau(r-\eta)\right|=\left|\left(\tau-\mathbb{I}\{r< \eta_r\}\right)\right|\cdot \left\|\eta-\eta_{\tau}\right\|\leq \left\|\eta-\eta_{\tau}\right\|.
    \end{aligned}
\end{equation}
Under mild conditions and Assumption (A2),
\begin{equation}\label{eq:711}
\small
    \begin{aligned}
    &  \left|\widehat{\mathbb{E}}\left[\rho_{\tau}(R^*_{1}(\pi)+{R}^{*}_{2}(\pi)-\eta)-\rho_{\tau}(\widehat{R}^{*}_{1}(\pi)+\widehat{R}^{*}_{2}(\pi)-\eta)\right.\right.\\
    &\left.\left.\qquad-\rho_
    {\tau}(R^*_{1}(\pi)+{R}^{*}_{2}(\pi)-\eta_{\tau})+\rho_{\tau}(\widehat{R}^{*}_{1}(\pi)+\widehat{R}^{*}_{2}(\pi)-\eta_{\tau})\big| H_{1}\right]\right|\\
    & \leq \int_{r\in \mathbb{R}} \left|\rho_\tau(r-\eta_{\tau})-\rho_\tau(r-\eta)\right|\cdot\left|\widehat{f}_{R^*(\pi)|H_1}(r|H_{1})-{f}_{R^*(\pi)|H_1}(r|H_{1})\right| dr.\\
    &\leq \left\|\eta-\eta_{\tau}\right\|\cdot \int_{r\in \mathbb{R}} \left|\widehat{f}_{R^*(\pi)|H_1}(r|H_{1})-{f}_{R^*(\pi)|H_1}(r|H_{1})\right| dr\\
    &=\left\|\eta-\eta_{\tau}\right\|\cdot 2\delta\left(\widehat{F}_{{R}^*(\pi)|H_1}-F_{R^*(\pi)|H_1}\right).
    \end{aligned}
\end{equation}

Similarly, we can prove by Assumption (A2) that
\begin{equation}\label{eq:712}
\small
    \begin{aligned}
    &  \left|\widehat{\mathbb{E}}[\rho_{\tau}(R_{1}+{R}^{*}_{2}(\pi)-\eta)-\rho_{\tau}(R_{1}+\widehat{R}^{*}_{2}(\pi)-\eta)-\rho_{\tau}(R_{1}+{R}^{*}_{2}(\pi)-\eta_{\tau})+\rho_{\tau}(R_{1}+\widehat{R}^{*}_{2}(\pi)-\eta_{\tau})\big| H_{2}]\right|\\
    & \leq \int_{r\in \mathbb{R}} \left|\rho_\tau(R_{1}+r-\eta_{\tau})-\rho_\tau(R_{1}+r-\eta)\right|\cdot\left|\widehat{f}_{R_2^*(\pi)|H_2}(r|H_{2})-{f}_{R_2^*(\pi)|H_2}(r|H_{2})\right| dr\\
    &\leq \left\|\eta-\eta_{\tau}\right\|\cdot \int_{r\in \mathbb{R}} \left|\widehat{f}_{R_2^*(\pi)|H_2}(r|H_{2})-{f}_{R_2^*(\pi)|H_2}(r|H_{2})\right| dr\\
    &=\left\|\eta-\eta_{\tau}\right\|\cdot 2\delta\left(\widehat{F}_{{R}_2^*(\pi)|H_2}-F_{R_2^*(\pi)|H_2}\right).
    \end{aligned}
\end{equation}

By summarizing the result of Formula (\ref{eq:71}), (\ref{eq:711}) and (\ref{eq:712}), we have
\begin{equation}\label{eq:713}
\small
    \begin{aligned}
    &\left|M(\eta,\hat{\alpha})-M^*(\eta)-\left(M(\eta_{\tau},\hat{\alpha})-M^*(\eta_{\tau})\right)\right|\\
    &\leq \left\|\left(\frac{\pi_1(A_{1}|H_{1})}{\widehat{b}_1(A_{1}|H_{1})}\right)\left(\frac{\pi_2(A_{2}|H_{2})}{\widehat{b}_2(A_{2}|H_{2})}-\frac{\pi_2(A_{2}|H_{2})}{{b}_2(A_{2}|H_{2})}\right)\right\|_{P,2} \cdot 2\left\|\eta-\eta_{\tau}\right\|\cdot\left\|\delta\left(\widehat{F}_{{R}_2^{*}(\pi)|H_2},F_{{R}_2^{*}(\pi)|H_2}\right)\right\|_{P,2}\\
    &+\left\|\frac{\pi_1(A_{1}|H_{1})}{\widehat{b}_1(A_{1}|H_{1})}-\frac{\pi_1(A_{1}|H_{1})}{{b}_1(A_{1}|H_{1})}\right\|_{P,2}\cdot 2\left\|\eta-\eta_{\tau}\right\|\cdot \left\|\delta\left(\widehat{F}_{{R}^{*}(\pi)|H_1},F_{{R}^{*}(\pi)|H_1}\right)\right\|_{P,2}\\
    & \leq \frac{1}{\epsilon^3}\left\|\widehat{b}_2(A_{2}|H_{2})-{b}_2(A_{2}|H_{2})\right\|_{P,2}\cdot 2\left\|\eta-\eta_{\tau}\right\|\cdot o(n^{-\frac{1}{4}})\\
    &+\frac{1}{\epsilon^2}\left\|\widehat{b}_1(A_{1}|H_{1})-{b}_1(A_{1}|H_{1})\right\|_{P,2}\cdot 2\left\|\eta-\eta_{\tau}\right\|\cdot o(n^{-\frac{1}{4}})=o(n^{-\frac{1}{2}}\left\|\eta-\eta_{\tau}\right\|),
    \end{aligned}
\end{equation}
where the last inequality holds according to Assumption (A1) and (A2).

\noindent\textbf{Step 3.2}: Prove that $M^*(\eta)-M^*(\eta_{\tau})\geq C_4\cdot\left\|\eta-\eta_{\tau}\right\|^2$.

According to Taylor series expansion, we have
\begin{equation}
      M^*(\eta)=M^*(\eta_{\tau})+\mathbb{E}[\psi^*(W;\eta_{\tau})](\eta-\eta_{\tau})+\frac{1}{2}\partial_{\eta}\mathbb{E}[\psi^*(W;\eta_{\tau})](\eta-\eta_{\tau})^2+o(\left\|\eta-\eta_{\tau}\right\|^2),
\end{equation}
in which $\mathbb{E}[\psi^*(W;\eta_{\tau})]=0$, and 
\begin{equation}\label{eq:72}
    \begin{aligned}
      &\partial_{\eta}\mathbb{E}[\psi^*(W_i;\eta)]=\partial_{\eta}\mathbb{E}\left[\frac{\pi_1(A_{1}|H_{1})\pi_2(A_{2}|H_{2})}{\widehat{b}_1(A_{1}|H_{1})\widehat{b}_2(A_{2}|H_{2})}(\mathbb{I}\{R_{1}+{R}_{2}<\eta\}-\tau)\right]\\
      &+\partial_{\eta}\mathbb{E}\left[\mathbb{E}\left[a_1^*|H_1\right]\cdot\widehat{\mathbb{E}}[(\mathbb{I}\{{R}_{1}^*(\pi)+{R}_{2}^*(\pi)<\eta\}-\tau)|H_1]\right]\\
      &+ \partial_{\eta}\mathbb{E}\left[\mathbb{E}\left[a_2^*|H_2\right]\cdot\widehat{\mathbb{E}}[(\mathbb{I}\{{R}_{1}+{R}_{2}^*(\pi)<\eta\}-\tau)|H_{2}]\right]\\
      &= \mathbb{E}_{H_1\sim \mathbb{G}}\left[{f}_{{R}^{*}(\pi)|H_1}(\eta|H_1)\right]+0+0={f}_{{R}^{*}(\pi)}(\eta)\geq 0
    \end{aligned}
\end{equation}

where ${f}_{{R}^{*}(\pi)|H_1}(\eta|H_1)$ is the pdf of random variable $R_1^{*}(\pi_1)+R_2^{*}(\pi)$ given baseline information $H_1$, and ${f}_{{R}^{*}(\pi)}(\eta)$ is the marginal pdf of $R_1^{*}(\pi_1)+R_2^{*}(\pi)$ given baseline distribution $H_1\sim \mathbb{G}$.

Therefore, by Assumption (A4), 
\begin{equation}
    M^*(\eta)-M^*(\eta_{\tau})=\frac{1}{2}\cdot{f}_{{R}^{*}(\pi)}(\eta)(\eta-\eta_{\tau})^2+o(\left\|\eta-\eta_{\tau}\right\|^2)\geq C_4\cdot \left\|\eta-\eta_{\tau}\right\|^2,
\end{equation}
where $C_4$ can be defined as $C_1/2-\delta$ with a small enough $\delta$ such that $C_4>0$.

Therefore, the proof of Step 3 is complete.

\noindent\textbf{Step 4}: Prove that $\sup_{\|\eta-\eta_{\tau}\|\leq \delta}\left|\mathbb{E}[\psi(W;\eta,\hat{\alpha})-\psi^*(W;\eta)]\right|=o(n^{-1/2})$.

According to the derivation we did in formula (\ref{eq:lemma2.5}),
\begin{equation}
\small
    \begin{aligned}
      &\sup_{\|\eta-\eta_{\tau}\|\leq \delta}\cdot\left|\mathbb{E}[\psi(W;\eta,\hat{\alpha})-\psi^*(W;\eta)]\right|\\
      = &\sup_{\|\eta-\eta_{\tau}\|\leq \delta}\cdot\left|\mathbb{E}\left[\left(\frac{\pi_1(A_{1,i}|H_{1,i})}{\widehat{b}_1(A_{1,i}|H_{1,i})}\right)\left(\frac{\pi_2(A_{2,i}|H_{2,i})}{\widehat{b}_2(A_{2,i}|H_{2,i})}-\frac{\pi_2(A_{2,i}|H_{2,i})}{{b}_2(A_{2,i}|H_{2,i})}\right)\right.\right.\\
    &\left.\left.\qquad\qquad\qquad\cdot \left(\mathbb{E}[\mathbb{I}\{R_{1,i}+R^{*}_{2}(\pi)<\eta\}\big| H_{2,i}]-\widehat{\mathbb{E}}[\mathbb{I}\{R_{1,i}+\widehat{R}^{*}_{2}(\pi)<\eta\}\big| H_{2,i}]\right)\right]\right.\\
    &\qquad\qquad+\left.\mathbb{E}\left[\left(\frac{\pi_1(A_{1,i}|H_{1,i})}{\widehat{b}_1(A_{1,i}|H_{1,i})}-\frac{\pi_1(A_{1,i}|H_{1,i})}{{b}_1(A_{1,i}|H_{1,i})}\right)\right.\right.\\
    &\left.\left.\qquad\qquad\qquad\cdot \left( \widehat{\mathbb{E}}[\mathbb{I}\{R^*_{1}(\pi)+R^*_{2}(\pi)<\eta\}\big|H_{1,i}]-\widehat{\mathbb{E}}[\mathbb{I}\{\widehat{R}^{*}_{1}(\pi)+\widehat{R}^{*}_{2}(\pi)<\eta\}\big| H_{1,i}]\right)\right]\right|\\
    &\leq \frac{1}{\epsilon^3}\cdot \left\|\widehat{b}_2(A_{2}|H_{2})-{b}_2(A_{2}|H_{2})\right\|_{P,2}\cdot\left\|\delta\left(\widehat{F}_{{R}^{*}_2({\pi})|H_2},F_{{R}_2^{*}({\pi})|H_2}\right)\right\|_{P,2}\\
    & + \frac{1}{\epsilon^2}\cdot \left\|\widehat{b}_1(A_{1}|H_{1})-{b}_1(A_{1}|H_{1})\right\|_{P,2}\cdot\left\|\delta\left(\widehat{F}_{{R}^{*}({\pi})|H_1},F_{{R}^{*}({\pi})|H_1}\right)\right\|_{P,2}\\
    &=o(n^{-1/4})\cdot o(n^{-1/4})+o(n^{-1/4})\cdot o(n^{-1/4})=o(n^{-1/2}),
      \end{aligned}
\end{equation}
where the order in the second last equality comes from Assumption (A1) and (A2).

\textbf{Step 5}: Prove that $H_0$ is bounded.

In Formula (\ref{eq:72}), we obtained that $\partial_{\eta}\mathbb{E}[\psi^*(W_i;\eta)]=\mathbb{E}_{H_1\sim \mathbb{G}}\left[{f}_{{R}^{*}(\pi)}(\eta|H_1)\right]={f}_{{R}^{*}(\pi)}(\eta)$.
Therefore,
\begin{equation}
    \begin{aligned}
      H_0=\partial^2_{\eta}\mathbb{E}[\psi^*(W_i;\eta)]\Big|_{\eta=\eta_{\tau}}= \partial_{\eta}{f}_{{R}^{*}(\pi)}(\eta)\Big|_{\eta=\eta_{\tau}}.
    \end{aligned}
\end{equation}
This condition holds naturally according to Assumption (A6).

\vspace{0.15in}
By combining the results in Step 2-5, one can follow the structure in Step 1 and the whole proof of Theorem 1 is thus complete.
\end{proof}

\end{document}